\def \Nplus {\mathbb{N}_+}
\def \st {\ \ \textnormal{s.t.} \ \ }
\def \ST {\ \ \textnormal{subject to} \ \ }
\def \Re{\mathbb{R}}
\DeclareMathOperator*{\argmin}{arg\,min}
\DeclareMathOperator*{\argmax}{arg\,max}
\theoremstyle{plain}
\newtheorem{corollary}{Corollary}
\newtheorem{lemma}{Lemma}
\newtheorem{proposition}{Proposition}
\newtheorem{theorem}{Theorem}
\theoremstyle{remark}
\theoremstyle{definition}
\definecolor{orange-left}{rgb}{0.85, 0.325, 0.098}
\definecolor{blue-right}{rgb}{0, 0.447, 0.741}
\definecolor{alizarin}{rgb}{0.82, 0.1, 0.26}
\definecolor{A}{rgb}{0.6350, 0.0780, 0.1840}
\definecolor{B}{rgb}{0.5, 0, 0.5}
\definecolor{C}{rgb}{0.3010, 0.7450, 0.9330}
\definecolor{D}{rgb}{0.8500, 0.3250, 0.0980}
\definecolor{E}{rgb}{0.4660, 0.6740, 0.1880}
\definecolor{F}{rgb}{0.9290, 0.6940, 0.1250}
\definecolor{G}{rgb}{0, 0.4470, 0.7410}
\definecolor{R4}{rgb}{0.0, 0.4, 0.65}
\definecolor{R6}{rgb}{1.0, 0.22, 0.0}
\definecolor{R7}{rgb}{0.8, 0.0, 0.8}
\definecolor{R8}{rgb}{0.4, 0.22, 0.33}
\definecolor{R9}{rgb}{0.65, 0.04, 0.37}
\newcommand{\Ai}{{\color{A}{\textbf{A}}}}
\newcommand{\B}{{\color{B}{\textbf{B}}}}
\newcommand{\C}{{\color{C}{\textbf{C}}}}
\newcommand{\E}{{\color{E}{\textbf{E}}}}
\newcommand{\Dc}{{\color{D}{\textbf{D}}}}
\newcommand{\F}{{\color{F}{\textbf{F}}}}
\newcommand{\G}{{\color{G}{\textbf{G}}}}
\newcommand{\myparagraph}[1]{\smallskip\noindent\textbf{#1.}}
\def\I{\mathbf{I}}
\def\A{\mathbf{A}}
\def\d{\mathbf{d}}
\def\x{\mathbf{x}}
\def\Z{\mathbf{Z}}
\def\U{\mathbf{U}}
\def\V{\mathbf{V}}
\def\Re{\mathbb{R}}
\def\D{\mathbf{D}}
\def\X{\mathbf{X}}
\def\L{\mathbf{L}}
\def\Y{\mathbf{Y}}
\def\k{\mathbf{k}}
\def\SOUL@hlpreamble{%
   \setul{\dp\strutbox}{\dimexpr\ht\strutbox+\dp\strutbox\relax}%
   \let\SOUL@stcolor\SOUL@hlcolor
   \SOUL@stpreamble
}
\title{Wave-Informed Matrix Factorization with \\Global Optimality Guarantees}
\author{%
  Harsha Vardhan Tetali \\
  Department of Electrical and Computer Engineering\\
  University of Florida\\
  Gainesville, FL 32603 \\
  \texttt{vardhanh71@gmail.com} \\
  \And
  Joel B. Harley \\
  Department of Electrical and Computer Engineering\\
  University of Florida\\
  Gainesville, FL 32603 \\
  \texttt{joel.harley@ufl.edu} \\
  \And
  Benjamin D. Haeffele \\
  Mathematical Institute of Data Science\\
  Johns Hopkins University\\
  Baltimore, MD 21218\\
  \texttt{bhaeffele@jhu.edu}
  % examples of more authors
  % \And
  % Coauthor \\
  % Affiliation \\
  % Address \\
  % \texttt{email} \\
  % \AND
  % Coauthor \\
  % Affiliation \\
  % Address \\
  % \texttt{email} \\
  % \And
  % Coauthor \\
  % Affiliation \\
  % Address \\
  % \texttt{email} \\
  % \And
  % Coauthor \\
  % Affiliation \\
  % Address \\
  % \texttt{email} \\
}
\begin{document}

\maketitle

\begin{abstract}
 With the recent success of representation learning methods, which includes deep learning as a special case, there has been considerable interest in developing representation learning techniques that can incorporate known physical constraints into the learned representation. As one example, in many applications that involve a signal propagating through physical media (e.g., optics, acoustics, fluid dynamics, etc), it is known that the dynamics of the signal must satisfy constraints imposed by the wave equation. Here we propose a matrix factorization technique that decomposes such signals into a sum of components, where each component is regularized to ensure that it satisfies wave equation constraints. Although our proposed formulation is non-convex, we prove that our model can be efficiently solved to global optimality in polynomial time. We demonstrate the benefits of our work by applications in structural health monitoring, where prior work has attempted to solve this problem using sparse dictionary learning approaches that do not come with any theoretical guarantees regarding convergence to global optimality and employ heuristics to capture desired physical constraints. %, such as the separation of space and time as distinct matrix factors.
\end{abstract}

%\section{Introduction}

Representation learning has gained importance in fields that utilize data generated through physical processes, such as weather forecasting \cite{mehrkanoon2019deep}, manufacturing \cite{ringsquandl2017knowledge}, structural health monitoring \cite{zobeiry2020theory}, acoustics \cite{mohamed2012understanding}, and medical imaging \cite{litjens2017survey}. However, in general, the features learned through generic machine learning algorithms typically do not correspond to physically interpretable quantities. Yet, learning physically consistent and interpretable features can improve our understanding of 1) the physically viable information about the data and 2) the composition of the system or process generating it. As a result, in recent years modified learning paradigms, suited to different physical application domains, have begun to draw interest and discussion \cite{karpatne2017theory}. 
For example, several researchers have designed physics-informed neural networks to learn approximate solutions to a partial differential equation \cite{BenjaminErichson2019,Raissi2019,Nabian2018,Tartakovsky2018,Long2018}. These have been recently applied to ultrasonic surface waves to extract velocity parameters and denoise data \cite{Shukla2020}. Similar physics-guided neural networks \cite{karpatne2017theory,Karpatne2017a,Lei2018,Mestav2018} use physics-based regularizations to improve data processing, demonstrating that regression tasks (such as estimating the temperature throughout a lake) can be more accurate and robust when compared with purely physics-based solutions or purely data-driven solutions. These approaches demonstrate the strong potential for physics-informed machine learning but remain early in their study.

Likewise, several generative models of waves have recently been studied for applications in electroencephalography \cite{Luo2020} and seismology \cite{Mosser2020}, although these learning systems do not assume physical knowledge. Similarly, other prior work has also considered generative models of ultrasonic waves through the use of sparse signal processing \cite{harley2013sparse}, dictionary learning \cite{alguri2018baseline,alguri2017model,alguri2021sim}, and neural network sparse autoencoders \cite{Alguri2019}. These methods utilize the fact that many waves can be expressed as a small, sparse sum of spatial modes and attempt to extract these modes within the data \cite{alguri2018baseline}. These dictionary learning algorithms have further been combined with wave-informed regularizers to incorporate physical knowledge of wave propagation into the solution \cite{Tetali2019}.

However, despite the wide-ranging applications for representation learning models for physics-constrained problems, one of the key challenges associated with representation learning tasks is that they typically require one to solve a non-convex optimization problem. While this potentially presents considerable challenges due to the general difficulty of non-convex optimization, considerable attention has been devoted recently to the non-convex optimization problems that arise from representation learning problems, and despite non-convex optimization being hard in the general case, certain positive results have begun to emerge in certain settings.  For example, within the context of low-rank matrix recovery, it has been shown that, in certain circumstances, gradient descent is guaranteed to converge to a local minimum and all local minima will be globally optimal \cite{bhojanapalli2016global, ge2016matrix, park2016non, ge2017no}.
Further, other recent work has studied `structured' matrix factorization problems, where one promotes various properties in the matrix factors by imposing some form of regularization on the factorized matrices to promote the desired properties (e.g., an $\ell_1$ norm on a matrix factor when sparsity is desired in that factor) and shown that such problems can be solved to global optimality in certain circumstances, but whether such guarantees can be made depends critically on how the regularization on the factorized matrices is formulated in the model \cite{haeffele2014structured,haeffele2019structured,bach2013convex}.

\myparagraph{Paper Contributions and Related Work} In this work, we will specifically focus on data that is obtained from wave-like phenomena and note that many sensing and communications applications for representation learning center around the interpretation of waves and the wave equation to interrogate and characterize systems.  
For example, there are numerous applications for representation learning to aid in understanding and designing engineering systems with both electromagnetic waves (e.g., antenna design \cite{Testolina_2019} and communication systems \cite{7528397}) and mechanical waves (e.g., structural health monitoring \cite{alguri2018baseline} and micro-electromechanical device design \cite{7936037}).  Specifically, we derive a novel matrix factorization model which ensures that the recovered components satisfy wave equation constraints, allowing for direct interpretability of the recovered factors based on known physics and decomposing signals which consist of multiple oscillatory modes into the various components. Although our resulting formulation will also require solving a non-convex optimization problem, we show, using prior work on structured matrix factorization \cite{haeffele2014structured,haeffele2019structured, bach2013convex}, that our model can be solved to global optimality in polynomial time.  We demonstrate our approach's advantages for two example applications, electrical waves in a multi-segment transmission line and mechanical waves on a fixed string. The first application is inspired from \textbf{material characterization}, where different materials have different wavenumbers, as arises in the health monitoring of transmission lines and structures \cite{ernould2020characterization,Shukla2020, shukla2021physics}. The second application provides insights into how enforcing PDE constraints is beneficial in \textbf{modal analysis}, such as in the study of cantilever beams \cite{lai2020full}.

Prior work along these lines was given in \cite{Tetali2019}, but in comparison, our algorithm provides global optimality guarantees (along with a certificate of guaranteed optimality), which is difficult to achieve for such non-convex problems. In addition, the formulation in \cite{Tetali2019} is incomplete, as it does not account for separability in deriving the formulation, which we motivate from separation of variables method. This work also differs in having a low-rank assumption on the matrix factorization which enables us to extend our method to matrix completion strategies on incomplete data.  Further, \cite{Tetali2019} considers a dictionary learning approach of the data matrix after taking a Fourier transform to obtain a sparse frequency representation, whereas our approach can directly be applied in the time domain.

\section{A Model for Wave-Informed Matrix Factorization}
\label{others}

In this section, we develop a novel matrix factorization framework which will allow us to decompose a matrix $\Y$, with each column representing sampling along one dimension (e.g., space), as the product of two matrices $\D \X^\top$, where one of the matrices will be constrained to satisfy wave-equation constraints.  Additionally, our model will learn the number of modes (columns in $\D$ and $\X$) directly from the data without needing to specify this \textit{a priori}.  

\myparagraph{Problem Formulation}
To begin, recall that a key concept in the solution of partial differential equations is the notion of separation.  For example, 
given a PDE in space $\ell$ and time $t$, we assume the solution is of the form $f(\ell,t) = d(\ell) x(t)$ and try to solve it. In case of linear PDEs (like the wave equation), for some $N$, if we find that $d_i(\ell) x_i(t)$ are solutions for all $i \in [N]$, then we have that $\sum_{i=1}^{N} d_i(\ell) x_i(t)$ is also a solution. For a discrete approximation of the solution $\sum_{i=1}^{N} d_i(\ell) x_i(t)$, we first note that the product of continuous functions $d_i(\ell) x_i(t)$ on $\ell \in [0,L]$ and $t \in [0,T]$ can be approximated as an outer product of two vectors, leading directly to a matrix factorization model. We show this as follows: consider matrices $(\D,\X)$ whose $i^\text{th}$ columns are defined as $\mathbf{D}_i = \left[ d_i(0), d_i(\Delta \ell), \cdots, d_i(N_d \Delta \ell)  \right]^{\top}$ and $\mathbf{X}_i = \left[ x_i(0), x_i(\Delta t), \cdots, x_i(N_t \Delta t)  \right]^{\top}$ (where $N_d = \left \lfloor{L/\Delta \ell}\right \rfloor$ and $N_t = \left \lfloor{T/\Delta t}\right \rfloor$), giving $d_i(n \Delta \ell) x_i(m \Delta t) = \mathbf{D}_{n,i} \mathbf{X}_{m,i}$. 

Thus we have that a discrete approximation of $d_i(\ell) x_i(t)$ is $\mathbf{D}_i \mathbf{X}_i^{\top}$ 
and likewise, the discrete approximation of $\sum_{i=1}^{N} d_i(\ell) x_i(t)$ is $ \sum_{i=1}^{N} \mathbf{D}_i \mathbf{X}_i^{\top} = \mathbf{D} \mathbf{X}^{\top}$. 

From the above discussion, the fact that we are trying to approximate the wave-data matrix $\mathbf{Y}$ as a product $\mathbf{D} \mathbf{X}^{\top}$ clearly results in a matrix factorization model. However, beyond simple matrix factorization, we also wish to enforce physical consistency in our model. For this we employ theory-guided regularization (see \cite{karpatne2017theory}) and the regularizer is derived from the wave equation. Specifically, we need a cost function which 1) captures how well our matrix factorization model matches the observed samples as well as 2) a regularization term which enforces physical consistency in the matrix factors, which we formulated as a problem with the form:
\begin{equation}
    \underset{\mathbf{D}, \mathbf{X}}{\text{ } \min \text{ }} { \tfrac{1}{2}\| \mathbf{Y} - \mathbf{D}\mathbf{X}^{\top} \|^2_F + \lambda \Theta (\mathbf{D}, \mathbf{X})}
    \label{eqn:obj_final}
\end{equation}
where $\| \mathbf{Y} - \mathbf{D}\mathbf{X}^{\top} \|^2_F$ considers the matrix factorization loss, $\lambda > 0$ is a hyper-parameter to balance the trade-off between fitting the data and satisfying our model assumptions given by $ \Theta (\mathbf{D}, \mathbf{X})$, which is a regularizer that promotes the desired physical consistency. We call this \emph{wave-informed matrix factorization}. To derive the specific function $\Theta (\mathbf{D}, \mathbf{X})$ which promotes the desired physical consistency, we begin by designing a regularizer to promote wave-consistent signals in the columns of $\D$.\footnote{Note that our approach is also generalizable to having wave-consistent regularization in both factors.} 
Specifically, consider the 1-dimensional wave equation evaluated at $d_i(l)x_i(t)$:
\begin{eqnarray}
    \frac{\partial^2 \left[ d_i(l)x_i(t) \right] }{\partial l^2} &=& \frac{1}{c^2} \frac{\partial^2 \left[ d_i(l)x_i(t) \right] }{\partial t^2} 
    \label{eqn:wave_equation}
\end{eqnarray}
Note that the above equation constrains that each 
each $d_i(l)x_i(t)$ component ($i \in [N]$) is a wave. This can thus be seen as decomposing the given wave data into a superposition of simpler waves. The Fourier transform in time on both sides of \eqref{eqn:wave_equation} yields:
$\frac{\partial^2 \left[ d_i(l) \right] }{\partial l^2}X_i(\omega) = \frac{-\omega^2}{c^2} d_i(l) X_i(\omega)$. Thus, at points where $X_i(\omega) \neq 0$, we can enforce the equation,
\begin{eqnarray}
    \frac{\partial^2 \left[ d_i(l) \right] }{\partial l^2} &=& \frac{-\omega^2}{c^2} d_i(l) 
    \label{eqn:space_eigen_value}
\end{eqnarray}
to maintain consistency with the wave-equation. %to regularize on the spatial information.
Note that the above equation is analytically solvable for constant $\omega/c$ and the solution takes the form $d_i(l) = A \sin(\omega l/c) + B \cos(\omega l/c)$ and the constants $A$ and $B$ can be determined by initial conditions. Thus, we conclude that the choice of $\omega/c$ determines $d_i(l)$ and vice versa. %$d_i(l)$ determines $\omega/c$.
We next discretize equation \eqref{eqn:space_eigen_value}. From the literature on discretizing differential equations, the above can be discretized to:
\begin{eqnarray}
    \mathbf{L} \mathbf{D}_i &=& -k_i^2 \mathbf{D}_i 
    \label{eqn:discrete_wave_eqn}
\end{eqnarray}
where $k_i = \omega / c$ , also known as the wavenumber and $\L$ is a discrete second derivative operator, where the specific form of $\L$ depends on the boundary conditions (e.g., Dirichlet, Dirichlet-Neumann, etc) which can be readily found in the numerical methods literature (see for example \cite{strang2007computational, golub2013matrix}).

We also specify the dependence on $i$ owing to the fact that $\omega/c$ and $d_i(l)$ determine each other.  
Now for the factorization to be physically consistent in terms of spatial data, we desire that \eqref{eqn:discrete_wave_eqn} is satisfied for some (unknown) value of $k_i$, which we promote with the regularizer $\min_{k_i} \| \mathbf{L} \mathbf{D}_i + k_i^2 \mathbf{D}_i \|^2_F$. In addition to requiring that the columns of $\D$ satisfy the wave equation, we also would like to constrain the number of modes in our factorization (or equivalently find a factorization $\D \X^\top$ with rank equal to the number of modes).  To accomplish this, we add squared Frobenius norms to both $\D$ and $\X$, which is known to induce low-rank solutions in the product $\D \X^\top$ due to connections with the variational form of the nuclear norm \cite{haeffele2019structured,haeffele2014structured,srebro2005rank}.  Taken together, we arrive at a regularization function which decouples along the columns of $(\D,\X)$ given by:
\begin{equation}
\label{eq:theta}
\Theta(\D,\X) = \sum_{i=1}^N \theta(\D_i,\X_i), \text{ where }  
   \theta(\D_i,\X_i) = \tfrac{1}{2} \! \! \left( \|\X_i\|_F^2 \! + \! \|\D_i\|_F^2 \right) \!  +  \! \gamma \min_{k_i} \|\L \D_i \! + \! k_i^2 \D_i \|_F^2
\end{equation}
In the supplementary material we show that the above regularization on $\D$ is equivalent to encouraging the columns of $\D$ to lie in the passband of a bandpass filter (centered at wavenumber $k$) with the choice of the hyperparameter $\gamma$ being inversely proportional to the filter bandwidth.  With this regularization function, we then have our final model that we wish to solve:
\begin{equation}
\label{eq:main_obj}
\min_{N \in \Nplus} \min_{\substack{\D \in \Re^{L \times N}, \X \in \Re^{T \times N} \\ \k \in \Re^N}} \tfrac{1}{2}\|\Y-\D\X^\top\|_F^2 + \tfrac{\lambda}{2} \sum_{i=1}^N \left(\|\X_i\|_F^2 + \|\D_i\|_F^2 + \gamma \|\L \D_i + k_i^2 \D_i \|_F^2 \right).
\end{equation}

\section{Model Optimization with Global Optimality Guarantees}
\label{section:model}
We note that our model in \eqref{eq:main_obj} is inherently non-convex in $(\D,\X,\k)$ jointly due to the matrix factorization model as well as the fact that we are additionally searching over the number of columns/entries in $(\D,\X,\k)$, $N$.  However, despite the potential challenge of non-convex optimization, here we show that we can efficiently solve \eqref{eq:main_obj} to global optimality in polynomial by leveraging prior results regarding optimization problems for structured matrix factorization \cite{haeffele2019structured,bach2013convex}.  In particular, the authors of \cite{haeffele2019structured} consider a general matrix factorization problem of the form:
\begin{equation}
\label{eq:gen_obj}
\min_{N\in\Nplus}{\min_{\U \in \Re^{m \times N}, \V \in \Re^{n \times N}}} \ell(\U \V^\top) + \lambda \sum_{i=1}^N \bar \theta(\U_i,\V_i)
\end{equation}
where $\ell(\hat \Y)$ is any function which is convex and once differentiable in $\hat \Y$ and $\bar \theta(\U_i,\V_i)$ is any function which satisfies the following 3 conditions:
\begin{enumerate}
\item $\bar \theta(\alpha \U_i, \alpha \V_i) = \alpha^2 \theta(\U_i, \V_i), \ \forall (\U_i,\V_i)$, $\forall \alpha \geq 0$.
\item $\bar \theta(\U_i, \V_i) \geq 0, \ \forall (\U_i,\V_i)$.
\item For all sequences $(\U_i^{(n)},\V_i^{(n)})$ such that $\|\U_i^{(n)}(\V_i^{(n)})^\top\| \rightarrow \infty$ then $\bar \theta(\U_i^{(n)},\V_i^{(n)}) \rightarrow \infty$.
\end{enumerate}
Clearly, our choice of loss function $\ell$ (squared loss) satisfies the necessary conditions.  However, in \eqref{eq:main_obj} we wish to optimize over not just the matrix factors $(\D,\X)$ but also the additional $\k$ parameters, so it is not immediately apparent that the framework from \cite{haeffele2019structured} can be applied to our problem.  Here we first show that by our design of the regularization function $\theta$ our formulation will satisfy the needed conditions, allowing us to apply the results from \cite{haeffele2019structured} to our problem of interest \eqref{eq:main_obj}.
\begin{proposition}
\label{prop:equiv_prob}
The optimization problem in \eqref{eq:main_obj} is a special case of the problem considered in \cite{haeffele2019structured}.
\end{proposition}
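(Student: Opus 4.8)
The plan is to exhibit an explicit correspondence between problem \eqref{eq:main_obj} and the general template \eqref{eq:gen_obj}, and then verify that the resulting ingredients satisfy the three stated conditions. First I would make the obvious identifications $\U = \D$, $\V = \X$ (with $m = L$, $n = T$) and take the loss to be $\ell(\hat\Y) = \tfrac{1}{2}\|\Y - \hat\Y\|_F^2$, which is convex and once differentiable in $\hat\Y$, as already noted. The outer search over the number of columns $N \in \Nplus$ is also common to both formulations and requires no modification. The one genuine mismatch is the extra optimization variable $\k \in \Re^N$ appearing in \eqref{eq:main_obj}, which has no counterpart in the template \eqref{eq:gen_obj}; resolving this is the heart of the proof.

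The key idea is to absorb the minimization over $\k$ into the column-wise regularizer. Because the $\k$-dependent term in \eqref{eq:main_obj} decouples completely across columns, the $i$-th summand depending only on $k_i$, the outer minimization $\min_{\k \in \Re^N}$ can be performed independently for each column and moved inside the sum. I would therefore define
\[
\bar\theta(\D_i, \X_i) = \tfrac{1}{2}\Bigl(\|\X_i\|_F^2 + \|\D_i\|_F^2 + \gamma \min_{k_i}\|\L \D_i + k_i^2 \D_i\|_F^2\Bigr),
\]
which is now a function of $(\D_i,\X_i)$ alone, with the auxiliary variable hidden inside its evaluation. With this definition, the objective in \eqref{eq:main_obj} coincides with $\ell(\D\X^\top) + \lambda\sum_{i=1}^N \bar\theta(\D_i,\X_i)$, matching \eqref{eq:gen_obj} exactly. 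I would also note that the inner minimization is well defined: writing $s = k_i^2 \in [0,\infty)$, the map $s \mapsto \|\L\D_i + s\D_i\|_F^2$ is a convex quadratic with leading coefficient $\|\D_i\|_F^2 \geq 0$, so its minimum over the closed set $[0,\infty)$ is always attained.

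It then remains to check the three conditions on $\bar\theta$. Non-negativity (condition 2) is immediate since every term is a squared norm. For coercivity (condition 3) I would use $\bar\theta(\D_i,\X_i) \geq \tfrac{1}{2}(\|\D_i\|_F^2 + \|\X_i\|_F^2) \geq \|\D_i\|_F\|\X_i\|_F = \|\D_i\X_i^\top\|_F$, the middle step by AM--GM and the last equality because $\D_i\X_i^\top$ is an outer product; hence $\|\D_i\X_i^\top\|_F \to \infty$ forces $\bar\theta \to \infty$. The condition I expect to require the most care is the degree-two positive homogeneity (condition 1), precisely because $\bar\theta$ now contains the inner minimization. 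The first two terms are manifestly degree-two homogeneous; for the third, the point to make is that the minimizing $k_i$ is scale-invariant. Using linearity of $\L$, for every fixed $k_i$ and every $\alpha \geq 0$ we have $\|\L(\alpha\D_i) + k_i^2(\alpha\D_i)\|_F^2 = \alpha^2\|\L\D_i + k_i^2\D_i\|_F^2$, so scaling $\D_i$ by $\alpha$ rescales the objective uniformly without moving the argmin, giving $\min_{k_i}\|\L(\alpha\D_i) + k_i^2(\alpha\D_i)\|_F^2 = \alpha^2\min_{k_i}\|\L\D_i + k_i^2\D_i\|_F^2$. Consequently $\bar\theta(\alpha\D_i, \alpha\X_i) = \alpha^2\bar\theta(\D_i,\X_i)$, and all three conditions hold. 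The main obstacle throughout is exactly this folding of the auxiliary variable $\k$ into the regularizer while preserving the degree-two homogeneity the framework of \cite{haeffele2019structured} demands; once that is established, the identification with \eqref{eq:gen_obj} is routine.
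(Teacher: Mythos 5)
Your proposal is correct and follows essentially the same route as the paper's proof: fold the per-column minimization over $k_i$ into the regularizer $\bar\theta$, then verify degree-two positive homogeneity (via scale-invariance of the optimal $k_i$), non-negativity, and coercivity (via the AM--GM bound $\|\D_i\X_i^\top\|_F \leq \tfrac{1}{2}(\|\D_i\|_F^2+\|\X_i\|_F^2) \leq \bar\theta(\D_i,\X_i)$). Your added remark that the inner minimum is attained is a small bonus not spelled out in the paper, but the argument is otherwise the same.
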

\begin{proof}
All proofs can be found in the supplement.
\end{proof}
From this, we note that within the framework developed in \cite{haeffele2019structured} it is shown (Corollary 1) that a given point $(\tilde \U, \tilde \V)$ is a globally optimal solution of \eqref{eq:gen_obj} iff the following two conditions are satisfied:
\begin{equation}
1) \ \  \langle -\nabla \ell( \tilde \U \tilde \V^\top), \tilde \U \tilde \V^\top \rangle = \lambda \sum_{i=1}^N \bar \theta(\tilde \U_i, \tilde \V_i) \ \ \ \ \ \ 2) \ \ \Omega_{\bar \theta}^\circ (-\tfrac{1}{\lambda} \nabla \ell(\tilde \U \tilde \V^\top)) \leq 1
\end{equation}
where $\nabla(\ell(\hat \Y))$ denotes the gradient w.r.t. the matrix product $\hat \Y = \U \V^\top$ and $\Omega_{\bar \theta}^\circ(\cdot)$ is referred to as the `polar problem' which is defined as 
\begin{equation}
\label{eq:polar_def}
\Omega_{\bar \theta}^\circ (\Z) \equiv \sup_{\mathbf{u},\mathbf{v}} \mathbf{u}^\top \Z \mathbf{v} \st \bar \theta(\mathbf{u},\mathbf{v}) \leq 1.
\end{equation}
It is further shown in \cite{haeffele2019structured} (Proposition 3)  that the first condition above will always be satisfied for any first-order stationary point $(\tilde \U, \tilde \V)$, and that if a given point is not globally optimal then the objective function \eqref{eq:gen_obj} can always be decreased by augmenting the current factorization by a solution to the polar problem as a new column:
\begin{equation}
(\U, \V) \! \! \leftarrow \! \! \left( \left[ \tilde \U, \ \tau \mathbf{u}^* \right], \left[ \tilde \V, \ \tau \mathbf{v}^* \right] \right)  \! : \!  \mathbf{u}^*, \mathbf{v}^* \in \argmax_{\mathbf{u},\mathbf{v}} \mathbf{u}^\top \! (-\tfrac{1}{\lambda} \nabla \ell(\tilde \U \tilde \V^\top) \mathbf{v} \st \bar \theta(\mathbf{u},\mathbf{v}) \leq 1 
\end{equation}
for an appropriate choice of step size $\tau > 0$.
If one can efficiently solve the polar problem for a given regularization function $\bar \theta$, then this provides a means to efficiently solve problems with form \eqref{eq:main_obj} to global optimality with guarantees of global optimality.  Unfortunately, the main challenge in applying this result from a computational standpoint is that solving the polar problem requires one to solve another challenging non-convex problem in \eqref{eq:polar_def}, which is often NP-Hard for even relatively simple regularization functions \cite{bach2013convex}. Here, however, we provide a key positive result, proving that for our designed regularization function \eqref{eq:theta}, the polar problem can be solved efficiently, in turn enabling efficient and guaranteed optimization of the non-convex model \eqref{eq:main_obj}.
\begin{theorem}
\label{thm:polar}
For the objective in \eqref{eq:main_obj}, the associated polar problem is equivalent to:
\begin{align}
\Omega_\theta^\circ (\Z) &= \max_{\d \in \Re^{L}, \x \in \Re^{T}, k\in \Re} \d^\top \Z \x
\mathrm{s.t.} \ &\|\d\|^2_F + \gamma \|\L\d + k^2 \d\|_F^2 \leq 1, 
 &\|\x\|^2_F \leq 1, \ 0 \leq k \leq 2.
\end{align}
Further, let $\L = \Gamma \Lambda \Gamma^\top$ be an eigen-decomposition of $\L$ and define the matrix $\A( \bar k) = \Gamma(\I + \gamma(\bar k \I + \Lambda)^2) \Gamma^\top$.  Then, if we define $\bar k^*$ as%
\begin{equation}
\bar k^* = \argmax_{\bar k \in [0,4]} \| \A(\bar k)^{-1/2} \Z \|_2
\label{eq:k_max}
\end{equation}
optimal values of $\d,\x, k$ are given as $\d^* = \A(\bar k^*)^{-1/2} \bar \d$, $\x^* = \bar \x$, and $k^* = (\bar k^*)^{1/2}$ where $(\bar \d, \bar \x)$ are the left and right singular vectors, respectively, associated with the largest singular value of $\A(\bar k^*)^{-1/2} \Z$.  Additionally, the above line search over $\bar k$ is Lipschitz continuous with a Lipschitz constant, $L_{\bar k}$, which is bounded by:
\begin{equation}
L_{\bar k} \leq \begin{Bmatrix} \frac{2}{3 \sqrt{3}} \sqrt{\gamma} \|\Z\|_2 & \gamma \geq \frac{1}{32} \\ 
4 \gamma (1 + 16 \gamma)^{-\tfrac{3}{2}} \|\Z\|_2 & \gamma < \frac{1}{32} \end{Bmatrix} \leq \tfrac{2}{3 \sqrt{3}} \sqrt{\gamma} \|\Z\|_2
\end{equation}
\end{theorem}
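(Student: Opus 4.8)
The plan is to establish the three assertions in turn: the reduction of the polar problem to the stated constrained bilinear program, its further reduction to a one-dimensional line search plus an SVD, and the Lipschitz bound on that line search.

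First I would simplify the polar problem itself. Starting from $\Omega_\theta^\circ(\Z) = \sup\{\d^\top\Z\x : \bar\theta(\d,\x)\leq 1\}$ with $\bar\theta(\d,\x) = \tfrac12(\|\x\|_F^2 + \|\d\|_F^2 + \gamma\min_k\|\L\d + k^2\d\|_F^2)$, I would first pull the inner minimization over $k$ outside the feasibility set, using that $\min_k f(k)\leq c$ iff $f(k)\leq c$ for some $k$; this promotes $k$ to a jointly optimized variable and turns the constraint into $\|\d\|^2 + \gamma\|\L\d+k^2\d\|^2 + \|\x\|^2 \leq 2$. This is a sum of a $\d$-part $a(\d,k)$ and an $\x$-part $b(\x)$, each degree-two homogeneous. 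Because $\d^\top\Z\x$ is bilinear, the rescaling $\d\mapsto\alpha\d,\ \x\mapsto\beta\x$ leaves the ratio $\d^\top\Z\x/(\sqrt{a(\d,k)}\sqrt{b(\x)})$ invariant, and optimizing $\alpha,\beta>0$ under the budget shows the optimum splits the budget evenly; this decouples the single joint constraint into the two separate unit constraints $\|\d\|^2 + \gamma\|\L\d+k^2\d\|^2\leq 1$ and $\|\x\|^2\leq 1$ of the statement. Finally I would restrict $k$ to $[0,2]$: writing $\tilde\d = \Gamma^\top\d$, the only $k$-dependence is through $\|(\L+k^2\I)\d\|^2 = \sum_j(\lambda_j+k^2)^2|\tilde\d_j|^2$, which for fixed $\d$ is convex in $\bar k = k^2$ and minimized at a value in $[0,4]$ because the eigenvalues $\lambda_j$ of the (normalized) second-difference operator $\L$ lie in $[-4,0]$; replacing any feasible $k$ by this minimizer preserves feasibility and the objective, so the supremum is unchanged.

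Second, for the closed form I would fix $\bar k = k^2$ and note the $\d$-constraint is the quadratic form $\d^\top\A(\bar k)\d\leq 1$ with $\A(\bar k) = \I + \gamma(\L+\bar k\I)^2 = \Gamma(\I+\gamma(\Lambda+\bar k\I)^2)\Gamma^\top \succ 0$, exactly as defined. Substituting $\tilde\d = \A(\bar k)^{1/2}\d$ turns the constraint into $\|\tilde\d\|\leq 1$ and the objective into $\tilde\d^\top(\A(\bar k)^{-1/2}\Z)\x$; maximizing a bilinear form over a product of unit balls yields the largest singular value $\|\A(\bar k)^{-1/2}\Z\|_2$, attained at the top left/right singular vectors $\bar\d,\bar\x$. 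Undoing the substitution gives $\d^* = \A(\bar k)^{-1/2}\bar\d$, $\x^* = \bar\x$, and the outer problem collapses to the scalar search $\bar k^* = \argmax_{\bar k\in[0,4]}\|\A(\bar k)^{-1/2}\Z\|_2$ with $k^* = (\bar k^*)^{1/2}$. Compactness of the feasible set (note $\A(\bar k)\succeq\I$ bounds $\d$, and $k$ ranges over a compact interval) guarantees the maxima are attained, justifying the use of $\max$.

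Third, and this is the step I expect to be the most involved, I would bound the Lipschitz constant of $\phi(\bar k) = \|\A(\bar k)^{-1/2}\Z\|_2$. Since the spectral norm is itself $1$-Lipschitz, $|\phi(\bar k_1)-\phi(\bar k_2)|\leq\|\A(\bar k_1)^{-1/2}-\A(\bar k_2)^{-1/2}\|_2\,\|\Z\|_2$. The crucial structural fact is that all matrices $\A(\bar k)^{-1/2}$ are simultaneously diagonalized by $\Gamma$, so this operator-norm difference equals $\max_j|h(\lambda_j+\bar k_1)-h(\lambda_j+\bar k_2)|$ with the scalar profile $h(s) = (1+\gamma s^2)^{-1/2}$, reducing the matrix Lipschitz constant to $\sup_s|h'(s)|$ over $s=\lambda_j+\bar k\in[-4,4]$ (combining $\lambda_j\in[-4,0]$ with $\bar k\in[0,4]$). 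A direct computation gives $h'(s) = -\gamma s(1+\gamma s^2)^{-3/2}$, and since $|h'|$ is even I would work on $[0,4]$; its unconstrained maximum is $\tfrac{2}{3\sqrt3}\sqrt\gamma$, attained at $s = (2\gamma)^{-1/2}$. The case split then follows from whether this critical point lies in $[0,4]$: for $\gamma\geq\tfrac{1}{32}$ the interior maximum is attained, giving the first branch, whereas for $\gamma<\tfrac{1}{32}$ the critical point exceeds $4$, $|h'|$ is increasing on $[0,4]$, and the maximum is the endpoint value $4\gamma(1+16\gamma)^{-3/2}$, the second branch; the uniform bound $\tfrac{2}{3\sqrt3}\sqrt\gamma\|\Z\|_2$ holds in both cases because the restricted maximum never exceeds the global one. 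The main obstacles are justifying cleanly that the joint budget decouples and that the $k$-range truncates to $[0,2]$ via the spectrum of $\L$, together with the calculus locating the critical point of $|h'|$ and the threshold analysis at $\gamma=\tfrac{1}{32}$; the SVD reduction in between is routine once the $\d$-constraint is whitened by $\A(\bar k)^{-1/2}$.
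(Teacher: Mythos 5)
Your proposal is correct and follows essentially the same route as the paper's proof: decoupling the joint budget constraint via bilinearity and homogeneity, whitening the $\d$-constraint with $\A(\bar k)^{1/2}$ to reduce to a spectral-norm line search over $\bar k \in [0,4]$ (justified by the eigenvalues of $\L$ lying in $[-4,0]$), and bounding the Lipschitz constant by maximizing $|h'(s)| = \gamma s (1+\gamma s^2)^{-3/2}$ with the case split at $\gamma = \tfrac{1}{32}$ determined by whether the critical point $s = (2\gamma)^{-1/2}$ lies in $[0,4]$. The only difference is organizational: the paper routes the last step through a separate lemma on the Lipschitz constant of a pointwise maximum of scalar functions, whereas you exploit the simultaneous diagonalization of the $\A(\bar k)^{-1/2}$ directly, which is if anything slightly cleaner and yields the identical bound.
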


We also note that the above result implies that we can solve the polar by first performing a (one-dimensional) line search over $k$, and due to the fact that the largest singular value of a matrix is a Lipschitz continuous function, this line search can be solved efficiently by a variety of global optimization algorithms.  For example, we give the following corollary for the simple algorithm given in \cite{malherbe2017global}, and similar results are easily obtained for other algorithms.

\begin{corollary}[Adapted from Cor 13 in \cite{malherbe2017global}]
\label{cor:line_search}
For the function $f(\bar k) = \|\A(\bar k)^{-1/2} \Z \|_2$ as defined in Theorem \ref{thm:polar}, if we let $\bar k_1, \ldots \bar k_r$ denote the iterates of the LIPO algorithm in \cite{malherbe2017global} then we have $\forall \delta \in (0,1)$ with probability at least $1-\delta$,
\begin{equation}
\max_{\bar k \in [0,4]} f(\bar k) - \max_{i=1\ldots r} f(\bar k_i) \leq \tfrac{8}{3 \sqrt{3}} \sqrt{\gamma} \|\Z\|_2 \frac{\ln(1/\delta)}{r}
\end{equation}
\end{corollary}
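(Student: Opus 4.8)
The plan is to treat this corollary as a direct instantiation of the generic convergence guarantee for the LIPO algorithm (Corollary 13 of \cite{malherbe2017global}) specialized to our one-dimensional objective $f$, using the Lipschitz bound already established in Theorem \ref{thm:polar}. The only real work is to verify that $f$ meets the hypotheses required by that generic result and then to substitute the correct problem-specific constants, so the argument is short. First I would confirm that $f(\bar k) = \|\A(\bar k)^{-1/2}\Z\|_2$ is a valid input to LIPO: it is a real-valued, continuous function on the compact one-dimensional box $[0,4]$, and by Theorem \ref{thm:polar} it is Lipschitz on this interval with $L_{\bar k} \leq \tfrac{2}{3\sqrt 3}\sqrt\gamma\|\Z\|_2$. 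Since LIPO's guarantee holds for any known upper bound on the Lipschitz constant, I would supply this bound as the constant used by the algorithm; using an upper bound rather than the exact Lipschitz constant only weakens the guarantee by the same factor and is therefore harmless.

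Next I would invoke the generic LIPO bound with ambient dimension $d = 1$ and domain $[0,4]$. In its $d$-dimensional form the cited result bounds the optimality gap after $r$ evaluations by (Lipschitz constant) $\times$ (domain diameter) $\times (\ln(1/\delta)/r)^{1/d}$, holding with probability at least $1-\delta$. Specializing to $d=1$ collapses the exponent $1/d$ to $1$, leaving the linear dependence $\ln(1/\delta)/r$ that appears in the statement, while the diameter of $[0,4]$ contributes a factor of $4$. Substituting $\mathrm{diam}([0,4]) = 4$ together with $L_{\bar k} \leq \tfrac{2}{3\sqrt 3}\sqrt\gamma\|\Z\|_2$ then gives a leading constant $4 \cdot \tfrac{2}{3\sqrt 3}\sqrt\gamma\|\Z\|_2 = \tfrac{8}{3\sqrt 3}\sqrt\gamma\|\Z\|_2$, which is exactly the claimed coefficient, so no residual calculation remains.

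The main (and rather minor) obstacle is bookkeeping against the conventions of \cite{malherbe2017global}: I must check that the dimensional exponent in their Corollary 13 is genuinely $1/d$, so that $d=1$ yields the clean $1/r$ rate rather than, say, a $1/\sqrt{r}$ rate, and that the geometric factor entering their bound is the diameter (length $4$ here) rather than a radius or half-length, since such a mismatch would shift the leading constant by a factor of two. I would therefore explicitly reconcile their notation for the domain's geometric quantity and the Lipschitz constant with ours before asserting the final constant. Once these conventions are aligned, the corollary follows immediately by plugging the Theorem \ref{thm:polar} bound into their result.
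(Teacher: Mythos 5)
Your proposal is correct and matches what the paper does: the corollary is obtained by directly instantiating Corollary 13 of Malherbe--Vayatis with dimension $d=1$, $\mathrm{diam}([0,4])=4$, and the Lipschitz bound $L_{\bar k} \leq \tfrac{2}{3\sqrt{3}}\sqrt{\gamma}\|\Z\|_2$ from Theorem \ref{thm:polar}, yielding the constant $4\cdot\tfrac{2}{3\sqrt{3}}\sqrt{\gamma}\|\Z\|_2 = \tfrac{8}{3\sqrt{3}}\sqrt{\gamma}\|\Z\|_2$. The paper gives no separate proof (it treats the statement as immediate from the citation), and your bookkeeping on the diameter-versus-radius convention and on using an upper bound for the Lipschitz constant is exactly the right diligence.
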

As a result, we have that the error of the linesearch converges with rate $\mathcal{O}(1/r)$, where $r$ is the number of function evaluations of $f(\bar k)$, to a global optimum, and then, given the optimal value of $k$, the optimal $(\d, \x)$ vectors can be computed in closed-form via singular value decomposition.  Taken together this allows us to employ the Meta-Algorithm defined in Algorithm \ref{alg:meta} to solve problem \eqref{eq:main_obj}.
\begin{corollary}
\label{cor:poly_time}
Algorithm \ref{alg:meta} produces an optimal solution to \eqref{eq:main_obj} in polynomial time.
\end{corollary}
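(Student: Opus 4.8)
The plan is to assemble the corollary from three ingredients already in place: the global-optimality certificate inherited from the structured matrix factorization framework, the efficient polar solver of Theorem \ref{thm:polar}, and the line-search guarantee of Corollary \ref{cor:line_search}. By Proposition \ref{prop:equiv_prob}, problem \eqref{eq:main_obj} is a special case of \eqref{eq:gen_obj}, so the meta-algorithm's termination criterion --- stop once the polar value $\Omega_\theta^\circ(-\tfrac{1}{\lambda}\nabla\ell(\tilde\D\tilde\X^\top)) \leq 1$ --- certifies a global optimum via Corollary 1 of \cite{haeffele2019structured}, while Proposition 3 of \cite{haeffele2019structured} guarantees that the first optimality condition holds automatically at the stationary points produced by the inner descent. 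Thus correctness is inherited from the cited framework, and the real content is the runtime: I would reduce the claim to (i) each outer iteration of Algorithm \ref{alg:meta} costs polynomial time, and (ii) the number of outer iterations is polynomial.

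For (i), I would precompute the eigendecomposition $\L = \Gamma\Lambda\Gamma^\top$ once in $\mathcal{O}(L^3)$. At each iteration the gradient $\nabla\ell = \tilde\D\tilde\X^\top - \Y$ is formed by a matrix product, and solving the polar reduces, by Theorem \ref{thm:polar}, to the one-dimensional search $\bar k^* = \argmax_{\bar k\in[0,4]} \|\A(\bar k)^{-1/2}\Z\|_2$ followed by a single SVD. Each evaluation of $f(\bar k)=\|\A(\bar k)^{-1/2}\Z\|_2$ is polynomial because $\A(\bar k)^{-1/2} = \Gamma(\I+\gamma(\bar k\I+\Lambda)^2)^{-1/2}\Gamma^\top$ can be applied using the stored eigenbasis, and the leading singular value is computed by SVD. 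By Corollary \ref{cor:line_search}, the Lipschitz bound $L_{\bar k}\le \tfrac{2}{3\sqrt3}\sqrt\gamma\|\Z\|_2$ makes $r=\mathcal{O}(\sqrt\gamma\|\Z\|_2 \ln(1/\delta)/\epsilon_{\mathrm{ls}})$ evaluations suffice to solve the line search to accuracy $\epsilon_{\mathrm{ls}}$ with probability $1-\delta$; hence the polar is solved to the stated accuracy in polynomial time.

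For (ii), I would invoke the convergence rate of the meta-algorithm from \cite{haeffele2019structured}: whenever the current point is not optimal, augmenting the factorization by the scaled polar direction with the prescribed step size strictly decreases the objective, and the suboptimality gap contracts at a Frank--Wolfe-type $\mathcal{O}(1/t)$ rate. Since the objective in \eqref{eq:main_obj} is bounded below by $0$, this yields $\mathcal{O}(1/\epsilon)$ augmentation steps to reach an $\epsilon$-suboptimal point whose polar certificate is within tolerance of $1$. I would further argue, using the local-descent and column-merging step of Algorithm \ref{alg:meta}, that the working number of columns $N$ stays bounded throughout, so the gradient and descent costs per iteration remain polynomial in $L$, $T$, and $N$.

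The main obstacle is reconciling the approximate, probabilistic line search with a deterministic claim of global optimality: the certificate in condition 2 presumes the exact polar value, whereas Corollary \ref{cor:line_search} only returns it to accuracy $\epsilon_{\mathrm{ls}}$ with high probability. I would handle this by propagating the line-search error into both the optimality test and the guaranteed per-step decrease, showing that choosing $\epsilon_{\mathrm{ls}}$ and the termination tolerance as suitable polynomials in the target accuracy $\epsilon$ leaves the total accumulated error controlled, so that Algorithm \ref{alg:meta} returns an $\epsilon$-globally-optimal solution in time polynomial in $L$, $T$, $1/\epsilon$, and $\ln(1/\delta)$. A secondary technical point to verify is the uniform bound on $N$ across iterations, since this is what keeps every per-iteration cost polynomial.
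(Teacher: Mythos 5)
Your overall structure matches the paper's proof: per-iteration cost is polynomial, the number of outer (conditional-gradient) iterations is polynomial, and correctness of the termination certificate is inherited from the framework of \cite{haeffele2019structured} via Proposition \ref{prop:equiv_prob}. The paper likewise argues that the polar step is a generalized conditional gradient step in the sense of \cite{bach2013convex} and therefore converges in a polynomial number of augmentations when the polar is solved exactly, and that the polar itself is tractable by Theorem \ref{thm:polar}.

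There is one ingredient you omit that the paper needs and supplies explicitly: Step \ref{step:grad_desc} of Algorithm \ref{alg:meta} is not a single gradient evaluation but an inner loop that must \emph{reach a first-order stationary point} before the polar certificate is even checked, so the per-outer-iteration cost includes the number of inner descent steps, not just their individual cost. The paper closes this by citing the block coordinate descent analysis of \cite{xu2013block}, using that the objective \eqref{eq:main_obj} is convex in each of $\D$, $\X$, $\k$ with the others held fixed, so the Gauss--Seidel updates \eqref{eq:grad_D}--\eqref{eq:grad_k} reach a stationary point in polynomial time. Your proposal only bounds ``the gradient and descent costs per iteration,'' which leaves the inner iteration count unaddressed. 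Two smaller remarks: Algorithm \ref{alg:meta} has no column-merging step --- $N$ grows by exactly one per outer iteration and is therefore bounded by the (polynomial) number of augmentations, which is the argument you want for keeping $N$ controlled; and your concern about reconciling the approximate, probabilistic LIPO line search with the exact polar value demanded by the optimality certificate is well founded --- the paper simply asserts the polar is solved ``exactly,'' whereas Corollary \ref{cor:line_search} only delivers an $\mathcal{O}(1/r)$-accurate value with high probability, so your plan to propagate the line-search tolerance into both the certificate test and the per-step decrease is the more careful (and arguably necessary) treatment.
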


\begin{algorithm}
\caption{\bf{Meta-algorithm}}
\label{alg:meta}
\begin{algorithmic}[1]
\State Input $\mathbf{D}_{init}$, $\mathbf{X}_{init}$, $\mathbf{k}_{init}$ 
\State Initialize $ \left( \mathbf{D}, \mathbf{X}, \mathbf{k} \right) \leftarrow \left( \mathbf{D}_{init}, \mathbf{X}_{init}, \mathbf{k}_{init} \right)$
\While {global convergence criteria is not met}
\State Perform gradient descent on the low-rank wave-informed objective function \eqref{eq:main_obj} with $N$ fixed to reach a first order stationary point $(\Tilde{\mathbf{D}}, \Tilde{\mathbf{X}}, \Tilde{\mathbf{k}})$\label{step:grad_desc}
\State Calculate the value of $\Omega^\circ_\theta(\tfrac{1}{\lambda}(\Y-\tilde \D \tilde \X^\top))$ via Theorem \ref{thm:polar} above and obtain $\mathbf{d}^*, \mathbf{x}^*, k^*$
\If {value of polar $\Omega^\circ_\theta(\tfrac{1}{\lambda}(\Y-\tilde \D \tilde \X^\top)) = 1$} \State {Algorithm converged to global minimum}
\Else 
\State {Append  $(\mathbf{d}^*, \mathbf{x}^*, k^*)$ to $(\Tilde{\mathbf{D}}, \Tilde{\mathbf{X}}, \Tilde{\mathbf{k}})$ and update $\left( \mathbf{D}, \mathbf{X}, \mathbf{k} \right)$
\State $ \left( \mathbf{D}, \mathbf{X}, \mathbf{k} \right) \! \leftarrow \! \left( \left[ \tilde{\mathbf{D}}, \ \tau \mathbf{d}^*  \right], \left[ \tilde{\mathbf{X}}, \ \tau \mathbf{x}^*  \right] , \left[ \tilde{\mathbf{k}}^\top, \ k^*  \right]^\top
\right)$, $\tau\! > \! 0$ is step size (see supplement).\label{step:step_size}}
\EndIf
\State Continue loop.
\EndWhile
\end{algorithmic}
\label{algoblock:meta-algo}
\end{algorithm}

\vspace{-3mm}

\section{Results \& Discussions}
\label{sec:results}
In this section, we evaluate our proposed wave-informed matrix factorization on two synthetic datasets. For each dataset, algorithm iterations are performed until the value of polar is evaluated to be less than a threshold close to 1 to demonstrate convergence to the global minimum (i.e., a polar value of 1).  We note that the distance to the global optimum (in objective value) is directly proportional to the value of the polar value at any point minus 1 (\cite{haeffele2019structured}, Prop. 4), so choosing a stopping criteria as polar value $\leq 1 \! + \! \epsilon$ also guarantees optimality within $\mathcal{O}(\epsilon)$.

\begin{figure}
    \centering
    \includegraphics[scale=0.25]{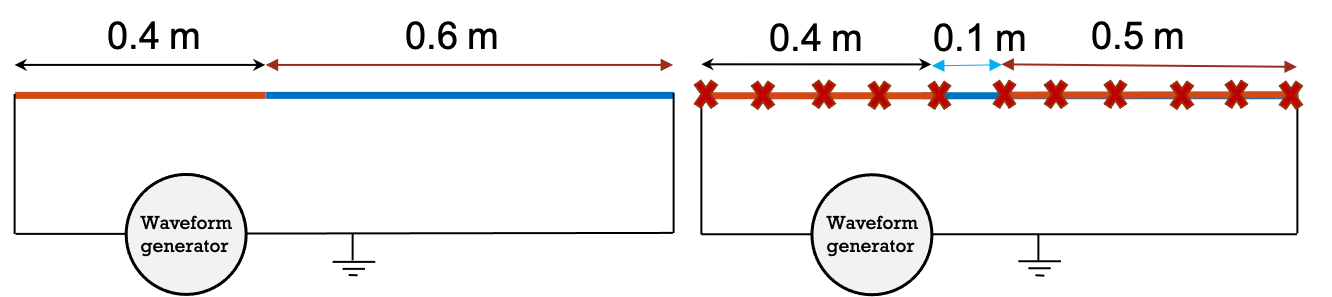}
    \caption{(left) Two cables of different impedances joined end-to-end; (right) cables of two different materials with a slightly alternate configuration, and measurements sampled at \color{alizarin} \textbf{X} \color{black}.}
    \label{fig:merge_1}
    \vspace{-5mm}
\end{figure}

\myparagraph{Characterizing Multi-Segment Transmission Lines} First, we consider the \textbf{material characterization} problem described in the introduction.
Specifically, we consider waves propagating along an electrical transmission line where the impedance of the transmission line changes along the line (as would occur, for example, if the transmission line was damaged or degraded in a region), as depicted in Figure \ref{fig:merge_1}(left). 
We simulate wave propagation in this line based on a standard RLGC (resistance, inductance, conductance, capacitance) transmission line model \cite{Pozar2012}, using an algebraic graph theory engine for one-dimensional waves \cite{Harley2019}. For this simulation, we combine two transmission line segments of length $0.4$~m (\color{orange-left}left\color{black}) and $0.6$~m (\color{blue-right}right\color{black}). 

The wavenumber (which is inversely proportional to velocity) of the right segment is $4$~times higher than the wavenumber of the left segment. A modulated Gaussian signal with a center frequency of $10$~MHz and a $3$~dB bandwidth of $10$~MHz is transmitted from the left end of this setup. This impulse travels $0.4$~m till it reaches the interface (i.e., the connection between the two cables). At the interface, part of the propagating signal reflects and another part transmits into the next cable. The signal again reflects at the end of each cable.  

Note the following: 1) there are now two distinct wavenumber regions, 2) the excitation/forcing function is transient and is therefore a linear combination of nearby frequencies that each have a unique wavenumber, and 3) the end boundaries have loss (i.e., energy exits the system). We show electrical voltage amplitude at a timestamp, see Figure \ref{fig:Merge_2}(top-left). We observe waves travelling in two regions with different wavenumbers. Note that performing a simple Fourier transform (in space) of the signal will produce unsatisfactory results due to the discrete change in the wavenumber along the line.  However, as we show below, our method automatically recovers the two distinct regions as separated modes in our decomposition. Namely, we solve our wave-informed matrix factorization with $\gamma=50$ and $\lambda=0.6$ on the above described wave data. 
Inspection of the columns of $\D$, see Figure \ref{fig:Merge_2}(top-right), clearly shows that single columns of $\D$ contain energy largely contained to only one region of the transmission line, automatically providing a natural decomposition of the signal which corresponds to changes in the physical transmission line. 
\begin{figure}
    \centering
    \includegraphics[scale=0.093]{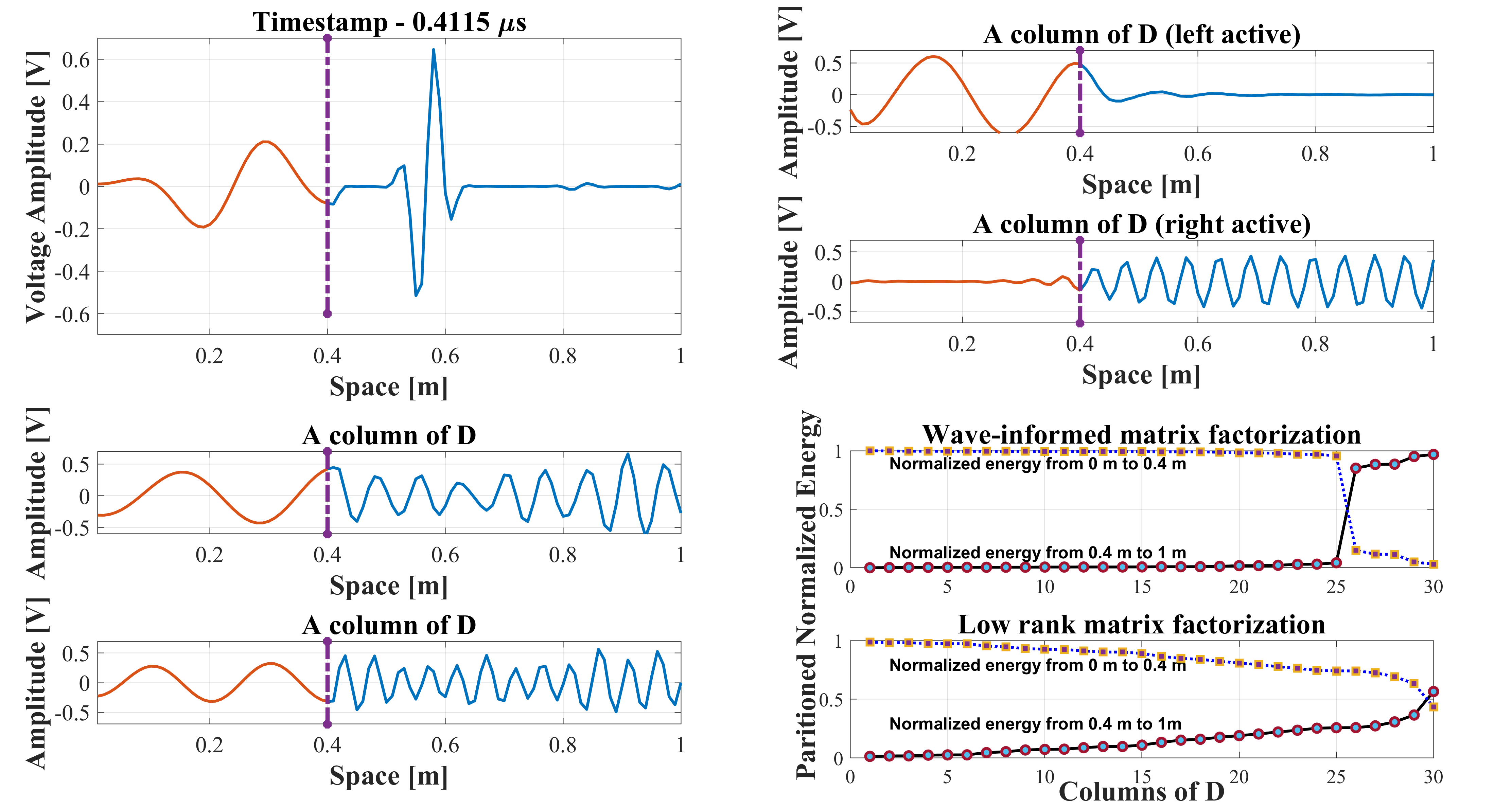}
    \caption{(top-left) Electrical amplitude at a timestamp of $0.4115 \mu$s; (top-right) illustration of two columns of $\D$ from Wave-informed matrix factorization; (bottom-left) illustration of two columns of $\D$ from low-rank matrix factorization;(bottom-right) partitioned normalized energy of the first 30 significant columns of Wave-informed matrix factorization and low-rank matrix factorization.}
    \label{fig:Merge_2}
    \vspace{-5mm}
\end{figure}

As a first baseline comparison, we also perform a low-rank factorization (by setting $\gamma=0$ and $\lambda=0.6$), which is equivalent to only using nuclear norm regularization on the matrix product $(\D \X^\top)$. %to compare with wave-informed matrix factorization.
With just low-rank regularization, one observes that while there is a clear demarcation at $0.4$~m, see Figure \ref{fig:Merge_2}(bottom-left), there is still significant energy in both transmission line segments for each column of $\D$. 
To quantitatively evaluate the quality of the decomposition, we normalize the 30 most significant (determined by the corresponding value of $\|\D_i \X_i^{\top} \|_F^2$) columns of $\D$ and plot the energy on each partition, see Figure \ref{fig:Merge_2}(bottom-right). In case of the low-rank factorization, see Figure \ref{fig:Merge_2}(bottom-right), %the energy is only dependent on the local wavenumber, 
we largely observe higher energy levels on regions of higher wavenumber (energy is proportional to the wavenumber for oscillatory quantities) and lower energy levels on regions of lower wavenumber. 
In the case of our wave-informed matrix factorization model, we see two clear step functions, one indicating the energy on the first segment and the other indicating energy on the second segment, showing a sharp decomposition of the signal into components corresponding to the two distinct regions. We can further encapsulate this into a single quantity representing the decomposition of the signal into components corresponding to the two distinct regions. For this we compute the mean entropy (mean over the columns of $\D$) of the percentage of signal power in one of the two regions (note the entropy is invariant to the choice of region). In the ideal case, the entropy will be 0, indicating that the learned components have all of the power being exclusively in one of the two line segments, while an entropy of 1 indicates the worst case where the power in the signal of a component column is split equally between the two segments.
 To quantify the performance of our method we also compare against other similar matrix factorization or modal extraction techniques: wave-informed KSVD \cite{Tetali2019} (\B), low-rank matrix factorization (\C), independent component analysis \cite{hyvarinen2000independent,van2006pp} (\Dc), dynamic mode decomposition \cite{PhysRevFluids.5.054401,SUSUKI2018327} (\E), ensemble empirical mode decomposition \cite{wu2009ensemble, fang2011stress, 6707404} (\F) and principal component analysis \cite{medina1993three} (\G). Dynamic mode decomposition (\E), independent component analysis (\Dc), and principal component analysis (\G) are subspace identification techniques that resemble matrix factorization in many respects. Ensemble empirical mode decomposition is a refined version of empirical mode decomposition, extended to 2D data), which learns a basis from data, similar to our algorithm.
 %In regard to the mean entropy defined above, 
 Note that our proposed wave-informed matrix factorization (\Ai) achieves the best performance of the seven methods, providing quantitative evidence of our model's ability to isolate distinct materials (and corresponding wavefields). 

\begin{table}[ht]
\vspace{-5mm}
\caption{\label{table:entropies} Comparison of entropy based performance for segmentation of a transmission line using algorithms (\Ai), (\B), (\C), (\Dc), (\E), (\F) and (\G), where the mean entropy row represents the mean entropy of partitioned normalized energies for each algorithm.}
\centering
\begin{tabularx}{\textwidth}{@{}lXXXXXXX@{}}
\toprule
\textbf{Algorithm}    & \Ai    & \B     & \C     & \Dc    & \E    & \F     & \G     \\ \midrule
\textbf{Mean Entropy} & 0.176 & 0.211 & 0.531 & 0.464 & 0.431 & 0.636 & 0.531 \\ \bottomrule
\end{tabularx}
\vspace{-5mm}
\end{table}

\myparagraph{Characterizing Multi-segment Transmission Lines with Sparsely Sampled Data}
In this subsection, we demonstrate the effectiveness of our model when the data is sparsely sampled. Specifically, we reduce the spatial density at which points were sampled (keeping the number of time points the same) by sampling 10 points uniformly on the cable of length $1$m, and we also change the configuration of the transmission lines by assigning the \color{orange-left} first $0.4$ m \color{black} and the \color{orange-left} last $0.5$ m \color{black} with one material and the \color{blue-right} middle $0.1$ m \color{black} with another material (see Fig.~\ref{fig:merge_1}(right)). We then solve a matrix completion problem with wave-informed matrix factorization to interpolate the wavefield at the remaining 90 points on the $1$m region.  Since the sampling is uniformly done over space, the data matrix contains all zero columns which implies it \textit{cannot} be completed with standard low-rank matrix completion.  However, we show that wave-informed matrix factorization, on the other hand, fills in those regions appropriately due to the wave-constraint being enforced.

Specifically, we minimize the following objective\footnote{The mathematical details of the optimization procedure are almost the same as the algorithm mentioned in Algorithm \ref{alg:meta} except that the masking operator $\mathcal{A}(\cdot)$ also needs to be included (see supplement).}:
\begin{equation}
\label{eq:main_obj_missing}
\!\!\min_{N \in \Nplus} \!\!\min_{\substack{\D \in \Re^{L \times N}, \\ \X \in \Re^{T \times N}, \k \in \Re^N}} \!\!\!\!\tfrac{1}{2}\|  \mathcal{A} \left( \Y-\D\X^\top \right) \|_F^2  + \tfrac{\lambda}{2} \sum_{i=1}^N \left(\|\X_i\|_F^2 + \|\D_i\|_F^2 + \gamma \|\L \D_i + k_i^2 \D_i \|_F^2 \right) 
\end{equation}
which is a matrix completion problem with a linear masking operator $\mathcal{A}(\cdot)$.  We observe, despite the very sparse sampling, the columns of $\D$ still maintain sufficient structure to identify different material regions. 
In Figure~\ref{fig:Merge_3}(left), we plot all of the recovered columns of $\D$, where we observe that, except for one column of $\D$, every other column contains very little energy in the region between $0.45$m and $0.55$m. Thus the algorithm again automatically detects a change in this region in the decomposition. Note that the actual change was introduced between $0.4$m and $0.5$m. An error of $0.05$m in estimating the region of material change is analogous to Gibbs phenomenon in signals and system theory -- this especially occurs due to the fact that we impose second derivative constraints on the columns of $\D$, which imposes a smoothness constraint and shifts the transition. Figure \ref{fig:Merge_3}(right) is similar to Figure \ref{fig:Merge_2}(bottom-left) as it quantifies that only one column of $\D$ is active in the middle region, and the other columns of $\D$ are active in the other distinct regions, whereas a low-rank factorization model displays significantly more mixing of signal energies in each region. 

\begin{figure}
    \centering
    \includegraphics[scale=0.09]{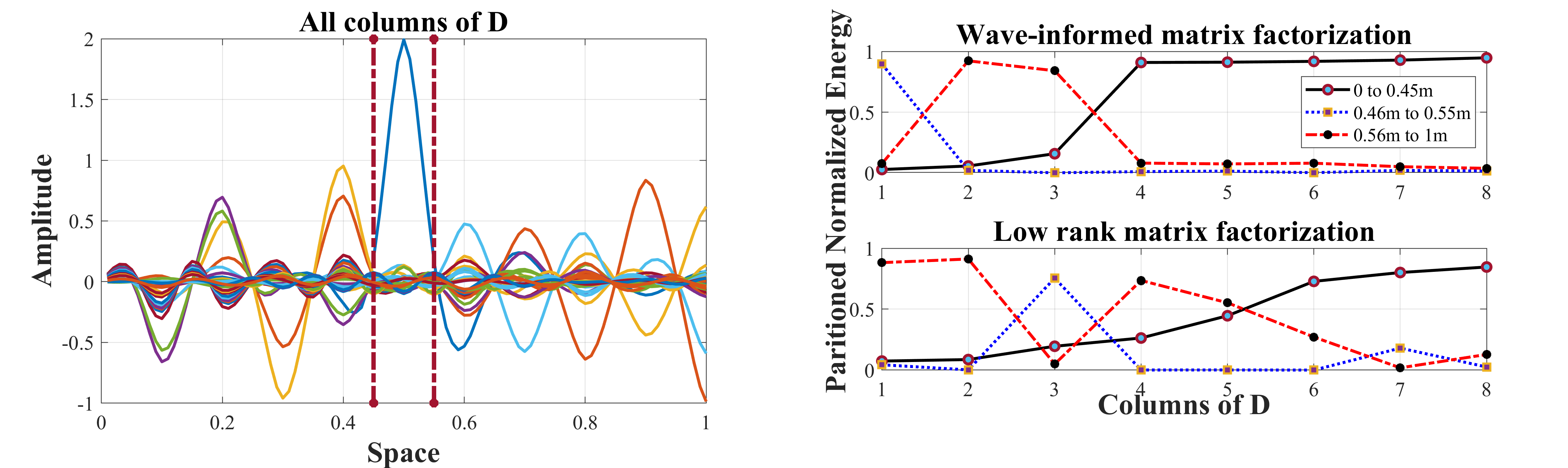}
    \caption{(left) All the columns of $\D$ recovered from wave-informed matrix factorization in a single graph, note that the measurements were sampled only at $ \{ 0,0.1, \cdots, 0.9 \}$; (right) normalized energies of various regions, for the first $8$ significant columns of $\D$ for both algorithms.}
    \label{fig:Merge_3}
    \vspace{-5mm}
\end{figure}

\myparagraph{A Fixed Vibrating String}
A fixed vibrating string is an example of a wave with fixed boundary conditions. 
As an example, consider the dynamics of a noisy fixed string, given by:  %following:
\begin{equation}
y(\ell,t) = \sum_{n=1}^{R} a_n \sin (n k \ell) \sin(n \omega t) + \eta(\ell,t) \; .
\label{eqn:wave_equ}
\end{equation}
This can be represented in matrix notation as $\Y = \D \X^{\top} + \mathbf{N}$, where $\mathbf{N}$ is an additive noise term.  To demonstrate our framework, we will consider two more challenging variants of \eqref{eqn:wave_equ} where we add a damping factor (in either time or space) to the amplitude of the wave:
\begin{eqnarray}
    y_t(\ell,t) = \sum_{n=1}^{R} a_n e^{-\alpha_n t} \sin (n k \ell) \sin(n \omega t) + \eta(\ell,t) 
    \label{eqn:damped_wave_time} \\
     y_s(\ell,t) = \sum_{n=1}^{R} a_n e^{-\beta_n \ell} \sin (n k \ell) \sin(n \omega t) + \eta(\ell,t)
    \label{eqn:damped_wave_space}
\end{eqnarray}

Though this does not exactly satisfy the wave equation, waves in practice often have a damped sinusoidal nature.  We first consider the model in \eqref{eqn:damped_wave_time} (damping in time) where we run our algorithm with $\Y \in \mathbb{R}^{1000 \times 4000}$, $R=10$, $k = 2 \pi$, $\omega = 12 \pi$, $\alpha_n = n$, $a_n$ roughly of the order of $10^{1}$, and additive white Gaussian noise $\eta(\ell,t)$ of variance $\approx 10$ and $0$ mean (for the noisy case). We set $\gamma = 10^9$ and $\lambda = 200$ (see supplement for the behaviour of the algorithm with respect to changes in $\gamma$ and $\lambda$).  From wave-informed matrix factorization, when there is damping in time, we still obtain the columns of $\D$ (vibrations in space, which is undamped) as clear underlying sinusoids (Figure \ref{fig:Merge_4}).  We emphasize here that the modes of vibration are exactly recovered from data even in the presence of large amounts of noise (Figure \ref{fig:Merge_4} (left)), whereas low-rank matrix factorization does not obtain pure sinusoids even in the noiseless case (Figure \ref{fig:Merge_4} (right) -- the dotted blue curves change in amplitude over space).  To quantify this performance, in the appendix we provide the error between modes recovered by our algorithm and the ground-truth modes. Recall our algorithm also provides guarantees of polynomial-time global optimality (unlike the work of \cite{Tetali2019}), without using a library matrix as mentioned in \cite{lai2020full}. 
 Next, we run wave-informed matrix factorization on the model in \eqref{eqn:damped_wave_space} (damping in space) 
 with the same parameters as above and $\beta_n = n/2$ (also including the additive noise term) and visually compare our algorithms to other modal analysis methods (\Ai, \B, \C, \Dc, \E, \F, \G) (Figure~\ref{fig:damped_sines}). Here we show recovering damped sinusoids is possible under heavy noise (see two left columns of $\mathbf{D}$ demonstrated in Figure~\ref{fig:damped_sines}), where we have chosen the most significant columns of each method. A close look at the recovered modes indicates much cleaner recovery of damped (in space) sinusoids for our method (\Ai) compared to others.  For example, (\B) is the closest in performance to our method, but distortions can be observed in the tails of the components.

\begin{figure}[ht]
\vspace{-4mm}
    \centering
    \includegraphics[trim=10 10 10 150, clip, scale=0.08]{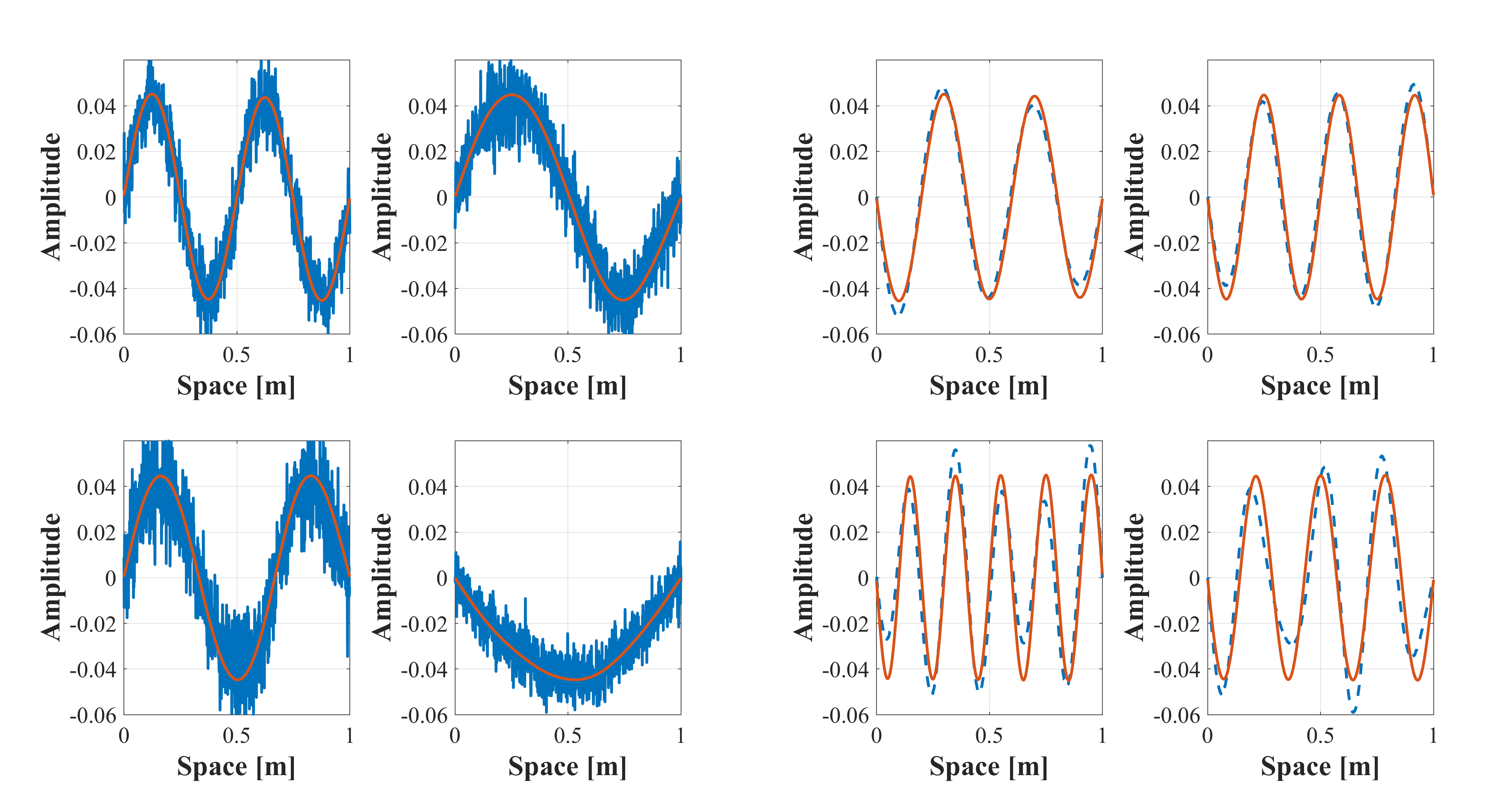}
    \caption{4 columns of $\D$ from low-rank matrix factorization (blue) and wave-informed matrix factorization (red) with the temporally damped vibrating string model.  Showing with noisy data (left) and the noiseless case (right). \vspace{-5mm}}
    \label{fig:Merge_4}
\end{figure}
\begin{figure}[ht]
    \centering
    \vspace{-4mm}
    \includegraphics[trim=10 10 10 15, clip, width=\linewidth]{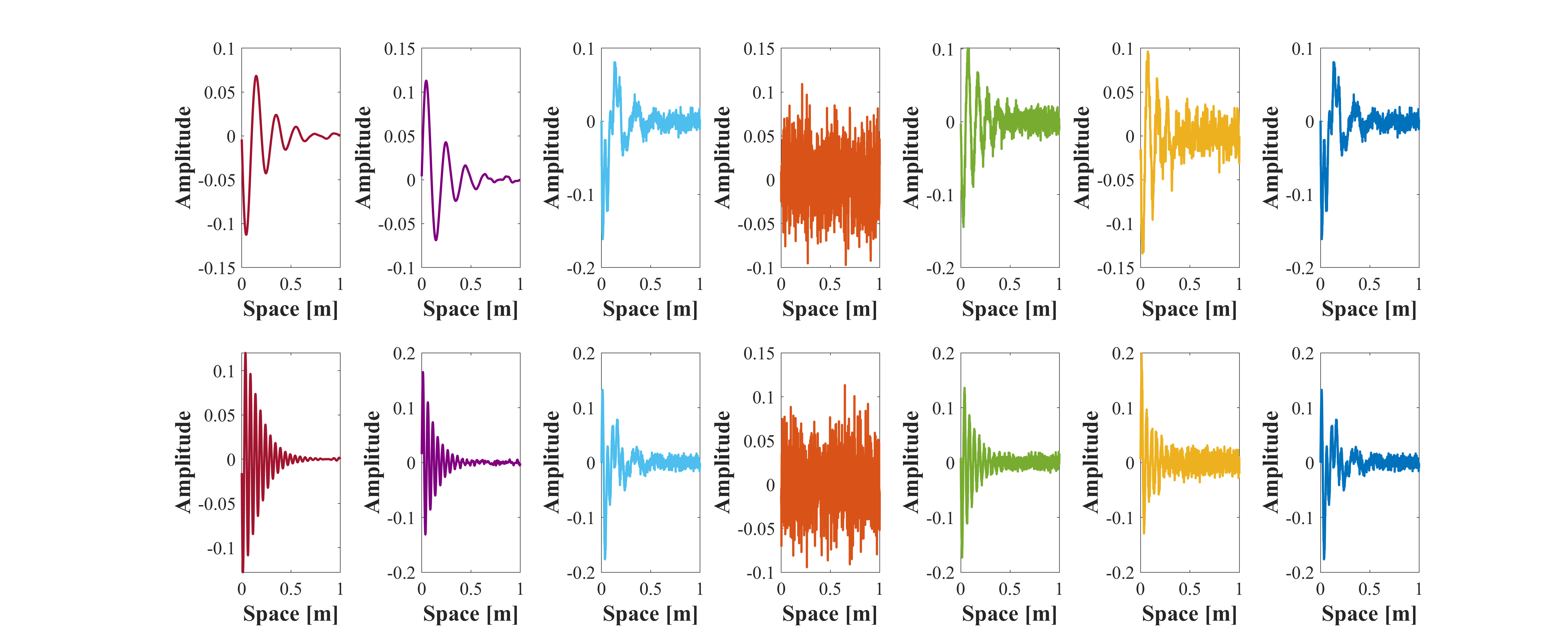}
    \caption{Two recovered modes (rows) of spatially damped sinusoids (\Ai), (\B), (\C), (\Dc), (\E), (\F), (\G).}    \label{fig:damped_sines}
    \vspace{-5mm}
\end{figure}

\myparagraph{Conclusions}
We have developed a framework for a wave-informed matrix factorization algorithm with provable, polynomial-time global optimally guarantees. Output from the algorithm was compared with that of low-rank matrix factorization and state-of-the-art algorithms for modal and component analysis. We demonstrated that the wave-informed approach learns representations that are more physically relevant and practical for the purpose of material characterization and modal analysis.  Future work will include 1) generalizing this approach to a variety of linear PDEs beyond the wave equation as well as wave propagation along more than one dimension, 2) applications in baseline-free anomaly detection for structural health monitoring \cite{alguri2018baseline, alguri2021sim}. 

\textbf{Acknowledgments} This work was partially supported by NIH NIA 1R01AG067396, ARO MURI W911NF-17-1-0304, NSF-Simons MoDL 2031985, and the National Science Foundation under award number ECCS-1839704.

\bibliographystyle{IEEEtran}
\bibliography{main}

% Generated by IEEEtran.bst, version: 1.14 (2015/08/26)
\begin{thebibliography}{10}
\providecommand{\url}[1]{#1}
\csname url@samestyle\endcsname
\providecommand{\newblock}{\relax}
\providecommand{\bibinfo}[2]{#2}
\providecommand{\BIBentrySTDinterwordspacing}{\spaceskip=0pt\relax}
\providecommand{\BIBentryALTinterwordstretchfactor}{4}
\providecommand{\BIBentryALTinterwordspacing}{\spaceskip=\fontdimen2\font plus
\BIBentryALTinterwordstretchfactor\fontdimen3\font minus
  \fontdimen4\font\relax}
\providecommand{\BIBforeignlanguage}[2]{{%
\expandafter\ifx\csname l@#1\endcsname\relax
\typeout{** WARNING: IEEEtran.bst: No hyphenation pattern has been}%
\typeout{** loaded for the language `#1'. Using the pattern for}%
\typeout{** the default language instead.}%
\else
\language=\csname l@#1\endcsname
\fi
#2}}
\providecommand{\BIBdecl}{\relax}
\BIBdecl

\bibitem{mehrkanoon2019deep}
S.~Mehrkanoon, ``Deep shared representation learning for weather elements
  forecasting,'' \emph{Knowledge-Based Systems}, vol. 179, pp. 120--128, 2019.

\bibitem{ringsquandl2017knowledge}
M.~Ringsquandl, S.~Lamparter, R.~Lepratti, and P.~Kr{\"o}ger, ``Knowledge
  fusion of manufacturing operations data using representation learning,'' in
  \emph{Proc. of the IFIP International Conference on Advances in Production
  Management Systems}.\hskip 1em plus 0.5em minus 0.4em\relax Springer, 2017,
  pp. 302--310.

\bibitem{zobeiry2020theory}
N.~Zobeiry, J.~Reiner, and R.~Vaziri, ``Theory-guided machine learning for
  damage characterization of composites,'' \emph{Composite Structures}, vol.
  246, p. 112407, 2020.

\bibitem{mohamed2012understanding}
A.~Mohamed, G.~Hinton, and G.~Penn, ``Understanding how deep belief networks
  perform acoustic modelling,'' in \emph{Proc. of the IEEE International
  Conference on Acoustics, Speech and Signal Processing (ICASSP)}.\hskip 1em
  plus 0.5em minus 0.4em\relax IEEE, 2012, pp. 4273--4276.

\bibitem{litjens2017survey}
G.~Litjens, T.~Kooi, B.~E. Bejnordi, A.~A.~A. Setio, F.~Ciompi, M.~Ghafoorian,
  J.~A. Van Der~Laak, B.~Van~Ginneken, and C.~I. S{\'a}nchez, ``A survey on
  deep learning in medical image analysis,'' \emph{Medical Image Analysis},
  vol.~42, pp. 60--88, 2017.

\bibitem{karpatne2017theory}
A.~Karpatne, G.~Atluri, J.~H. Faghmous, M.~Steinbach, A.~Banerjee, A.~Ganguly,
  S.~Shekhar, N.~Samatova, and V.~Kumar, ``Theory-guided data science: A new
  paradigm for scientific discovery from data,'' \emph{IEEE Transactions on
  Knowledge and Data Engineering}, vol.~29, no.~10, pp. 2318--2331, 2017.

\bibitem{BenjaminErichson2019}
N.~{Benjamin Erichson}, M.~Muehlebach, and M.~W. Mahoney, ``Physics-informed
  autoencoders for lyapunov-stable fluid flow prediction,'' \emph{arXiv
  [physics.comp-ph]}, p.~14, 2019.

\bibitem{Raissi2019}
M.~Raissi, P.~Perdikaris, and G.~E. Karniadakis, ``{Physics-informed neural
  networks: A deep learning framework for solving forward and inverse problems
  involving nonlinear partial differential equations},'' \emph{J. Comput.
  Phys.}, vol. 378, pp. 686--707, 2019.

\bibitem{Nabian2018}
M.~A. Nabian and H.~Meidani, ``Physics-driven regularization of deep neural
  networks for enhanced engineering design and analysis,'' \emph{Journal of
  Computing and Information Science in Engineering}, vol.~20, no.~1, 2019.

\bibitem{Tartakovsky2018}
A.~M. Tartakovsky, C.~O. Marrero, P.~Perdikaris, G.~D. Tartakovsky, and
  D.~Barajas-Solano, ``Learning parameters and constitutive relationships with
  physics informed deep neural networks,'' \emph{arXiv [math.AP]}, p.~22, 2018.

\bibitem{Long2018}
Y.~Long, X.~She, and S.~Mukhopadhyay, ``Hybridnet: Integrating model-based and
  data-driven learning to predict evolution of dynamical systems,'' in
  \emph{Proc. of the Conference on Robot Learning}, 2018, p.~10.

\bibitem{Shukla2020}
K.~Shukla, P.~C. {Di Leoni}, J.~Blackshire, D.~Sparkman, and G.~E. Karniadakis,
  ``{Physics-informed neural network for ultrasound nondestructive
  quantification of surface breaking cracks},'' \emph{arXiv [cs.LG]}, p.~19,
  2020.

\bibitem{Karpatne2017a}
A.~Karpatne, W.~Watkins, J.~Read, and V.~Kumar, ``Physics-guided neural
  networks {(PGNN)}: An application in lake temperature modeling,'' \emph{arXiv
  [cs.LG]}, vol. 1710.11431, p.~11, 2017.

\bibitem{Lei2018}
D.~Lei, X.~Chen, and J.~Zhao, ``{Opening the black box of deep learning},''
  \emph{arXiv [cs.LG]}, vol. 1805.08355, p.~27, 2018.

\bibitem{Mestav2018}
K.~R. Mestav, J.~Luengo-Rozas, and L.~Tong, ``State estimation for unobservable
  distribution systems via deep neural networks,'' in \emph{Proc. of the IEEE
  Power {\&} Energy Society General Meeting}.\hskip 1em plus 0.5em minus
  0.4em\relax Portland: IEEE, 2018, pp. 1--5.

\bibitem{Luo2020}
T.-j. Luo, Y.~Fan, L.~Chen, G.~Guo, and C.~Zhou, ``{EEG} signal reconstruction
  using a generative adversarial network with {Wasserstein} distance and
  temporal-spatial-frequency loss,'' \emph{Frontiers in Neuroinformatics},
  vol.~14, p.~15, 2020.

\bibitem{Mosser2020}
L.~Mosser, O.~Dubrule, and M.~J. Blunt, ``Stochastic seismic waveform inversion
  using generative adversarial networks as a geological prior,''
  \emph{Mathematical Geosciences}, vol.~52, pp. 53--79, 2020.

\bibitem{harley2013sparse}
J.~B. Harley and J.~M. Moura, ``Sparse recovery of the multimodal and
  dispersive characteristics of {Lamb} waves,'' \emph{The Journal of the
  Acoustical Society of America}, vol. 133, no.~5, pp. 2732--2745, 2013.

\bibitem{alguri2018baseline}
K.~S. Alguri, J.~Melville, and J.~B. Harley, ``Baseline-free guided wave damage
  detection with surrogate data and dictionary learning,'' \emph{The Journal of
  the Acoustical Society of America}, vol. 143, no.~6, pp. 3807--3818, 2018.

\bibitem{alguri2017model}
K.~S. Alguri, C.~C. Chia, and J.~B. Harley, ``Model-driven, wavefield baseline
  subtraction for damage visualization using dictionary learning,'' vol.~2,
  2017, pp. 2276–--2283.

\bibitem{alguri2021sim}
------, ``Sim-to-real: Employing ultrasonic guided wave digital surrogates and
  transfer learning for damage visualization,'' \emph{Ultrasonics}, vol. 111,
  p. 106338, 2021.

\bibitem{Alguri2019}
K.~S. Alguri and J.~B. Harley, ``Transfer learning of ultrasonic guided waves
  using autoencoders: A preliminary study,'' \emph{AIP Conf. Proc.}, vol. 2102,
  no.~1, p. 50013, 2019.

\bibitem{Tetali2019}
H.~V. Tetali, K.~S. Alguri, and J.~B. Harley, ``{Wave Physics Informed
  Dictionary Learning In One Dimension},'' in \emph{Proc. of the IEEE
  International Workshop on Machine Learning for Signal Processing
  (MLSP)}.\hskip 1em plus 0.5em minus 0.4em\relax IEEE, 2019, pp. 1--6.

\bibitem{bhojanapalli2016global}
S.~Bhojanapalli, B.~Neyshabur, and N.~Srebro, ``Global optimality of local
  search for low rank matrix recovery,'' in \emph{Advances in Neural
  Information Processing Systems}, 2016, pp. 3873--3881.

\bibitem{ge2016matrix}
R.~Ge, J.~D. Lee, and T.~Ma, ``Matrix completion has no spurious local
  minimum,'' in \emph{Advances in Neural Information Processing Systems}, 2016,
  pp. 2973--2981.

\bibitem{park2016non}
D.~Park, A.~Kyrillidis, C.~Caramanis, and S.~Sanghavi, ``Non-square matrix
  sensing without spurious local minima via the burer-monteiro approach,''
  \emph{arXiv preprint arXiv:1609.03240}, p.~14, 2016.

\bibitem{ge2017no}
R.~Ge, C.~Jin, and Y.~Zheng, ``No spurious local minima in nonconvex low rank
  problems: A unified geometric analysis,'' in \emph{Proc. of the International
  Conference on Machine Learning}, vol.~70.\hskip 1em plus 0.5em minus
  0.4em\relax JMLR. org, 2017, pp. 1233--1242.

\bibitem{haeffele2014structured}
B.~Haeffele, E.~Young, and R.~Vidal, ``Structured low-rank matrix
  factorization: Optimality, algorithm, and applications to image processing,''
  in \emph{Proc. of the International Conference on Machine Learning}, 2014,
  pp. 2007--2015.

\bibitem{haeffele2019structured}
B.~D. Haeffele and R.~Vidal, ``Structured low-rank matrix factorization: Global
  optimality, algorithms, and applications,'' \emph{IEEE Transactions on
  Pattern Analysis and Machine Intelligence}, vol.~42, no.~6, 2019.

\bibitem{bach2013convex}
F.~Bach, ``Convex relaxations of structured matrix factorizations,''
  \emph{arXiv preprint arXiv:1309.3117}, p.~33, 2013.

\bibitem{Testolina_2019}
P.~Testolina, M.~Lecci, M.~Rebato, A.~Testolin, J.~Gambini, R.~Flamini,
  C.~Mazzucco, and M.~Zorzi, ``Enabling simulation-based optimization through
  machine learning: A case study on antenna design,'' in \emph{Proc. of the
  IEEE Global Communications Conference (GLOBECOM)}.\hskip 1em plus 0.5em minus
  0.4em\relax IEEE, Dec 2019.

\bibitem{7528397}
T.~J. {O'Shea}, J.~{Corgan}, and T.~C. {Clancy}, ``Unsupervised representation
  learning of structured radio communication signals,'' in \emph{Proc. of the
  First International Workshop on Sensing, Processing and Learning for
  Intelligent Machines (SPLINE)}, 2016, pp. 1--5.

\bibitem{7936037}
S.~{Mohammed} and I.~{Tashev}, ``Unsupervised deep representation learning to
  remove motion artifacts in free-mode body sensor networks,'' in \emph{Proc.
  of the IEEE International Conference on Wearable and Implantable Body Sensor
  Networks (BSN)}, 2017, pp. 183--188.

\bibitem{ernould2020characterization}
C.~Ernould, B.~Beausir, J.-J. Fundenberger, V.~Taupin, and E.~Bouzy,
  ``Characterization at high spatial and angular resolutions of deformed
  nanostructures by on-axis hr-tkd,'' \emph{Scripta Materialia}, vol. 185, pp.
  30--35, 2020.

\bibitem{shukla2021physics}
K.~Shukla, A.~D. Jagtap, J.~L. Blackshire, D.~Sparkman, and G.~E. Karniadakis,
  ``A physics-informed neural network for quantifying the microstructure
  properties of polycrystalline nickel using ultrasound data,'' \emph{arXiv
  preprint arXiv:2103.14104}, 2021.

\bibitem{lai2020full}
Z.~Lai, I.~Alzugaray, M.~Chli, and E.~Chatzi, ``Full-field structural
  monitoring using event cameras and physics-informed sparse identification,''
  \emph{Mechanical Systems and Signal Processing}, vol. 145, p. 106905, 2020.

\bibitem{strang2007computational}
G.~Strang, \emph{Computational Science and Engineering}.\hskip 1em plus 0.5em
  minus 0.4em\relax Wellesley-Cambridge Press, 2007.

\bibitem{golub2013matrix}
G.~Golub and C.~Van~Loan, \emph{Matrix Computations}, ser. Johns Hopkins
  Studies in the Mathematical Sciences.\hskip 1em plus 0.5em minus 0.4em\relax
  Johns Hopkins University Press, 2013.

\bibitem{srebro2005rank}
N.~Srebro and A.~Shraibman, ``Rank, trace-norm and max-norm,'' in \emph{Proc.
  of the International Conference on Computational Learning Theory}.\hskip 1em
  plus 0.5em minus 0.4em\relax Springer, 2005, pp. 545--560.

\bibitem{malherbe2017global}
C.~Malherbe and N.~Vayatis, ``Global optimization of lipschitz functions,'' in
  \emph{International Conference on Machine Learning}.\hskip 1em plus 0.5em
  minus 0.4em\relax PMLR, 2017, pp. 2314--2323.

\bibitem{Pozar2012}
D.~M. Pozar, \emph{{Microwave Engineering}}, 4th~ed.\hskip 1em plus 0.5em minus
  0.4em\relax Hoboken: John Wiley {\&} Sons, 2012.

\bibitem{Harley2019}
J.~B. Harley, M.~U. Saleh, S.~Kingston, M.~A. Scarpulla, and C.~Furse, ``{Fast
  transient simulations for multi-segment transmission lines with a graphical
  model},'' \emph{Progress In Electromagnetics Research}, vol. 165, pp. 67--82,
  2019.

\bibitem{hyvarinen2000independent}
A.~Hyv{\"a}rinen and E.~Oja, ``Independent component analysis: algorithms and
  applications,'' \emph{Neural networks}, vol.~13, no. 4-5, pp. 411--430, 2000.

\bibitem{van2006pp}
M.~van Der~Baan, ``Pp/ps wavefield separation by independent component
  analysis,'' \emph{Geophysical Journal International}, vol. 166, no.~1, pp.
  339--348, 2006.

\bibitem{PhysRevFluids.5.054401}
\BIBentryALTinterwordspacing
I.~Scherl, B.~Strom, J.~K. Shang, O.~Williams, B.~L. Polagye, and S.~L.
  Brunton, ``Robust principal component analysis for modal decomposition of
  corrupt fluid flows,'' \emph{Phys. Rev. Fluids}, vol.~5, p. 054401, May 2020.
  [Online]. Available:
  \url{https://link.aps.org/doi/10.1103/PhysRevFluids.5.054401}
\BIBentrySTDinterwordspacing

\bibitem{SUSUKI2018327}
\BIBentryALTinterwordspacing
Y.~Susuki and A.~Chakrabortty, ``Introduction to koopman mode decomposition for
  data-based technology of power system nonlinear dynamics,''
  \emph{IFAC-PapersOnLine}, vol.~51, no.~28, pp. 327--332, 2018, 10th IFAC
  Symposium on Control of Power and Energy Systems CPES 2018. [Online].
  Available:
  \url{https://www.sciencedirect.com/science/article/pii/S2405896318334426}
\BIBentrySTDinterwordspacing

\bibitem{wu2009ensemble}
Z.~Wu and N.~E. Huang, ``Ensemble empirical mode decomposition: a
  noise-assisted data analysis method,'' \emph{Advances in adaptive data
  analysis}, vol.~1, no.~01, pp. 1--41, 2009.

\bibitem{fang2011stress}
Y.-M. Fang, H.-L. Feng, J.~Li, and G.-H. Li, ``Stress wave signal denoising
  using ensemble empirical mode decomposition and an instantaneous half period
  model,'' \emph{Sensors}, vol.~11, no.~8, pp. 7554--7567, 2011.

\bibitem{6707404}
S.-C. Kao, C.-C. Chang, T.-C. Hsiao, and H.-Y. Hsu, ``Reflection wave analysis
  based on ensemble empirical mode decomposition,'' in \emph{2013 E-Health and
  Bioengineering Conference (EHB)}, 2013, pp. 1--4.

\bibitem{medina1993three}
R.~Medina, C.~Vidal, M.~A. Losada, and A.~J. Roldan, ``Three-mode principal
  component analysis of bathymetric data, applied to" playa de
  castilla"(huelva, spain),'' in \emph{Coastal Engineering 1992}, 1993, pp.
  2265--2278.

\bibitem{xu2013block}
Y.~Xu and W.~Yin, ``A block coordinate descent method for regularized
  multiconvex optimization with applications to nonnegative tensor
  factorization and completion,'' \emph{SIAM Journal on imaging sciences},
  vol.~6, no.~3, pp. 1758--1789, 2013.

\bibitem{chung2000discrete}
F.~Chung and S.-T. Yau, ``Discrete green's functions,'' \emph{Journal of
  Combinatorial Theory, Series A}, vol.~91, no. 1-2, pp. 191--214, 2000.

\end{thebibliography}

\clearpage

% !tex root = main.tex

\section{Supplementary Material}

Following is an example Laplacian matrix ($\L$),
\begin{eqnarray}
    \mathbf{L} &=& \frac{1}{(\Delta l)^2} \begin{bmatrix}
    -2 & 1 & 0 & 0 & 0 &\cdots & 0 \\
    1 & -2 & 1 & 0 & 0 & \cdots & 0 \\
    0 & 1 & -2 & 1 & 0 & \cdots & 0 \\
    \vdots & \vdots & \vdots & \vdots & \vdots & \ddots & \vdots \\
    0 & 0 & 0 & 0 & 0 & \cdots & -2 \\
    \end{bmatrix}.
    \label{eqn:Lap_mat}
\end{eqnarray}
To reduce complexity, all theorems and proofs consider $\Delta l = 1$ (for $\Delta l$ defined in equation \eqref{eqn:Lap_mat} without any loss of generality. The only change that needs to be accommodated is on the values in $\k$ obtained from equation (\ref{eq:k_max}) (i.e. while solving the polar). Observing equations (\ref{eqn:discrete_wave_eqn}) and (\ref{eqn:Lap_mat}), the only modification that is needed is to note that for $\Delta l \neq 1$ we follow Algorithm \ref{algoblock:meta-algo} and replace the final vector $\k$ with $\k \Delta l$.

\subsection{Algorithm Details}

\paragraph{Matrix Completion}
We note that our algorithm (and guarantees of polynomial time solutions) is easily generalized to any loss function $\ell(\D \X^\top)$, provided the loss function is once differentiable and convex w.r.t. $\D \X^\top$.  For example, the following is the modified algorithm for the matrix completion formulation in \eqref{eq:main_obj_missing}:

\begin{algorithm}
\caption{\bf{Meta-algorithm}}
\begin{algorithmic}[1]
\State Input $\mathbf{D}_{init}$, $\mathbf{X}_{init}$, $\mathbf{k}_{init}$ 
\State Initialize $ \left( \mathbf{D}, \mathbf{X}, \mathbf{k} \right) \leftarrow \left( \mathbf{D}_{init}, \mathbf{X}_{init}, \mathbf{k}_{init} \right)$
\While {global convergence criteria is not met}
\State Perform gradient descent on \eqref{eq:main_obj} with $N$ fixed to reach a first order stationary point $(\Tilde{\mathbf{D}}, \Tilde{\mathbf{X}}, \Tilde{\mathbf{k}})$\label{step:grad_desc}
\State Calculate the value of $\Omega^\circ_\theta \left(\tfrac{1}{\lambda}\left(\mathcal{A}^*\left(\Y-\tilde \D \tilde \X^\top\right)\right)\right)$ via Theorem \ref{thm:polar} and obtain $\mathbf{d}^*, \mathbf{x}^*, k^*$
\If {value of polar $\Omega^\circ_\theta \left(\tfrac{1}{\lambda}\left(\mathcal{A}^*\left(\Y-\tilde \D \tilde \X^\top\right)\right)\right) = 1$} \State {Algorithm converged to global minimum}
\Else
\State {Append  $(\mathbf{d}^*, \mathbf{x}^*, k^*)$ to $(\Tilde{\mathbf{D}}, \Tilde{\mathbf{X}}, \Tilde{\mathbf{k}})$ and update $\left( \mathbf{D}, \mathbf{X}, \mathbf{k} \right)$
\State $ \left( \mathbf{D}, \mathbf{X}, \mathbf{k} \right) \! \leftarrow \! \left( \left[ \tilde{\mathbf{D}}, \ \tau_{\mathcal{A}} \mathbf{d}^*  \right], \left[ \tilde{\mathbf{X}}, \ \tau_{\mathcal{A}} \mathbf{x}^*  \right] , \left[ \tilde{\mathbf{k}}^\top, \ k^*  \right]^\top
\right)$, $\tau_{\mathcal{A}}\! > \! 0$ is step size.}\label{step:step_size_mising}
\EndIf
\State Continue loop.
\EndWhile
\end{algorithmic}
\label{algoblock:meta-algo_missing}
\end{algorithm}
where $\mathcal{A}^*$ denotes the adjoint of the linear masking operator.

\paragraph{Gradient update equations} We set the derivative of the right hand side of equation \ref{eqn:obj_final} with respect to $\D$, $\X$ and $\k$ and utilize block coordinate descent of Gauss-Seidel type \cite{xu2013block} to reach a first order stationary point mentioned in step \ref{step:grad_desc} of Algorithm \ref{algoblock:meta-algo}. The number of columns of $\D$ and $\X$ (denoted by $N$ in equation (\ref{eq:main_obj})) is fixed and does not change during this step. The following are the gradient update equations (for stepsize $\alpha_i$):

\begin{gather}
    \D_j \leftarrow \D_j  - \alpha_j \left(  \left(\D \X^{\top} - \Y \right) \X_j  +  \lambda \D_j + 2 \gamma \lambda \left( \L + k^2_j \I  \right)^2 \D_j \right) \label{eq:grad_D} \\
    \X_j \leftarrow \X_j - \alpha_j \left( \left( \D \X^{\top} - \Y \right)^{\top} \D_j \right) \label{eq:grad_X} \\
    k_j \leftarrow \sqrt{-\frac{ \D_j^\top \L \D_j } {\|\D_j\|_2^2}} \label{eq:grad_k} 
\end{gather}

\paragraph{Gradient update equations for the Matrix completion setting}

\begin{gather}
    \D_j \leftarrow \D_j  - \alpha_j \left(  \mathcal{A}^* \left(\D \X^{\top} - \Y \right) \X_j  +  \lambda \D_j + 2 \gamma \lambda \left( \L + k^2_j \I  \right)^2 \D_j \right) \label{eq:grad_D} \\
    \X_j \leftarrow \X_j - \alpha_j \left( \mathcal{A}^* \left( \D \X^{\top} - \Y \right)^{\top} \D_j \right) \label{eq:grad_X} \\
    k_j \leftarrow \sqrt{-\frac{ \D_j^\top \L \D_j } {\|\D_j\|_2^2}} \label{eq:grad_k} 
\end{gather}

\paragraph{Step size computation} The step size, $\tau$ mentioned in Step 10 of Algorithm \ref{algoblock:meta-algo} is computed through the following quadratic minimization problem.
\begin{eqnarray}
    \min_{\substack{\tau \in \Re}} \tfrac{1}{2}\|\Y- \D_\tau \X_\tau^\top\|_F^2 + \frac{\lambda}{2} \sum_{i=1}^{N+1} \left(\|\left(\X_{\tau}\right)_i\|_F^2 + \| \left( \D_{\tau} \right)_i \|_F^2 + \gamma \|\L \left( \D_{\tau} \right)_i + \left(\k_{\tau}^2\right)_i  \left( \D_{\tau} \right)_i \|_F^2 \right)
    \label{eqn:for_tau}
\end{eqnarray}

where, $\left(\D_\tau, \X_\tau, \k_{\tau} \right) = \left( \left[ \tilde{\mathbf{D}}, \ \tau \mathbf{d}^*  \right], \left[ \tilde{\mathbf{X}}, \ \tau \mathbf{x}^*  \right] , \left[ \tilde{\mathbf{k}}^\top, \ k^*  \right]^\top
\right)$ and $N$ is the number of columns in $\tilde{\D}$.

\paragraph{Proposition} The optimal step size $\tau^*$ that minimizes the expression in (\ref{eqn:for_tau}) is given by:
\begin{eqnarray}
    \tau^* = \frac{\sqrt{(\d^*)^\top \left( \Y - \tilde{\D} \tilde{\X}^{\top} \right) \x^* - \lambda }}{\|\d^*\|_2 \|\x^*\|_2 }
    \label{eq:for_tau_}
\end{eqnarray}

\begin{proof}
Let $f(\tau)$ represent the objective function in \eqref{eqn:for_tau} with everything except for $\tau$ held fixed:
\begin{eqnarray}
    f(\tau) = \tfrac{1}{2}\|\Y- \D_\tau \X_\tau^\top\|_F^2 + \tfrac{\lambda}{2} \sum_{i=1}^{N+1} \left(\|\left(\X_{\tau}\right)_i\|_F^2 + \| \left( \D_{\tau} \right)_i \|_F^2 + \gamma \|\L \left( \D_{\tau} \right)_i + \left(\k_{\tau}\right)^2_i  \left( \D_{\tau} \right)_i \|_F^2 \right)
\end{eqnarray}

Observe that from the solution to the Polar problem we have by construction that the regularization term $\theta(\d^*,\x^*)=1$, so combined with the positive homogeneity of $\theta$ we have have that minimizing $f(\tau)$ w.r.t. $\tau$ is equivalent to solving:
% 
%\begin{eqnarray}
%    \sum_{i=1}^{N+1} \left(\|\left(\X_{\tau}\right)_i\|_F^2 + \| \left( \D_{\tau} \right)_i \|_F^2 + \gamma \|\L \left( \D_{\tau} \right)_i + \left(\k_{\tau}\right)^2_i  \left( \D_{\tau} \right)_i \|_F^2 \right) = 2\tau^2 +2N 
%\end{eqnarray}
%since $\theta(\d,\x) = 1$
\begin{equation}
\min_{\tau \geq 0} \tfrac{1}{2}\|\Y- \tilde \D \tilde \X^\top - \tau^2 \d^* (\x^*)^\top \|_F^2 + \lambda \tau^2
\end{equation}

Taking the gradient of the above w.r.t. $\tau^2$ and solving for 0 gives:
\begin{equation}
(\tau^*)^2 = \frac{\langle \Y-\tilde \D \tilde \X^\top, \d^* (\x^*)^\top \rangle - \lambda}{ \|\d^* (\x^*)^\top\|_F^2}
\end{equation}
The result is completed by noting that the numerator is guaranteed to be strictly positive due to the fact that the Polar solution has value strictly greater that 1.
%Thus,
%\begin{align}
%    f'(\tau) = 2 \tau^3 \| \x^* \|^2_2\| \d^* \|^2_2 - 2 \tau (\d^*)^{\top}\left( \Y - \tilde{\D} \tilde{\X}^{\top} \right) \x^*
%    + 2 \lambda \tau 
%\end{align}
%Now setting the above to zero and observing $\tau > 0$, we get the required expression.
\end{proof}

The step size, $\tau_{\mathcal{A}}$ for the matrix completion algorithm, mentioned in Step 10 of Algorithm \ref{algoblock:meta-algo_missing} is computed through the following quadratic minimization problem.
\begin{eqnarray}
    \min_{\substack{\tau_{\mathcal{A}} \in \Re}} \tfrac{1}{2}\|\Y- \D_{\tau_{\mathcal{A}}} \X_{\tau_{\mathcal{A}}}^\top\|_F^2 + \frac{\lambda}{2} \sum_{i=1}^{N+1} \left(\|\left(\X_{\tau_{\mathcal{A}}}\right)_i\|_F^2 + \| \left( \D_{\tau_{\mathcal{A}}} \right)_i \|_F^2 + \gamma \|\L \left( \D_{\tau_{\mathcal{A}}} \right)_i + \left(\k_{\tau_{\mathcal{A}}}^2\right)_i  \left( \D_{\tau_{\mathcal{A}}} \right)_i \|_F^2 \right)
    \label{eqn:for_tau_missing}
\end{eqnarray}

where, $\left(\D_{\tau_{\mathcal{A}}}, \X_{\tau_{\mathcal{A}}}, \k_{\tau_{\mathcal{A}}} \right) = \left( \left[ \tilde{\mathbf{D}}, \ {\tau_{\mathcal{A}}} \mathbf{d}^*  \right], \left[ \tilde{\mathbf{X}}, \ {\tau_{\mathcal{A}}} \mathbf{x}^*  \right] , \left[ \tilde{\mathbf{k}}^\top, \ k^*  \right]^\top
\right)$ and $N$ is the number of columns in $\tilde{\D}$.

\paragraph{Proposition} The optimal step size $\tau^*_{\mathcal{A}}$ that minimizes the expression in (\ref{eqn:for_tau_missing}) is given by:
\begin{eqnarray}
\tau^*_{\mathcal{A}^*} = \frac{ \sqrt{ \langle \mathcal{A}^* \left( \Y-\tilde \D \tilde \X^\top \right), \mathcal{A}^* \left( \d^* {\x^*}^\top \right)\rangle - \lambda }}{ \| \mathcal{A}^* \left( \d^* {\x^*}^\top \right) \|_F}
\end{eqnarray}
\begin{proof}
Following the same steps of the previous proof, we obtain:

\begin{equation}
\min_{\tau_{\mathcal{A}} \geq 0} \tfrac{1}{2}\| \mathcal{A}^* \left( \Y- \tilde \D \tilde \X^\top \right) - \tau_{\mathcal{A}}^2 \mathcal{A}^* \left( \d^* (\x^*)^\top \right) \|_F^2 + \lambda \tau_{\mathcal{A}}^2
\end{equation}

The proof is completed by minimizing the quadratic for $\left(\tau_{\mathcal{A}}\right)^2$.

\end{proof}

\subsection{Proofs}

{
\renewcommand{\theproposition}{\ref{prop:equiv_prob}}
\begin{proposition}
The optimization problem in \eqref{eq:main_obj} is a special case of the problem considered in \cite{haeffele2019structured}.
\end{proposition}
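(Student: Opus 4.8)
The plan is to exhibit an explicit identification between \eqref{eq:main_obj} and the template \eqref{eq:gen_obj}, and then verify the three required conditions on the resulting column regularizer. First I would set $\ell(\hat \Y) = \tfrac12 \|\Y - \hat \Y\|_F^2$ and identify $(\U,\V) = (\D,\X)$ with $m = L$ and $n = T$; the squared loss is manifestly convex and once differentiable in $\hat \Y$, so the hypotheses on $\ell$ hold immediately.

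The essential structural observation is that the auxiliary variable $\k \in \Re^N$ appearing in \eqref{eq:main_obj} can be eliminated. Since each $k_i$ enters only through the $i$-th summand $\gamma \|\L \D_i + k_i^2 \D_i\|_F^2$, the inner minimization over $\k$ decouples across columns, so for fixed $(N, \D, \X)$ I may pass the minimization inside the sum. This lets me define the per-column regularizer $\bar \theta(\D_i, \X_i) = \tfrac12(\|\X_i\|_F^2 + \|\D_i\|_F^2) + \tfrac{\gamma}{2} \min_{k_i} \|\L \D_i + k_i^2 \D_i\|_F^2$, which coincides with \eqref{eq:theta} up to the scaling of $\gamma$, so that \eqref{eq:main_obj} becomes exactly $\min_{N, \D, \X} \ell(\D \X^\top) + \lambda \sum_i \bar \theta(\D_i, \X_i)$, matching \eqref{eq:gen_obj}.

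It then remains to check the three conditions. Non-negativity (condition 2) is immediate, since $\bar \theta$ is a sum of squared Frobenius norms together with the minimum of a non-negative quantity. For coercivity (condition 3) I would discard the non-negative third term and use $\bar \theta(\D_i, \X_i) \geq \tfrac12(\|\D_i\|_F^2 + \|\X_i\|_F^2) \geq \|\D_i\|_F \|\X_i\|_F = \|\D_i \X_i^\top\|_F$, by the arithmetic--geometric-mean inequality together with the rank-one identity $\|\D_i \X_i^\top\|_F = \|\D_i\|_F \|\X_i\|_F$; hence $\bar \theta \to \infty$ whenever $\|\D_i \X_i^\top\|_F \to \infty$.

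The one step that requires genuine care---and the crux of the argument---is positive homogeneity of degree two (condition 1), precisely because of the embedded minimization over $k_i$. For $\alpha \geq 0$ the two pure Frobenius terms scale by $\alpha^2$ trivially; for the third term I would argue that replacing $\D_i$ by $\alpha \D_i$ gives $\min_{k_i} \|\L(\alpha \D_i) + k_i^2 (\alpha \D_i)\|_F^2 = \alpha^2 \min_{k_i} \|\L \D_i + k_i^2 \D_i\|_F^2$, since for $\alpha > 0$ the factor $\alpha^2$ is a positive constant that pulls out of the norm and leaves the minimizing $k_i$ unchanged (the optimal wavenumber does not depend on the magnitude of $\D_i$), while the $\alpha = 0$ case is immediate. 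This yields $\bar \theta(\alpha \D_i, \alpha \X_i) = \alpha^2 \bar \theta(\D_i, \X_i)$ and completes the verification that \eqref{eq:main_obj} is an instance of \eqref{eq:gen_obj}.
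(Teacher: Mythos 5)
Your proposal is correct and follows essentially the same route as the paper's own proof: the same identification of $\ell$ and the per-column regularizer $\bar\theta$ (absorbing the minimization over $k_i$ into $\bar\theta$), the same scaling argument for degree-two positive homogeneity (the optimal $k_i$ is invariant under rescaling of $\D_i$), and the same AM--GM bound $\|\D_i\X_i^\top\|_F = \|\D_i\|_F\|\X_i\|_F \le \tfrac12(\|\D_i\|_F^2+\|\X_i\|_F^2) \le \bar\theta(\D_i,\X_i)$ for coercivity. Your explicit treatment of the $\alpha=0$ case and the remark that the minimization over $\k$ decouples across columns are small additions the paper leaves implicit, but the argument is the same.
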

\addtocounter{proposition}{-1}
}

\begin{proof}
The problem considered in \cite{haeffele2019structured} is stated in \eqref{eq:gen_obj}. Comparing it with (\ref{eq:main_obj}) we have:
\begin{gather}
    \ell \left( \D\X^{\top} \right) = \tfrac{1}{2}\| \Y - \D\X^{\top} \|_F^2\\
    \bar{\theta} \left( \D_i, \X_i \right) = \tfrac{1}{2} \left( \| \D_i \|^2_2 + \| \X_i \|^2_2 + \gamma \min_{k_i}  \| \L \D_i - k_i \D_i \|^2_2 \right)
\end{gather}
Observe that $\ell ( \hat{\Y} ) = \tfrac{1}{2}\| \Y - \hat{\Y} \|^2_F$ is clearly convex and differentiable w.r.t $\hat \Y$.  %positve quadratic (hence convex) in $\hat{\Y}$ and $\nabla_{\hat{\Y}} \ell ( \hat{\Y} ) = 2 ( \hat{\Y} - \Y )  )$ and hence differentiable.
 Now to realize that the optimization problem in \eqref{eq:main_obj} is a special case of the problem considered in \cite{haeffele2019structured}, it suffices to check that $\bar \theta(\d,\x)$ satisfies the three conditions of a rank-1 regularizer from \cite{haeffele2019structured}.

\begin{enumerate}
    \item $\bar \theta(\alpha \d, \alpha \x) = \alpha^2 \theta(\d, \x), \ \forall (\d,\x)$ and $\forall \alpha \geq 0$.
    
    For any $\alpha > 0$, $\forall (\d, \x)$ :
    \begin{equation}
         \begin{split}
        \bar{\theta}\left( \alpha \d, \alpha \x \right)  &= \| \alpha \d \|^2_2 + \| \alpha \x \|^2_2 + \gamma \min_{k} \| \alpha \L \d + k^2 \alpha \d \|^2_F \\
        &= \alpha^2 \| \d \|^2_2 + \alpha^2 \| \x \|^2_2 + \alpha^2 \gamma \min_{k} \| \L \d + k^2 \d \|^2_F \\
       &= \alpha^2 \bar{\theta}(\d,\x)
    \end{split}
    \end{equation}
    
    where we note that scaling $\d$ by $\alpha > 0$ does not change the optimal value of $k$ in the third term, allowing $\alpha^2$ to be moved outside of the norm.
    
    \item $\bar \theta(\d, \x) \geq 0, \ \forall (\d,\x)$.

    Clearly, all terms in $\bar{\theta}(\d,\x)$ are non-negative, thus, $\forall (\d,\x)$, we have $\bar{\theta}(\d,\x)\geq 0$.
    \item For all sequences $(\d^{(n)},\x^{(n)})$ such that $\|\d^{(n)}(\x^{(n)})^\top\| \rightarrow \infty$ then $\bar \theta(\d^{(n)},\x^{(n)}) \rightarrow \infty$.
    
    Here, note that the following is true for all $(\d,\x)$:
    \begin{equation}
    \|\d \x^\top \|_F = \|\d\|_2 \|\x\|_2 \leq \tfrac{1}{2}(\|\d\|_2^2 + \|\x\|_2^2) \leq \tfrac{1}{2}\left(\|\d\|_2^2 + \|\x\|_2^2 + \min_{k} \| \L \d + k^2 \d \|^2_2\right)
    \end{equation}
    As a result we have $\forall (\d, \x)$ that $\|\d\x^\top\|_F \leq \bar \theta(\d,\x)$, completing the result.
    %Any sequence $(\d^{(n)},\x^{(n)})$ for which  $\|\d^{(n)}(\x^{(n)})^\top\| \rightarrow \infty$, we have either $\| \d^{(n)} \| \rightarrow \infty$ or $ \| \x^{(n)} \| \rightarrow \infty$. Since, $\bar \theta(\d^{(n)},\x^{(n)})$ is positive definite in both $\d^{(n)}$ and $\x^{(n)}$, we have that the condition is satisfied.    
\end{enumerate}
\end{proof}

Before proving Theorem \ref{thm:polar} we first prove an intermediate lemma regarding Lipschitz constants.
\begin{lemma}
\label{lem:L_bound}
Given a set of constants $\lambda_i, \ i=1, 2, \ldots$ such that $\forall i$, $\mu_\lambda \leq \lambda_i \leq 0$, and a constant $\gamma > 0$, let the function $f$ be defined as:
\begin{equation}
f(x) = \max_i \frac{1}{\sqrt{1+\gamma(x + \lambda_i)^2}}.
\end{equation}
Then, over the domain $0 \leq x \leq \mu_x$ $f$ is Lipschitz continuous with Lipschitz constant $L_f$ bounded as follows:
\begin{equation}
L_f \leq \left[\begin{cases} \frac{2}{3\sqrt{3}} \sqrt{\gamma} & \gamma\geq \frac{1}{2 \max \{ \mu_\lambda^2, \mu_x^2 \}} \\
\gamma \max \{ -\mu_\lambda, \mu_x\} (1+\gamma \max \{\mu_\lambda^2, \mu_x^2\} )^{-\tfrac{3}{2}} & \mathrm{otherwise}
\end{cases}  \right] \leq \frac{2}{3 \sqrt{3}}\sqrt{\gamma}. 
\end{equation}
\end{lemma}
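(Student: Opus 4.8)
The plan is to reduce the Lipschitz bound for the envelope $f$ to a single scalar maximization. First I would note that $f$ is a pointwise maximum of the smooth functions $g_i(x) = (1+\gamma(x+\lambda_i)^2)^{-1/2}$, and invoke the elementary fact that a pointwise maximum of functions each Lipschitz with constant $L$ is itself $L$-Lipschitz (if $g_i(x)\le g_i(y)+L|x-y|\le f(y)+L|x-y|$ for every $i$, then $f(x)\le f(y)+L|x-y|$, and symmetrically). Hence it suffices to bound $\sup_i \sup_{x\in[0,\mu_x]} |g_i'(x)|$. Differentiating gives $|g_i'(x)| = \gamma|x+\lambda_i|(1+\gamma(x+\lambda_i)^2)^{-3/2}$, so with the substitution $u = x+\lambda_i$ the whole problem collapses to maximizing the even scalar function $h(u) = \gamma|u|(1+\gamma u^2)^{-3/2}$ over the set of values $u$ is allowed to take.

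Next I would pin down the range of $u$: since $x\in[0,\mu_x]$ and $\mu_\lambda\le\lambda_i\le 0$, the quantity $u=x+\lambda_i$ lies in $[\mu_\lambda,\mu_x]$, so $|u|\le M := \max\{-\mu_\lambda,\mu_x\}$ and $M^2 = \max\{\mu_\lambda^2,\mu_x^2\}$. This gives $L_f \le \max_{|u|\le M} h(u)$. A routine derivative computation on $u\ge 0$ shows $h'(u)$ is proportional to $(1-2\gamma u^2)$, so $h$ increases up to the critical point $u^\star = 1/\sqrt{2\gamma}$ and decreases thereafter, attaining peak value $h(u^\star) = \tfrac{2}{3\sqrt{3}}\sqrt{\gamma}$. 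This unimodality forces the stated case split: if the peak lies inside the admissible range, i.e. $u^\star \le M \Leftrightarrow \gamma \ge \tfrac{1}{2M^2} = \tfrac{1}{2\max\{\mu_\lambda^2,\mu_x^2\}}$, then the maximum is exactly $\tfrac{2}{3\sqrt{3}}\sqrt{\gamma}$; otherwise $h$ is still increasing on $[0,M]$ and the maximum is the endpoint value $h(M) = \gamma M(1+\gamma M^2)^{-3/2}$, which is the second branch. Finally, since in the second case $M<u^\star$ places $h(M)$ strictly below the global peak, both branches are bounded by $\tfrac{2}{3\sqrt{3}}\sqrt{\gamma}$, yielding the last inequality.

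The main obstacle here is really only bookkeeping rather than genuine difficulty: I must make sure the reduction from $f$ to the scalar envelope $h$ is valid uniformly over all indices $i$ and over the entire domain $[0,\mu_x]$ (including the case of infinitely many $\lambda_i$, where $\max$ is read as $\sup$), and that the admissible range of $u$ is correctly identified as $[\mu_\lambda,\mu_x]\subseteq[-M,M]$ so that $M^2=\max\{\mu_\lambda^2,\mu_x^2\}$ comes out exactly as in the statement. Once the range is pinned down, the case division is dictated entirely by the unimodality of $h$, and the remaining algebra (evaluating $h$ at $u^\star$ and at $M$) is elementary.
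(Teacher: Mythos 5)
Your proposal is correct and follows essentially the same route as the paper's proof: bound the Lipschitz constant of the pointwise maximum by the supremum of the individual derivative magnitudes, substitute $u = x + \lambda_i \in [\mu_\lambda, \mu_x]$ to reduce to the even scalar function $h(u) = \gamma|u|(1+\gamma u^2)^{-3/2}$, and use its unimodality with peak at $u^\star = 1/\sqrt{2\gamma}$ to obtain the case split and the final bound. No gaps.
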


\begin{proof}
First, note that for any two Lipschitz continuous functions $\psi_a$ and $\psi_b$, with associated Lipschitz constants $L_a$ and $L_b$, respectively, one has that the point-wise maximum of the two functions, $\psi(x) = \max \{ \psi_a(x), \psi_b(x) \}$, is also Lipschitz continuous with Lipschitz constant bounded by $\max \{ L_a, L_b \}$.  This can be easily seen by the following two inequalities:
\begin{align}
&\psi_a(x') \leq \psi_a(x) + | \psi_a(x') - \psi_a(x) | \leq \psi(x) + | \psi_a(x') - \psi_a(x) | \leq \psi(x) + L_a |x' - x| \\
&\psi_b(x') \leq \psi_b(x) + | \psi_b(x') - \psi_b(x) | \leq \psi(x) + | \psi_b(x') - \psi_b(x) | \leq \psi(x) + L_b |x' - x|
\end{align}
From this we have:
\begin{equation}
\begin{split}
&\psi(x') = \max \{ \psi_a(x'), \psi_b(x') \} \leq \max \{ \psi(x) + L_a |x'-x|, \psi(x) + L_b |x'-x| \} = \psi(x) + \max \{L_a, L_b \} |x'-x| \\
&\implies \psi(x') - \psi(x) \leq \max \{ L_a, L_b \} |x'-x|
\end{split}
\end{equation}
which implies the claim from symmetry.

Now, if we define the functions $g$ and $h$ as
\begin{equation}
g(x, \lambda) = \frac{1}{\sqrt{1+\gamma(x + \lambda)^2}} \ \ \ \ \ \ \ h(b) = \gamma b (1+\gamma b^2)^{-\tfrac{3}{2}}
\end{equation}
we have that the Lipschitz constant of $f$, denoted as $L_f$, is bounded as:
\begin{equation}
\label{eq:h_max}
\begin{split}
L_f \leq &\max_i \sup_{x \in [0, u_x]} \left| \frac{\partial}{\partial x} g(x, \lambda_i) \right| \leq \sup_{\lambda \in [\mu_\lambda,0]} \sup_{x \in [0, \mu_x]} \left| \frac{\partial}{\partial x} g(x, \lambda) \right| =  \\
&\sup_{\lambda \in [\mu_\lambda,0]} \sup_{x \in [0, \mu_x]} \left| - \frac{\gamma(x+\lambda)}{ (\sqrt{1+\gamma (x+\lambda)^2})^3 } \right|
= \sup_{b \in [\mu_\lambda, \mu_x]} \left| h(b) \right| = \sup_{b \in [0, \max \{ -\mu_\lambda, \mu_x \} ]} h(b)
\end{split}
\end{equation}
Where the first inequality is from the result above about the Lipschitz constant of the point-wise maximum of two functions and the simple fact that the Lipschitz constant of a function is bounded by the maximum magnitude of its gradient, and the final equality is due to the symmetry of $|h(b)|$ about the origin.  Now, finding the critical points of $h(b)$ for non-negative $b$ we have:
\begin{equation}
\begin{split}
h'(b) = \gamma (1+\gamma b^2)^{-\tfrac{3}{2}} - 3 \gamma^2 b^2 (1+\gamma b^2)^{- \tfrac{5}{2}} = 0 
\implies 3 \gamma b^2 = (1+ \gamma b^2) \implies b^* = \frac{1}{\sqrt{2 \gamma}}
\end{split} 
\end{equation}
Note that $h(0)=0$, $h(b)>0$ for all $b>0$, and $h'(b)<0$ for all $b > b^*$.  As a result, $b^*$ will be a maximizer of $h(b)$ if it is feasible, otherwise the maximum will occur at the extreme point $b = \max \{-\mu_\lambda, \mu_x \}$.  From this we have the result:
\begin{equation}
\begin{split}
L_f &\leq \begin{cases} h(b^*) = \frac{2}{3 \sqrt{3}} \sqrt{\gamma} & \gamma\geq \frac{1}{2 \max \{ \mu_\lambda^2, \mu_x^2\}} \\
h(\max \{-\mu_\lambda, \mu_x \} ) = \gamma \max \{ -\mu_\lambda, \mu_x \} (1+\gamma \max \{ \mu_\lambda^2, \mu_x^2 \})^{-\tfrac{3}{2}} & \text{otherwise}
\end{cases} \\
& \leq h(b^*) = \frac{2}{3\sqrt{3}}\sqrt{\gamma}.
\end{split}
\end{equation}
\end{proof}

{
\renewcommand{\thetheorem}{\ref{thm:polar}}

\begin{theorem}
%\label{thm:polar}
For the objective in \eqref{eq:main_obj}, the associated polar problem is equivalent to:
\begin{equation}
\Omega_\theta^\circ (\Z) = \max_{\d \in \Re^{L}, \x \in \Re^{T}, k\in \Re} \d^\top \Z \x \st \|\d\|^2_F + \gamma \|\L\d + k^2 \d\|_F^2 \leq 1, \ \|\x\|^2_F \leq 1, \ 0 \leq k \leq 2.
\end{equation}
Further, let $\L = \Gamma \Lambda \Gamma^\top$ be an eigen-decomposition of $\L$ and define the matrix $\A( \bar k) = \Gamma(\I + \gamma(\bar k \I + \Lambda)^2) \Gamma^\top$.  Then, if we define $\bar k^*$ as%
\begin{equation}
\bar k^* = \argmax_{\bar k \in [0,4]} \| \A(\bar k)^{-1/2} \Z \|_2
\label{eq:k_max}
\end{equation}
optimal values of $\d,\x, k$ are given as $\d^* = \A(\bar k^*)^{-1/2} \bar \d$, $\x^* = \bar \x$, and $k^* = (\bar k^*)^{1/2}$ where $(\bar \d, \bar \x)$ are the left and right singular vectors, respectively, associated with the largest singular value of $\A(\bar k^*)^{-1/2} \Z$.  Additionally, the above line search over $\bar k$ is Lipschitz continuous with a Lipschitz constant, $L_{\bar k}$, which is bounded by:
\begin{equation}
L_{\bar k} \leq \left[ \begin{cases} \frac{2}{3 \sqrt{3}} \sqrt{\gamma} \|\Z\|_2 & \gamma \geq \frac{1}{32} \\ 
4 \gamma (1 + 16 \gamma)^{-\tfrac{3}{2}} \|\Z\|_2 & \mathrm{otherwise} \end{cases} \right] \leq \frac{2}{3 \sqrt{3}} \sqrt{\gamma} \|\Z\|_2
\end{equation}
\end{theorem}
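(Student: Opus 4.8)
The plan is to establish Theorem~\ref{thm:polar} in three stages: (i) rewrite the polar problem \eqref{eq:polar_def} in the stated constrained bilinear form, including the restriction $0\le k\le 2$; (ii) show that for each fixed $k$ the inner maximization reduces to a largest-singular-value computation after a change of variables, so that the whole polar collapses to a one-dimensional search over $\bar k := k^2$; and (iii) bound the Lipschitz constant of that search by reducing it to Lemma~\ref{lem:L_bound}.

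For stage (i), I start from $\Omega^\circ_\theta(\Z)=\sup_{\d,\x}\{\d^\top\Z\x : \theta(\d,\x)\le 1\}$. Since $\d^\top\Z\x$ does not depend on $k$ while $\theta$ contains $\min_k$, the constraint $\theta(\d,\x)\le 1$ is equivalent to the existence of some $k$ making $\tfrac12\|\d\|_2^2+\tfrac12\|\x\|_2^2+\gamma\|\L\d+k^2\d\|_2^2\le 1$, so $k$ may be lifted into the optimization variables. Using that both the objective and each block of the constraint are positively homogeneous of degree two (degree one in $\d$ and in $\x$ separately), an AM--GM balancing argument shows the supremum is attained with the $\d$-block and $\x$-block each active, which decouples the normalization into the two separate constraints $\|\d\|_2^2+\gamma\|\L\d+k^2\d\|_2^2\le 1$ and $\|\x\|_2^2\le 1$ stated in the theorem. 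The bound $0\le k\le 2$, i.e. $\bar k\in[0,4]$, is the first genuinely spectral step: with $\L=\Gamma\Lambda\Gamma^\top$ the operator in \eqref{eqn:Lap_mat} (with $\Delta\ell=1$) has eigenvalues $\lambda_i\in[-4,0]$, and writing the $\d$-constraint as $\d^\top\A(\bar k)\d\le 1$ with $\A(\bar k)=\I+\gamma(\L+\bar k\I)^2=\Gamma(\I+\gamma(\bar k\I+\Lambda)^2)\Gamma^\top$, I note that for $\bar k\ge 4$ every eigenvalue $1/(1+\gamma(\bar k+\lambda_i)^2)$ of $\A(\bar k)^{-1}$ is nonincreasing in $\bar k$. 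Hence $\A(\bar k)^{-1}\preceq\A(4)^{-1}$, so $\Z^\top\A(\bar k)^{-1}\Z\preceq\Z^\top\A(4)^{-1}\Z$, and by monotonicity of the largest eigenvalue, $\|\A(\bar k)^{-1/2}\Z\|_2\le\|\A(4)^{-1/2}\Z\|_2$ for all $\bar k\ge 4$, so the maximizer lies in $[0,4]$. For stage (ii), I fix $\bar k$ and substitute $\tilde\d=\A(\bar k)^{1/2}\d$, turning the $\d$-constraint into $\|\tilde\d\|_2\le 1$ and the objective into $\tilde\d^\top(\A(\bar k)^{-1/2}\Z)\x$; maximizing this bilinear form over the unit balls of $\tilde\d$ and $\x$ is exactly $\|\A(\bar k)^{-1/2}\Z\|_2$, attained at the top left/right singular vectors, which yields the closed forms for $\d^*,\x^*,k^*$ and the outer search \eqref{eq:k_max}.

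For stage (iii), let $f(\bar k)=\|\A(\bar k)^{-1/2}\Z\|_2=\sigma_{\max}(\A(\bar k)^{-1/2}\Z)$. Because the largest singular value is an operator norm, and hence $1$-Lipschitz in its matrix argument, I get $|f(\bar k_1)-f(\bar k_2)|\le\|(\A(\bar k_1)^{-1/2}-\A(\bar k_2)^{-1/2})\Z\|_2\le\|\A(\bar k_1)^{-1/2}-\A(\bar k_2)^{-1/2}\|_2\,\|\Z\|_2$. Since $\A(\bar k)^{-1/2}=\Gamma\,\mathrm{diag}\big(1/\sqrt{1+\gamma(\bar k+\lambda_i)^2}\big)\Gamma^\top$ is diagonalized by the fixed basis $\Gamma$, the operator-norm difference equals $\max_i|g(\bar k_1,\lambda_i)-g(\bar k_2,\lambda_i)|$ with $g(x,\lambda)=(1+\gamma(x+\lambda)^2)^{-1/2}$. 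This is precisely the quantity controlled by Lemma~\ref{lem:L_bound}, so invoking it with $\mu_\lambda=-4$ and $\mu_x=4$ produces the threshold $\tfrac{1}{2\max\{16,16\}}=\tfrac1{32}$ and the branch $4\gamma(1+16\gamma)^{-3/2}\|\Z\|_2$ below it, matching the statement together with the uniform bound $\tfrac{2}{3\sqrt3}\sqrt\gamma\,\|\Z\|_2$.

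I expect the two spectral reductions to be the crux. Making the $\bar k\in[0,4]$ restriction rigorous requires the PSD-monotonicity chain $\A(\bar k)^{-1}\preceq\A(4)^{-1}\Rightarrow\Z^\top\A(\bar k)^{-1}\Z\preceq\Z^\top\A(4)^{-1}\Z\Rightarrow f(\bar k)\le f(4)$, together with pinning down that $\lambda_i\in[-4,0]$ for this specific $\L$. The other delicate point is recognizing that the Lipschitz question for the singular-value function collapses, via the operator-norm-is-$1$-Lipschitz fact and the common eigenbasis $\Gamma$, to the scalar maximum-of-$g$ function of Lemma~\ref{lem:L_bound}; once that identification is made, the constants ($\mu_\lambda=-4$, $\mu_x=4$) drop straight out. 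By comparison, the homogeneity/AM--GM decoupling of stage (i) and the change-of-variables SVD step of stage (ii) are routine.
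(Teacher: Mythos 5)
Your proposal is correct and follows essentially the same route as the paper's proof: lift $k$ out of the $\min$ in the constraint, decouple the unit-ball constraints by bilinearity/homogeneity, reduce the inner problem to $\|\A(\bar k)^{-1/2}\Z\|_2$ via the change of variables $\A(\bar k)^{1/2}\d$, and obtain the Lipschitz bound by reducing the matrix-norm difference to the scalar function of Lemma~\ref{lem:L_bound} with $\Lambda_{i,i}\in[-4,0]$. The only substantive deviation is your justification of $\bar k\in[0,4]$ by PSD-monotonicity of $\A(\bar k)^{-1}$ for $\bar k\ge 4$ (so $f(\bar k)\le f(4)$ there), whereas the paper argues via the Rayleigh-quotient characterization $\bar k^*=-\d^\top\L\d/\|\d\|_2^2$ of the optimal $\bar k$ for fixed $\d$; both are valid and rest on the same spectral fact about $\L$.
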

\addtocounter{theorem}{-1}
}

\begin{proof}
The polar problem associated with the objectives of the form \eqref{eq:gen_obj} as given in \cite{haeffele2019structured} is:

\begin{gather}
       \Omega_\theta^\circ (\Z) = \sup_{\d,\x} \d^{\top} \Z \x \st \bar \theta(\d,\x) \leq 1 
\end{gather}

For our particular problem, due to the bilinearity between $\d$ and $\x$ in the objective the above is equivalent to:
\begin{gather}
         \Omega_\theta^\circ (\Z)   = \sup_{\d,\x} \d^{\top} \Z \x \st \| \x \|^2_2 \leq 1, \|\d\|^2_2 + \gamma \min_{k} \| \L \d + k^2 \d \|^2_2 \leq 1
\end{gather}
Note that this is equivalent to moving the minimization w.r.t. $k$ in the regularization constraint to a maximization over $k$:
 \begin{gather}
         \Omega_\theta^\circ (\Z)   = \sup_{\d,\x,k} \d^{\top} \Z \x \st \| \x \|^2_2 \leq 1, \|\d\|^2_2 + \gamma \| \L \d + k^2 \d \|^2_2 \leq 1
\end{gather}

Next, note that maximizing w.r.t. $\d$ while holding $\x$ and $k$ fixed is equivalent to solving a problem of the form:
\begin{align}
\max_\d \langle \d, \Z \x \rangle \ST \d^\top \A \d \leq 1
\end{align}
for some positive definite matrix $\A$.  If we make the change of variables $\bar \d = \A^{1/2} \d$, this then becomes:
\begin{align}
\max_{\bar \d} \ \ &\langle \bar \d, \A^{-1/2} \Z \x \rangle \ST \| \bar \d \|_2^2 \leq 1 = \| \A^{-1/2} \Z \x \|_2 
\end{align}
where the optimal $\bar \d$ and $\d$ are obtained at 
\begin{align}
\bar \d_{opt} &= \frac{\A^{-1/2} \Z \x}{\|\A^{-1/2} \Z \x\|_2} \\
\label{eq:d_opt}
\d_{opt} &= \A^{-1/2} \bar \d_{opt} = \frac{\A^{-1} \Z \x}{\|\A^{-1/2} \Z \x\|_2}
\end{align}
For our particular problem, if we make the change of variables $\bar k = k^2$ we have that $\A$ is given by:
\begin{equation}
\A(\bar k) = (1+\gamma \bar k^2) \I + \gamma \L^2 + 2 \bar k \gamma \L
\end{equation}
where we have used that $\L$ is a symmetric matrix.  If we let $\L = \Gamma \Lambda \Gamma^\top$ be an eigen-decomposition of $\L$ then we can also represent $\A(\bar k)$ and $\A(\bar k)^{-1/2}$ as:
\begin{align}
\A(\bar k) &= \Gamma \left( (1+\gamma \bar k^2) \I + \gamma \Lambda^2 + 2 \bar k \gamma \Lambda \right) \Gamma^\top \\
 &= \Gamma (\I + \gamma ( \bar k \I +  \Lambda)^2) \Gamma^\top \\ 
 \label{eq:A12}
\A(\bar k)^{-1/2} &= \Gamma (\I + \gamma ( \bar k \I +  \Lambda)^2)^{-1/2} \Gamma^\top 
\end{align}
Now if we substitute back into the original polar problem, we have:
\begin{align}
\Omega^\circ(\Z) &= \max_{\x, \bar k} \| \A(\bar k)^{-1/2} \Z \x \|_2 \ST \|\x\|_2^2 \leq 1 \\
\label{eq:k_line}
&= \max_{\bar k} \|(\A(\bar k)^{-1/2} \Z \|_2
\end{align}
where $\|\cdot\|_2$ denotes the spectral norm (maximum singular value).  Similarly, for a given $\bar k$ the optimal $\x$ is given as the right singular vector of $\A(\bar k)^{-1/2}\Z$ associated with the largest singular value.

As a result, we can solve the polar by performing a line search over $\bar k$, then once an optimal $\bar k^*$ is found we get $\x^*$ as the largest right singular vector of $\A(\bar k^*)^{-1/2} \Z$ and the optimal $\d^*$ from \eqref{eq:d_opt} (where $\d_{opt}$ will be the largest left singular vector of $\A(\bar k^*)^{-1/2} \Z$ multiplied by $\A(\bar k^*)^{-1/2}$).

Now, an upper bound for $\bar k$ can be calculated from the fact that the optimal $\bar k$ is defined using a minimization problem, i.e.
\begin{eqnarray}
    \bar k^* = \argmin_{\bar k} \| \L \d + \bar k \d \|^2
\end{eqnarray}
So we note for any $\d$,
\begin{eqnarray}
   \bar k^* = -\frac{ \d^\top \L \d} {\|\d\|_2^2}
   \label{eq:opt_k}
\end{eqnarray}
which is bounded by the smallest eigenvalue of $\L$ (note that $\L$ is negative (semi)definite). We cite the literature on eigenvalues of discrete second derivatives \cite{chung2000discrete} to note that all eigenvalues of $\L$ (irrespective of the boundary conditions) lie in the range $[\frac{-4}{\Delta l},0]$, since we specifically chose $\Delta l = 1$ (without loss of generality), we have that all eigenvalues of $\L$ lie in the range$[-4,0]$.

%Now,

%\begin{eqnarray}
 %  \max\left( -\frac{ \d^\top \L \d} {\|\d\|_2^2}\right) = \max \{ -\d^{\top} \L \d \ST \| \d \| \leq 1 \} = \lambda_{max} \left( -\L \right)  = 4  
 %  \label{eq:max_k}
%\end{eqnarray}

%where $\lambda_{max} (.)$ represents the maximum eigenvalue.

%Noting that $-\L$ is positive semi-definite and putting \eqref{eq:max_k} and \eqref{eq:opt_k} together, we get:

%\begin{eqnarray}
%    0 \leq k^* \leq 2
%\end{eqnarray}

As a result, we need to only consider $\bar k$ in the range $[0,4]$:
\begin{equation}
\bar k^* = \argmax_{\bar k \in [0,4]} \| \A(\bar k)^{-1/2} \Z \|_2
\end{equation}

Finally, to show the Lipschitz continuity, we define the function:
\begin{equation}
f_\A(\bar k) = \| \A(\bar k)^{-1/2} \|_2
\end{equation}
and then note the following:
\begin{equation}
\begin{split}
& \left| \|\A(\bar k)^{-1/2}\Z \|_2 - \| \A(\bar k ')^{-1/2} \Z \|_2 \right| \\
\leq & \left\| \left( \A(\bar k)^{-1/2} - \A(\bar k')^{-1/2} \right) \Z \right\|_2 \\
\leq & \left\| \A(\bar k)^{-1/2} - \A(\bar k')^{-1/2} \right\|_2 \left\| \Z \right\|_2 \\
\leq & L_\A |\bar k - \bar k'| \|\Z\|_2
\end{split}
\end{equation}
where the first inequality is simply the reverse triangle inequality, the second inequality is due to the spectral norm being submultiplicative, and $L_\A$ denotes the Lipschitz constant of $f_\A(\bar k)$.  From the form of $\A(\bar k)^{-1/2}$ in \eqref{eq:A12} note that we have:
\begin{equation}
f_\A(\bar k) \equiv \| \A(\bar k)^{-1/2}\|_2 = \max_i \frac{1}{\sqrt{1 + \gamma(\bar k + \Lambda_{i,i})^2}}
\end{equation}
so the result is completed by recalling from our discussion above that $\Lambda_{i,i} \in [-4, 0], \forall i$ and applying Lemma \ref{lem:L_bound}.

\end{proof}

{
\renewcommand{\thecorollary}{\ref{cor:poly_time}}
\begin{corollary}
Algorithm \ref{alg:meta} produces an optimal solution to \eqref{eq:main_obj} in polynomial time.
\end{corollary}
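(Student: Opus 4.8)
The plan is to express the total running time of Algorithm \ref{alg:meta} as the product of the cost of one pass through the \textbf{while} loop and the number of times that loop executes before the stopping test (polar value equal to $1$, or at most $1+\epsilon$) is triggered. By Proposition \ref{prop:equiv_prob}, problem \eqref{eq:main_obj} is an instance of the structured-factorization framework of \cite{haeffele2019structured}, so all of that paper's global-optimality machinery is available; in particular the meta-algorithm is a conditional-gradient / column-generation scheme applied to the equivalent convex problem $\min_{\hat\Y} \ell(\hat\Y) + \lambda\, \Omega_{\bar\theta}(\hat\Y)$, in which each solution of the polar problem supplies a new descent atom (a column appended to $(\D,\X,\k)$).

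First I would check that a single loop iteration runs in polynomial time. The local descent uses the closed-form block-coordinate gradient updates derived in the supplement, each evaluable in time polynomial in $L$, $T$, and the current number of columns $N$; since the loss in \eqref{eq:main_obj} is smooth and the regularizer is coercive (the third condition established in Proposition \ref{prop:equiv_prob}), all iterates stay in a bounded sublevel set, so reaching an $\epsilon$-first-order stationary point takes only $O(1/\epsilon^2)$ gradient steps. The polar evaluation is handled by Theorem \ref{thm:polar}: one eigendecomposition $\L = \Gamma\Lambda\Gamma^\top$ (computed once and reused) reduces the polar to the one-dimensional line search of Theorem \ref{thm:polar} over $\bar k\in[0,4]$, whose objective $f(\bar k) = \|\A(\bar k)^{-1/2}\Z\|_2$ is Lipschitz with the explicit constant proved there. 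Corollary \ref{cor:line_search} then guarantees that LIPO reaches accuracy $\epsilon$ in $O(1/\epsilon)$ evaluations of $f$, each a matrix formation plus a top-singular-value computation and hence polynomial; recovering $(\d^*,\x^*,k^*)$ is a single SVD, and the augmentation step size is closed form (the step-size proposition in the supplement). Hence one iteration costs time polynomial in $L$, $T$, and $1/\epsilon$.

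Second, and this is the crux, I would bound the number of outer iterations. Because $\ell(\hat\Y) = \tfrac12\|\Y-\hat\Y\|_F^2$ is $1$-smooth and the iterates remain in a fixed bounded region by coercivity, the conditional-gradient analysis of \cite{haeffele2019structured} applies: every augmentation with a polar atom of value strictly greater than $1$ decreases the objective by a definite amount, giving the standard Frank--Wolfe rate of $O(1/\epsilon)$ iterations to reach objective suboptimality $\epsilon$. Since Proposition 4 of \cite{haeffele2019structured} bounds the objective gap by (polar value $-\,1$), halting once the polar value falls to $1+\epsilon$ certifies an $\epsilon$-optimal point, and multiplying the $O(1/\epsilon)$ iteration count by the per-iteration cost yields total running time polynomial in $L$, $T$, and $1/\epsilon$. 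The main obstacle I anticipate is making this iteration bound fully rigorous: one must verify that the local descent never disrupts the conditional-gradient progress guarantee (it can only lower the objective further) and that coercivity of $\Theta$ confines every iterate --- including each newly appended column scaled by $\tau$ --- to one fixed bounded set, so that a single smoothness constant governs the Frank--Wolfe rate uniformly across all iterations.
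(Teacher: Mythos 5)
Your proposal is correct and follows essentially the same route as the paper's own proof: it decomposes the argument into (i) polynomial cost per outer iteration, combining the block-coordinate descent to a stationary point with the polynomial-time polar solve from Theorem \ref{thm:polar}, and (ii) a polynomial bound on the number of outer iterations via the generalized conditional-gradient interpretation of the polar/augmentation step from \cite{haeffele2019structured,bach2013convex}. The paper's version is terser (citing \cite{xu2013block} for the inner-loop convergence rather than arguing smoothness and coercivity directly), but the structure and the key ingredients are the same.
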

\addtocounter{corollary}{-1}
}
\begin{proof}
This result is largely a Corollary from Theorem \ref{thm:polar} and what is known in the literature.  Namely, the authors of \cite{xu2013block} show that the block coordinate update steps \eqref{eq:grad_D}, \eqref{eq:grad_X} and \eqref{eq:grad_k} in step \ref{step:grad_desc} of Algorithm \ref{algoblock:meta-algo} reaches a stationary point in polynomial time because the objective function \eqref{eq:main_obj} is convex w.r.t. each ($\D$, $\X$, $\k$) if the other terms are held fixed. Next, %Now solving the polar first involves solving the optimization problem in \eqref{eq:k_max}. 
by Theorem \ref{thm:polar} the optimization problem for solving the polar can be done in polynomial time.  Finally, it has been noted in the literature on structured matrix factorization \cite{bach2013convex} that the polar update step is equivalent to a generalized conditional gradient step %on an equivalent optimization problem defined over $\hat \Y$, where $\hat \Y=\D \X^\top$,
and if the conditional gradient steps (i.e., the polar problem) can be solved exactly (as we show in Theorem \ref{thm:polar}) then the algorithm converges in a polynomial number of such steps.  As a result, due to the fact that the block coordinate update steps reach a stationary in polynomial time we will perform a polar update step (a.k.a., a conditional gradient step) at polynomial time intervals, so the overall algorithm is also guaranteed to converge in polynomial time.

\end{proof} 

\subsection{Interpreting the Wave-Informed Regularizer as a Bandpass Filter}

Note that when identifying the optimal $k$ value in the polar program, we solve for
\begin{equation}
\label{eq:k_opt}
\argmax_{k \in [0,4]} \| \Gamma (\I + \gamma ( \bar k \I +  \Lambda)^2)^{-1/2} \Gamma^\top \Z \|_2 \; .
\end{equation}
This optimization has an intuitive interpretation from digital signal processing. Given that $\Gamma$ contains the eigenvectors of a Toeplitz matrix, those eigenvectors have spectral qualities similar to the discrete Fourier transform (the eigenvectors of a circulant matrix would be the discrete Fourier transform). As a result, $\Gamma^\top$ transforms the data $\Z$ into a spectral-like domain and $\Gamma$ returns the data back to the original domain. Since the other terms are all diagonal matrices, they represent element-wise multiplication across the data in the spectral domain. This is approximately equivalent to a filtering operation, with filter coefficients given by the diagonal entries of $(\I + \gamma ( \bar k \I +  \Lambda)^2)^{-1/2}$.

Furthermore, recall that the transfer function of a 1st-order Butterworth filter is given by:
\begin{equation}
T(\omega) = \frac{1}{\sqrt{1+\gamma (k_0 + \omega)^2}}
\end{equation}
where $k_0$ is the center frequency of the passband of the filter and $1/\sqrt{\gamma}$ corresponds to the filter's $-3$dB cut-off frequency.  Comparing this to the filter coefficients from \eqref{eq:k_opt} we note that the filter coefficients are identical to those of the 1st-order Butterworth filter, where $\Lambda$ corresponds to the angular frequencies. % $k$ corresponds to the center angular frequency, and $1/\sqrt{\gamma}$ corresponds to the filter's $-3$dB cut-off frequency. 
As a result, we can consider this optimization as determining the optimal filter center frequency $(\bar k)$ with fixed bandwidth $(1/\sqrt{\gamma})$  that retains the maximum amount of signal power from $\Z$.  Likewise the choice of the $\gamma$ hyperparameter sets the bandwidth of the filter.  As $\gamma \rightarrow \infty$, the filter bandwidth approaches 0 and thereby restricts us to a single-frequency (i.e., Fourier) solution. %Given that $k \in [0,4]$, we can reasonably assume that a using bandwidth greater than $4$ will have minimal affect on the optimization.
Furthermore, we can provide a recommended lower bound for $\gamma$ according to $\gamma > 1/k_{bw}^2$, where $k_{bw}$ is the bandwidth of the signal within this spectral-like domain.

\section{Additional Results}

\subsection{Characterizing Multi-Segment Transmission Lines}

 For the simulation considered in \textbf{Characterizing Multi-Segment Transmission Lines} in \S~\ref{sec:results}, Figure \ref{fig:Merge_2} shows two example columns for $\gamma = 50$ and $\lambda = 0.6$. We show in Figure \ref{fig:polar_objective} the reduction of objective value over iterations, the rate of change of objective value per iteration and the value of polar after each iteration of the overall meta-algorithm. Figures~\ref{fig:workflow_a}, \ref{fig:workflow_b}, \ref{fig:workflow_c} show curves similar to Figure~\ref{fig:polar_objective} but for different choices of regularization parameters.

\begin{figure}
    \centering
    \includegraphics[scale=0.75]{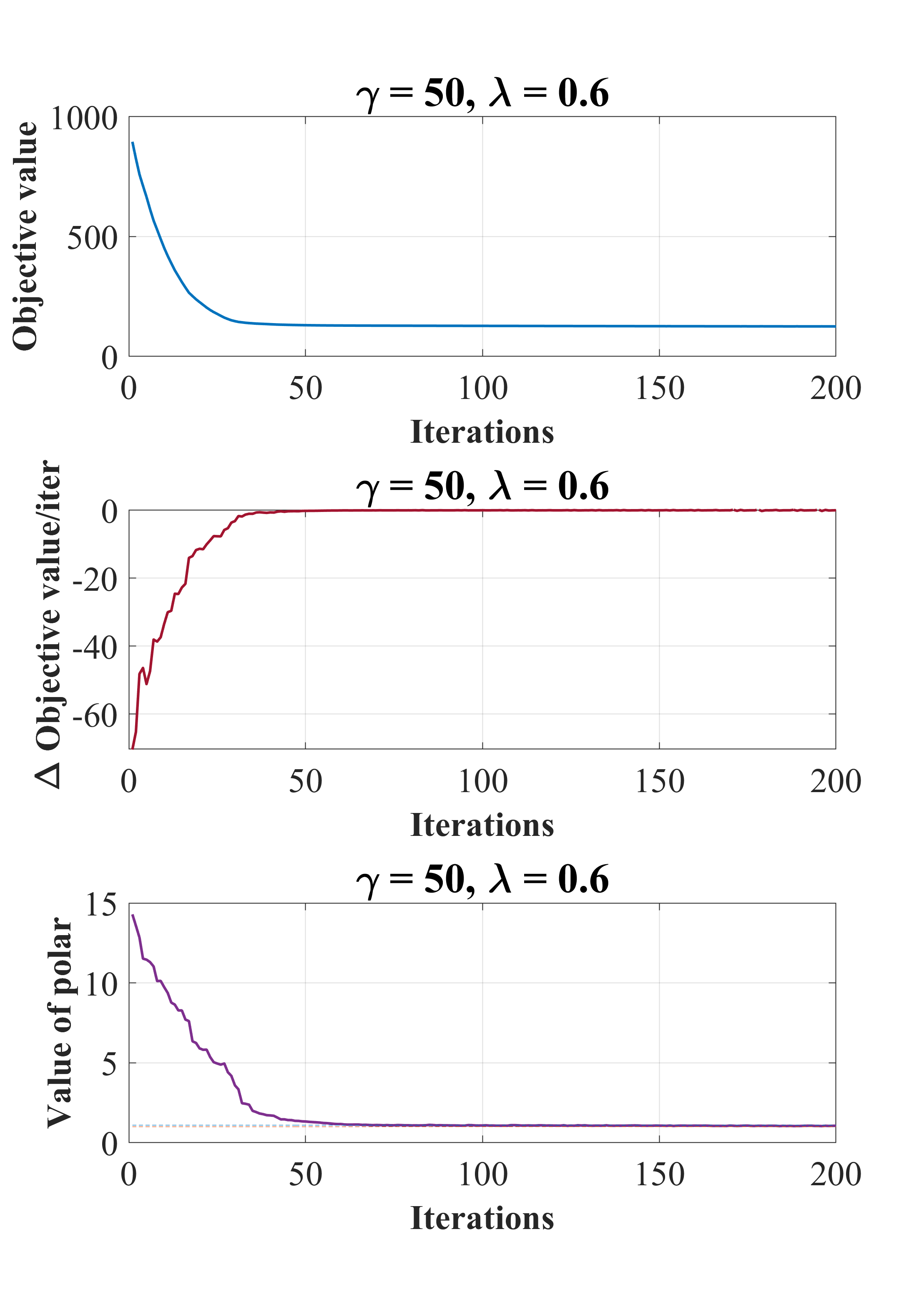}
    \caption{Objective value, rate of change of objective value per iteration and the value of polar after each iteration of the meta-algorithm for the simulation in subsection \textbf{Characterizing Multi-Segment Transmission Lines} \S \ref{sec:results}.}
    \label{fig:polar_objective}
\end{figure}

We observe in Figure \ref{fig:polar_objective} that the change of objective value is almost zero after 70 iterations. At this point, the polar value also goes below 1.1 (which as we note in the main paper also implies we are close to the global minimum) and eventually reaches 1, providing a certificate of global optimality. %Below this the value, the polar decreases slow considerably. Below we show how the objective value and polar reduces (to 1 in case of polar) for the following cases:
%\begin{enumerate}
 %   \item $\gamma = 50$, $\lambda = 1.6$ (Table \ref{table:wave_1} )
  %  \item $\gamma = 50$, $\lambda = 3.7$ (Table \ref{table:wave_2} )
   % \item $\gamma = 50$, $\lambda = 5$. (Table \ref{table:wave_3} )
%\end{enumerate}
In practice, we often stop the algorithm after the polar value reaches below 1.1 as this guarantees a very close to optimal solution. %We also observe that the objective function value curve also attains saturation.

\begin{figure}
    \centering
    \includegraphics[width=0.95\textwidth]{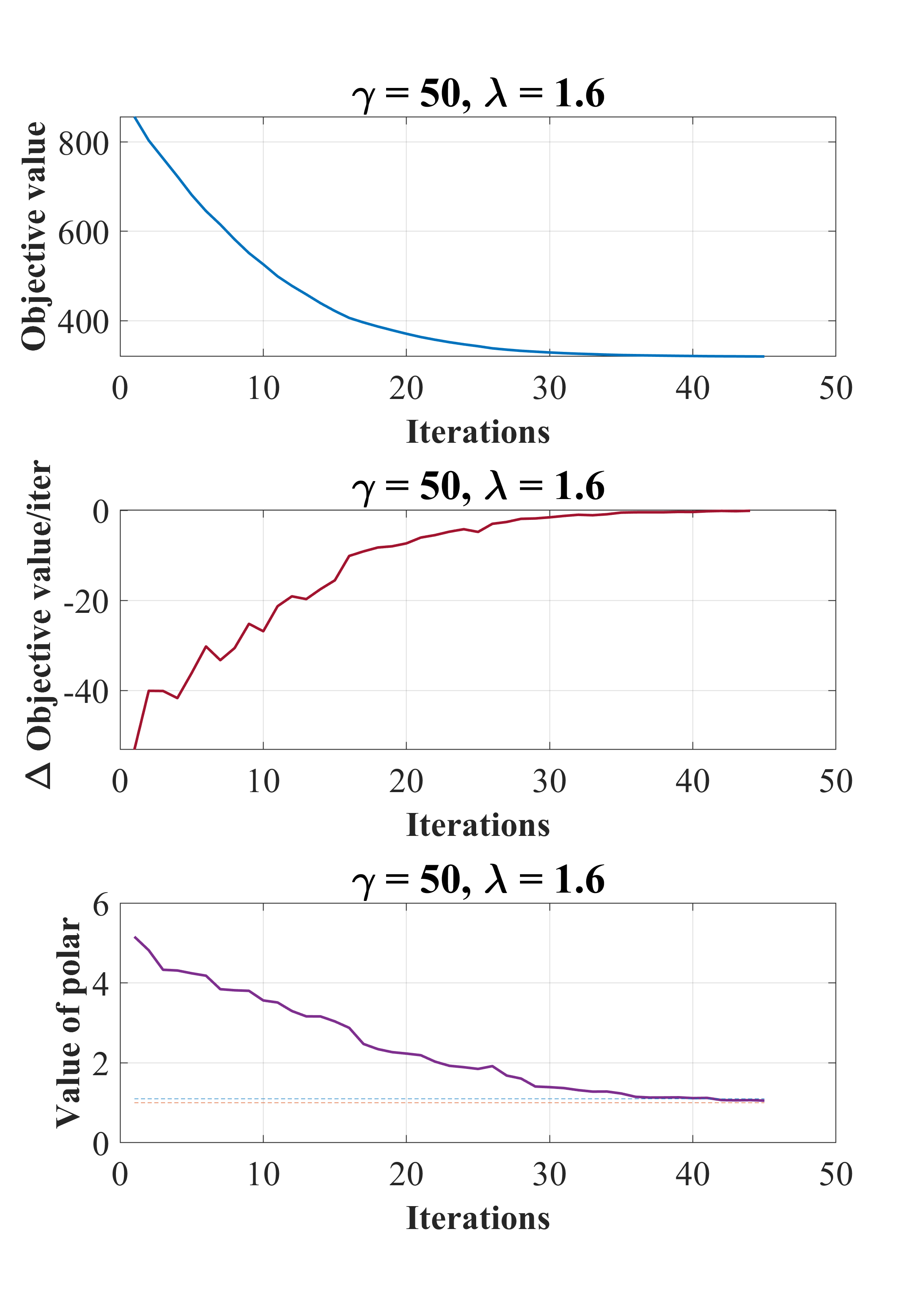}
    \caption{Objective value, rate of change of objective value per iteration and the value of polar after each iteration of the meta-algorithm for the simulation in \textbf{Characterizing Multi-Segment Transmission Lines} \S \ref{sec:results} with values of regularization parameters $\gamma = 50$, $\lambda = 1.6$}
    \label{fig:workflow_a}
\end{figure}
\begin{figure}
    \centering
    \includegraphics[width=0.95\textwidth]{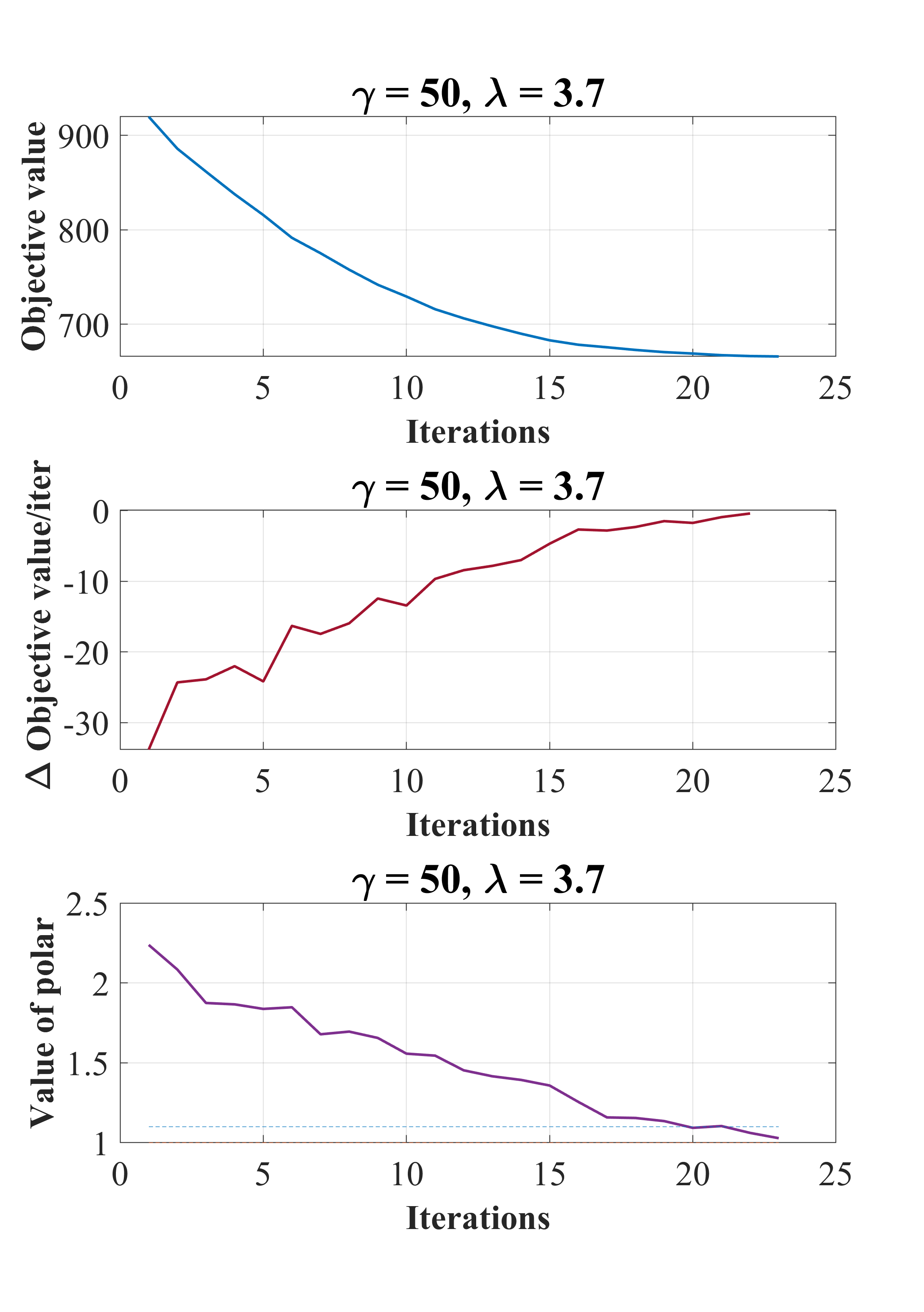}
    \caption{Objective value, rate of change of objective value per iteration and the value of polar after each iteration of the meta-algorithm for the simulation in \textbf{Characterizing Multi-Segment Transmission Lines} \S \ref{sec:results} with values of regularization parameters $\gamma = 50$, $\lambda = 3.7$}
    \label{fig:workflow_b}
\end{figure}
\begin{figure}
    \centering
   \includegraphics[width=0.95\textwidth]{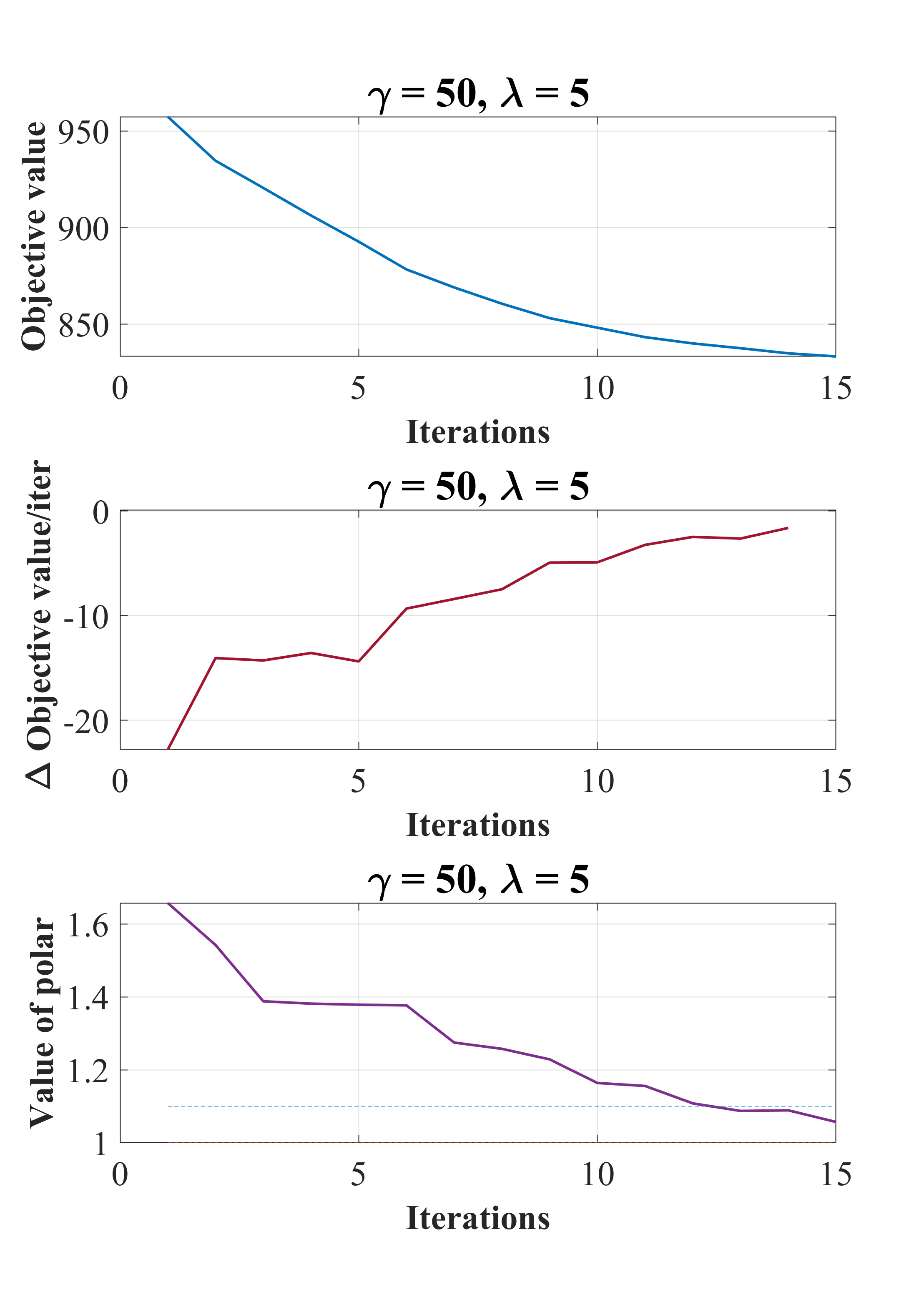}
    \caption{Objective value, rate of change of objective value per iteration and the value of polar after each iteration of the meta-algorithm for the simulation in \textbf{Characterizing Multi-Segment Transmission Lines} \S \ref{sec:results} with values of regularization parameters $\gamma = 50$, $\lambda = 5$}
    \label{fig:workflow_c}
\end{figure}

% \begin{figure}
%   \subfigure[]{%
%       \includegraphics[width=0.3\textwidth]{figs/polar_obj_50_lamb_1_6.png}}
% \hspace{\fill}
%   \subfigure[]{%
%       \includegraphics[width=0.3\textwidth]{figs/polar_obj_50_lamb_3_7.png}}
% \hspace{\fill}
%   \subfigure[]{%
%       \includegraphics[width=0.3\textwidth]{figs/polar_obj_50_lamb_5.png}}\\
% \caption{\label{workflow}Objective value, rate of change of objective value per iteration and the value of polar after each iteration of the meta-algorithm for the simulation in \textbf{Characterizing Multi-Segment Transmission Lines} \S \ref{sec:results} with different values of regularization parameters. (a) $\gamma = 50$, $\lambda = 1.6$; (b) $\gamma = 50$, $\lambda = 3.7$ (c) $\gamma = 50$, $\lambda = 5$}
% \end{figure}

In addition to these optimization results, we also show additional examples of columns of $\D$ for different values of $\gamma$ ($500$ and $5000$) with $\lambda$ fixed at 0.6 to show the variation of the columns of $\D$ with $\gamma$.

In Figures~\ref{dictatoms_multi_a} and ~\ref{dictatoms_multi_b}, we observe that with increasing $\gamma$, the performance of the algorithm in distinguishing the two regions reduces: energy in the dictionary column is no longer confined to the segment containing the same material, it is distributed over the entire column. This can also be observed quantitatively from comparing the partitioned normalized energies (energy on each of the two segments, partitioned by the violet line, of the normalized columns of $\D$) in Figures~\ref{energies_a},~\ref{energies_b} and Figures~\ref{dictatoms_multi_a},\ref{dictatoms_multi_b}.

\begin{figure}[ht!]
    \centering
    \includegraphics[width=0.65\textwidth]{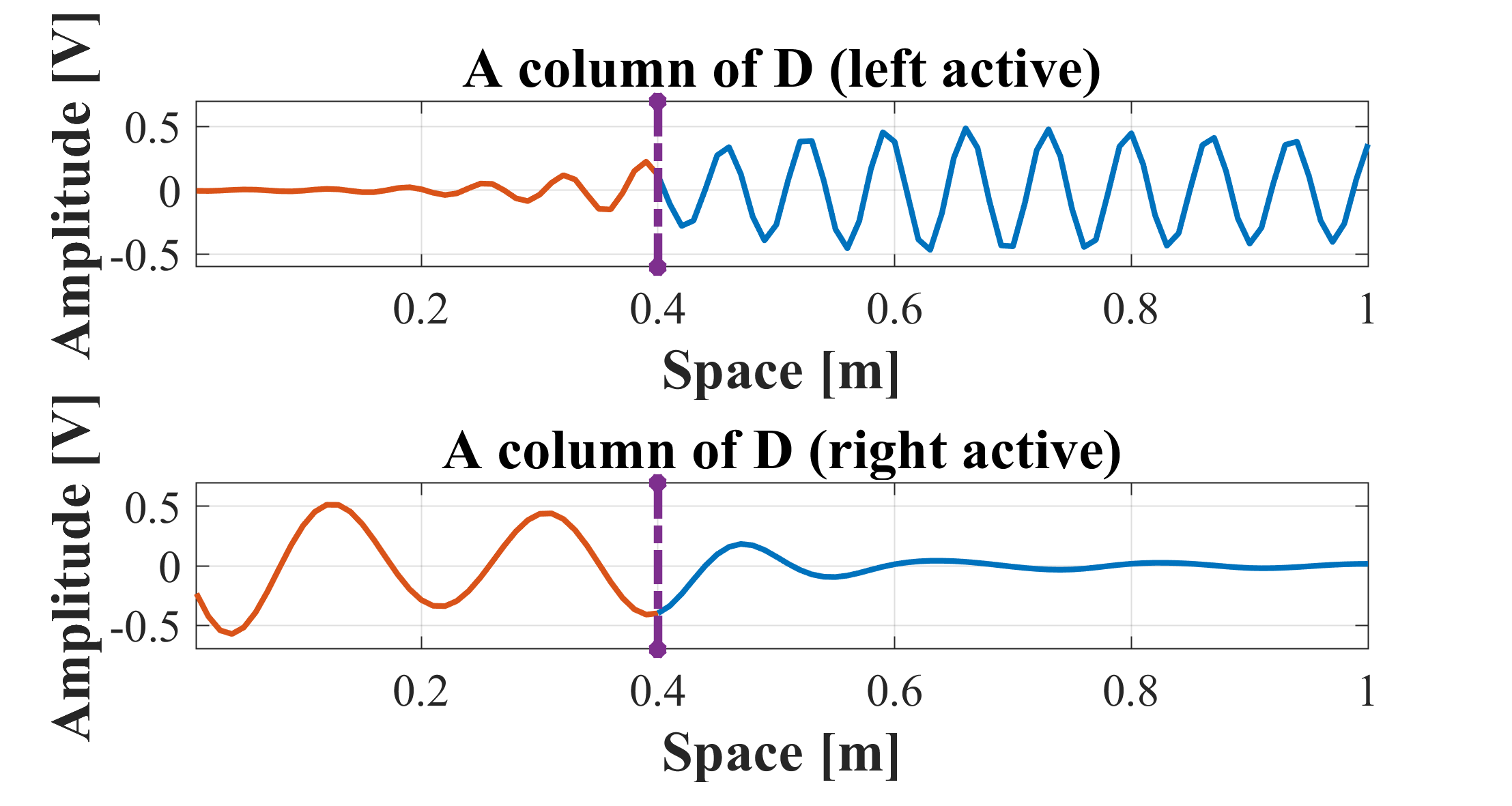}
    \caption{Two Columns of $\D$ obtained from wave-informed matrix factorization when $\gamma = 500$ $\lambda = 0.6$.}
    \label{dictatoms_multi_a}
\end{figure}

\begin{figure}[ht!]
    \centering
    \includegraphics[width=0.65\textwidth]{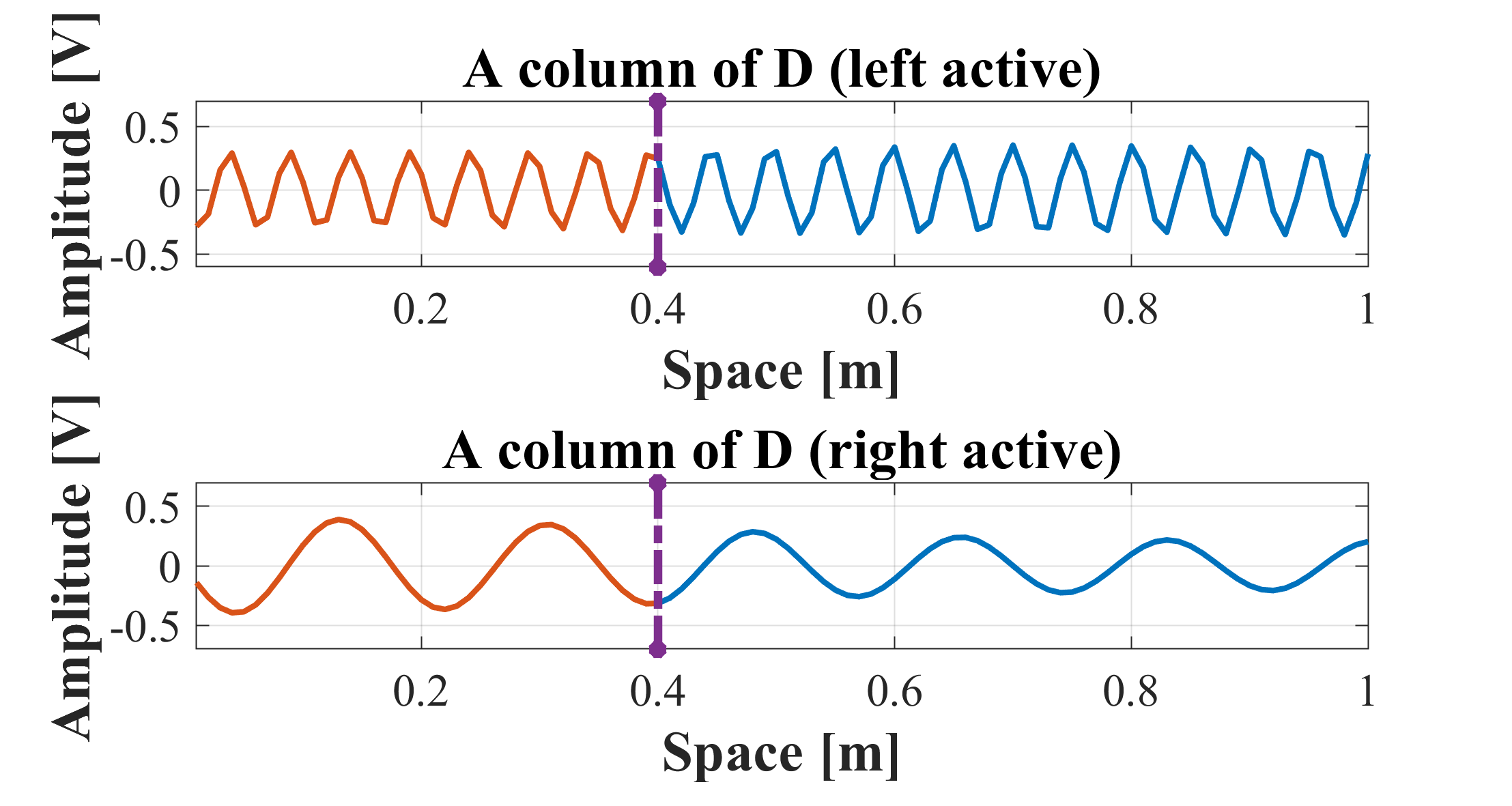}
    \caption{Two Columns of $\D$ obtained from wave-informed matrix factorization when $\gamma = 5000$ $\lambda = 0.6$.}
    \label{dictatoms_multi_b}
\end{figure}

\begin{figure}[ht!]
    \centering
   \includegraphics[width=0.65\textwidth]{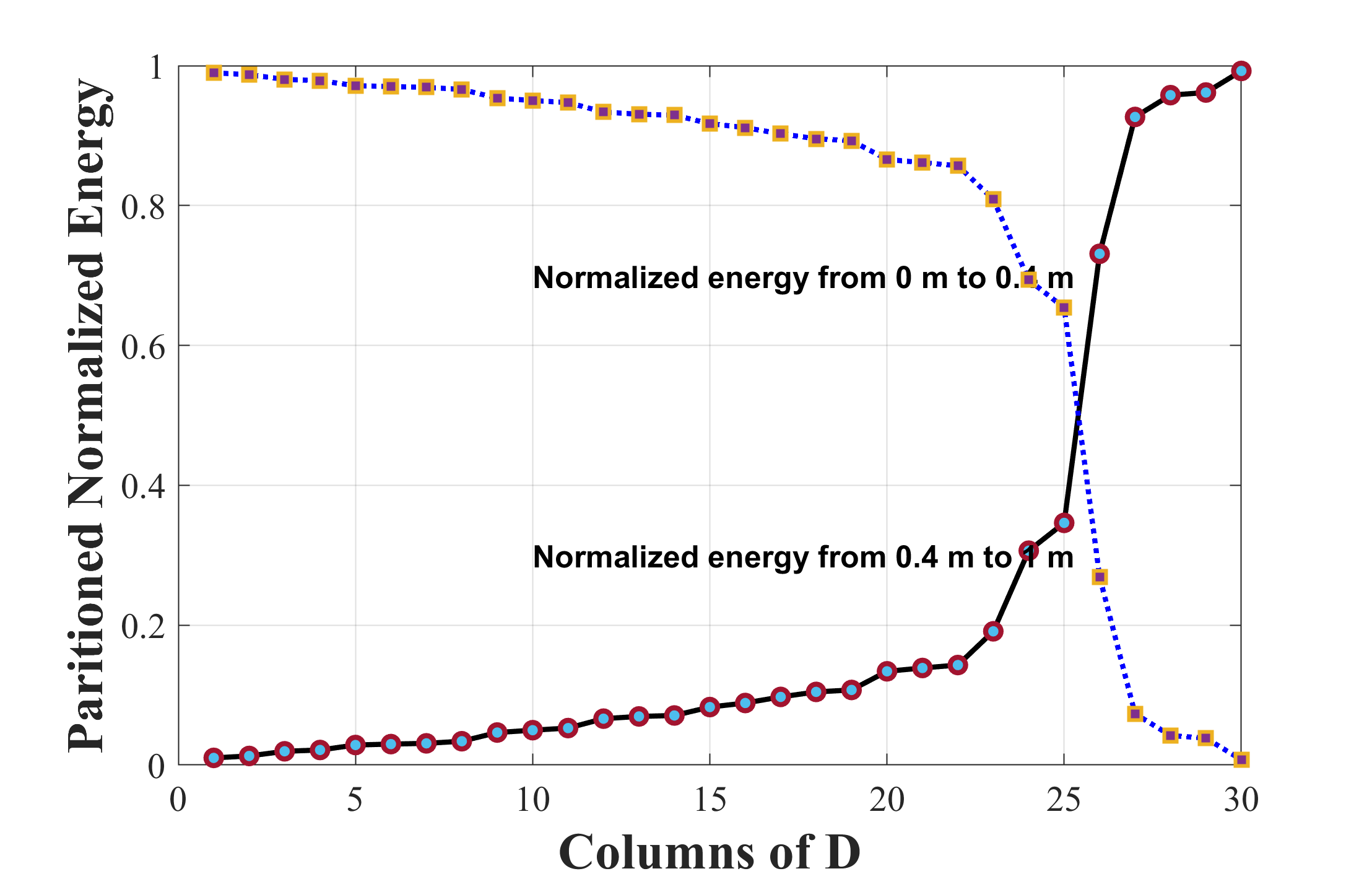}
    \caption{Two Columns of $\D$ obtained from wave-informed matrix factorization when $\gamma = 500$ $\lambda = 0.6$.}
    \label{energies_a}
\end{figure}

\begin{figure}[ht!]
    \centering
    \includegraphics[width=0.65\textwidth]{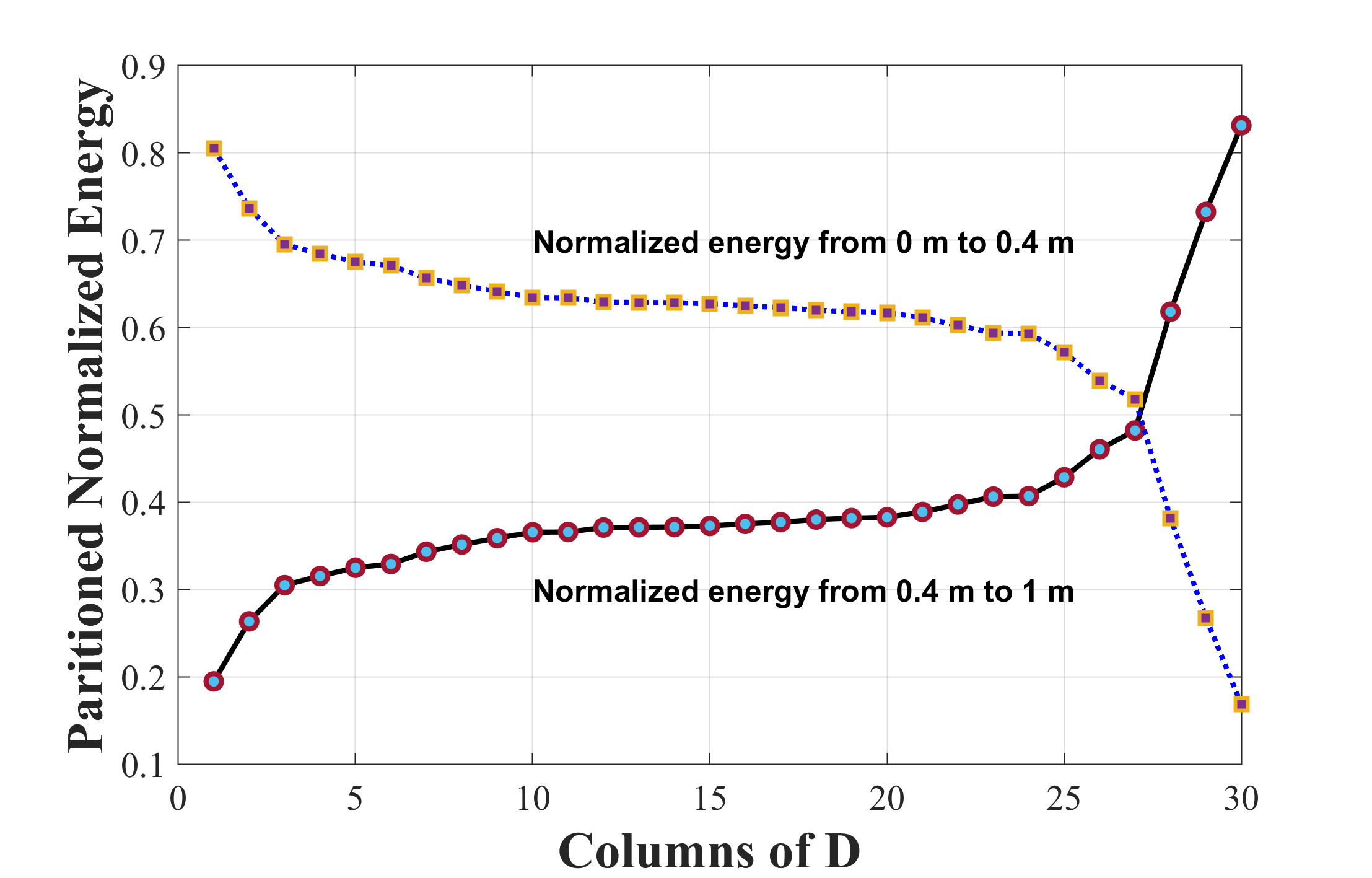}
    \caption{Two Columns of $\D$ obtained from wave-informed matrix factorization when $\gamma = 5000$ $\lambda = 0.6$.}
    \label{energies_b}
\end{figure}

\subsection{A Vibrating String}
For the experiment described in subsection \textbf{A Fixed Vibrating String} of \S \ref{sec:results}, we show the squared difference between the actual matrix $\D$ and the one estimated from the algorithm in Table \ref{table:wave_1}, Table \ref{table:wave_2}, and Table \ref{table:wave_3} (for different scenarios). We can calculate the percentage error as
\begin{gather}
 \% \textnormal{error} = \frac{\| \hat{\D} - \D \|_F}{\|\D\|_F} \times 100
\end{gather}
where $\hat{\D}$ is the matrix estimated from the algorithm after permuting the columns to optimally align with the ground-truth $\D$, zero-padding $\D$ or $\hat{\D}$ (whichever has fewer columns) so that the two matrices have the same number of columns, and normalizing the non-zero columns of $\hat{\D}$ to have unit $\ell_2$ norm.% we and $\D$ is the actual matrix in its normalized form.

Since the matrix is normalized and the actual $\D$ matrix contains 10 columns (corresponding to 10 modes in the data simulated), we know that $\| \D \|_F = 10$. We define error  $\| \hat{\D} - \D \|^2_F$ of the order of $0.01$ ($1 \%$) to be desirable. We bold error of this order in the tables. Table~\ref{table:wave_1} describes the performance as a function of regularization as described in \eqref{eqn:wave_equ} with amplitude of order 10 and noise variance of $0.1$. We see a wide range of values of $\gamma$ and $\lambda$ satisfy our error criteria.

Table~\ref{table:wave_2} describes the performance as a function of regularization as described by the damped vibrations in \eqref{eqn:damped_wave_time} with amplitude of order 10 and noise variance of $0.1$. We see that the error values satisfy our chosen criteria beyond a particular value of regularization $\gamma$ ($10^5$) and are also limited in the range of $\lambda$. The range of $\lambda$ that satisfies the chosen error condition also reduces with increasing values of $\gamma$.

Table~\ref{table:wave_3} describes the performance as a function of regularization as described in \eqref{eqn:wave_equ} with amplitude of order 10 and noise variance of 10. We observe that a narrow range of regularization values satisfy the error criteria. We thus clearly see the effect of noise in this case. Based on our observations, we believe the large regularization values over fit to the noise with high frequency components, resulting in poor performance. We believe low regularization values do not enforce enough of the wave-constraint, resulting in noisy components.

In Figures \ref{fig:accommodations_a}, \ref{fig:accommodations_b}, we show columns of $\D$ obtained from low-rank factorization and wave-informed matrix factorization with a signal amplitude of order $10$ and noise variance of $10$. For the low-rank factorization, we choose $\lambda = 2592$ and for wave-informed matrix factorization we choose $\gamma = 10^7$ and $\lambda = 2200$ ($\gamma$ and $\lambda$ are chosen corresponding to the minimum value in Table~\ref{table:wave_3}). The columns of $\D$ obtained from low-rank matrix factorization in Figure~\ref{fig:accommodations_a} are noisy and not always decipherable as sinusoids. In contrast, columns of $\D$ obtained from wave-informed matrix factorization in Figure \ref{fig:accommodations_b} on the other hand are completely noiseless and have a clearly measurable wavenumber.

For the case of \eqref{eqn:damped_wave_space}, when $R=6$, we show an approximate recovery of all the modes in Figure~\ref{fig:damped_sines_all}.

\setlength{\arrayrulewidth}{0.65mm}
\setlength{\tabcolsep}{4pt}
\begin{table}[H]

\captionof{table}{Squared Frobenius norm of the difference between actual $\D$ and the one obtained from Wave-informed matrix factorization with noise of variance 0.1 and data of the form \eqref{eqn:wave_equ}}
\begin{tabular}{@{}|r|r|r|r|r|r|r|r|r|r|r|r|r|@{}}
\hline
\multicolumn{1}{|c|}{} & \multicolumn{12}{|c|}{Regularization $\left(\gamma\right)$} \\
\hline
\multicolumn{1}{|l|}{$\lambda$} & \multicolumn{1}{l|}{$10^1$} & \multicolumn{1}{l|}{$10^{2}$} & \multicolumn{1}{l|}{$10^3$} & \multicolumn{1}{l|}{$10^4$} & \multicolumn{1}{l|}{$10^5$} & \multicolumn{1}{l|}{$10^6$} & \multicolumn{1}{l|}{$10^7$} & \multicolumn{1}{l|}{$10^8$} & \multicolumn{1}{l|}{$10^9$} & \multicolumn{1}{l|}{$10^{10}$} & \multicolumn{1}{l|}{$10^{11}$} & \multicolumn{1}{l|}{$10^{12}$} \\ \hline
200                          & \cellcolor[HTML]{FFD666}\textbf{0.0040}    & \cellcolor[HTML]{FFD666}\textbf{0.0040}    & \cellcolor[HTML]{FFD666}\textbf{0.0041}    & \cellcolor[HTML]{FFD666}\textbf{0.0041}    & \cellcolor[HTML]{FFD666}\textbf{0.0041}    & \cellcolor[HTML]{FFD666}\textbf{0.0038}    & \cellcolor[HTML]{FFD666}\textbf{0.0035}    & \cellcolor[HTML]{FFDC7F}1.0023             & \cellcolor[HTML]{FFDC7F}1.0013             & \cellcolor[HTML]{FFDC7F}1.0013                  & \cellcolor[HTML]{FFD666}\textbf{0.0013}         & \cellcolor[HTML]{FFD666}\textbf{0.0013}         \\ \hline
300                          & \cellcolor[HTML]{FFD666}\textbf{0.0040}    & \cellcolor[HTML]{FFD666}\textbf{0.0040}    & \cellcolor[HTML]{FFD666}\textbf{0.0041}    & \cellcolor[HTML]{FFD666}\textbf{0.0041}    & \cellcolor[HTML]{FFD666}\textbf{0.0041}    & \cellcolor[HTML]{FFD666}\textbf{0.0038}    & \cellcolor[HTML]{FFD666}\textbf{0.0035}    & \cellcolor[HTML]{FFDC7F}1.0023             & \cellcolor[HTML]{FFDC7F}1.0013             & \cellcolor[HTML]{FFD666}\textbf{0.0013}         & \cellcolor[HTML]{FFD666}\textbf{0.0013}         & \cellcolor[HTML]{FFD666}\textbf{0.0013}         \\ \hline
400                          & \cellcolor[HTML]{FFD666}\textbf{0.0040}    & \cellcolor[HTML]{FFD666}\textbf{0.0040}    & \cellcolor[HTML]{FFD666}\textbf{0.0041}    & \cellcolor[HTML]{FFD666}\textbf{0.0041}    & \cellcolor[HTML]{FFD666}\textbf{0.0041}    & \cellcolor[HTML]{FFD666}\textbf{0.0038}    & \cellcolor[HTML]{FFD666}\textbf{0.0035}    & \cellcolor[HTML]{FFEAB4}3.0598             & \cellcolor[HTML]{FFD666}\textbf{0.0013}    & \cellcolor[HTML]{FFD666}\textbf{0.0013}         & \cellcolor[HTML]{FFD666}\textbf{0.0013}         & \cellcolor[HTML]{FFD666}\textbf{0.0013}         \\ \hline
500                          & \cellcolor[HTML]{FFD666}\textbf{0.0040}    & \cellcolor[HTML]{FFD666}\textbf{0.0040}    & \cellcolor[HTML]{FFD666}\textbf{0.0041}    & \cellcolor[HTML]{FFD666}\textbf{0.0041}    & \cellcolor[HTML]{FFD666}\textbf{0.0041}    & \cellcolor[HTML]{FFD666}\textbf{0.0038}    & \cellcolor[HTML]{FFDC7F}1.0035             & \cellcolor[HTML]{FFEAB3}3.0438             & \cellcolor[HTML]{FFD666}\textbf{0.0013}    & \cellcolor[HTML]{FFD666}\textbf{0.0013}         & \cellcolor[HTML]{FFD666}\textbf{0.0013}         & \cellcolor[HTML]{FFD666}\textbf{0.0013}         \\ \hline
600                          & \cellcolor[HTML]{FFD666}\textbf{0.0040}    & \cellcolor[HTML]{FFD666}\textbf{0.0040}    & \cellcolor[HTML]{FFD666}\textbf{0.0041}    & \cellcolor[HTML]{FFD666}\textbf{0.0041}    & \cellcolor[HTML]{FFD666}\textbf{0.0041}    & \cellcolor[HTML]{FFD666}\textbf{0.0038}    & \cellcolor[HTML]{FFD666}\textbf{0.0035}    & \cellcolor[HTML]{FFEAB3}3.0342             & \cellcolor[HTML]{FFD666}\textbf{0.0013}    & \cellcolor[HTML]{FFD666}\textbf{0.0013}         & \cellcolor[HTML]{FFD666}\textbf{0.0013}         & \cellcolor[HTML]{FFD666}\textbf{0.0013}         \\ \hline
700                          & \cellcolor[HTML]{FFD666}\textbf{0.0040}    & \cellcolor[HTML]{FFD666}\textbf{0.0040}    & \cellcolor[HTML]{FFD666}\textbf{0.0041}    & \cellcolor[HTML]{FFD666}\textbf{0.0041}    & \cellcolor[HTML]{FFD666}\textbf{0.0041}    & \cellcolor[HTML]{FFD666}\textbf{0.0038}    & \cellcolor[HTML]{FFD666}\textbf{0.0035}    & \cellcolor[HTML]{FFE399}2.0015             & \cellcolor[HTML]{FFD666}\textbf{0.0013}    & \cellcolor[HTML]{FFD666}\textbf{0.0013}         & \cellcolor[HTML]{FFD666}\textbf{0.0013}         & \cellcolor[HTML]{FFDC7F}1.0011                  \\ \hline
800                          & \cellcolor[HTML]{FFD666}\textbf{0.0040}    & \cellcolor[HTML]{FFD666}\textbf{0.0040}    & \cellcolor[HTML]{FFD666}\textbf{0.0041}    & \cellcolor[HTML]{FFD666}\textbf{0.0041}    & \cellcolor[HTML]{FFD666}\textbf{0.0041}    & \cellcolor[HTML]{FFD666}\textbf{0.0038}    & \cellcolor[HTML]{FFD666}\textbf{0.0035}    & \cellcolor[HTML]{FFDC7F}1.0014             & \cellcolor[HTML]{FFD666}\textbf{0.0013}    & \cellcolor[HTML]{FFD666}\textbf{0.0013}         & \cellcolor[HTML]{FFD666}\textbf{0.0013}         & \cellcolor[HTML]{FFDC7F}1.0011                  \\ \hline
900                          & \cellcolor[HTML]{FFD666}\textbf{0.0040}    & \cellcolor[HTML]{FFD666}\textbf{0.0040}    & \cellcolor[HTML]{FFD666}\textbf{0.0041}    & \cellcolor[HTML]{FFD666}\textbf{0.0041}    & \cellcolor[HTML]{FFD666}\textbf{0.0041}    & \cellcolor[HTML]{FFD666}\textbf{0.0038}    & \cellcolor[HTML]{FFDC7F}1.0035             & \cellcolor[HTML]{FFDC7F}1.0013             & \cellcolor[HTML]{FFD666}\textbf{0.0013}    & \cellcolor[HTML]{FFD666}\textbf{0.0013}         & \cellcolor[HTML]{FFD666}\textbf{0.0013}         & \cellcolor[HTML]{FFDC7F}1.0011                  \\ \hline
1000                         & \cellcolor[HTML]{FFD666}\textbf{0.0040}    & \cellcolor[HTML]{FFD666}\textbf{0.0040}    & \cellcolor[HTML]{FFD666}\textbf{0.0041}    & \cellcolor[HTML]{FFD666}\textbf{0.0041}    & \cellcolor[HTML]{FFD666}\textbf{0.0041}    & \cellcolor[HTML]{FFD666}\textbf{0.0038}    & \cellcolor[HTML]{FFDC7F}1.0035             & \cellcolor[HTML]{FFDC7F}1.0013             & \cellcolor[HTML]{FFD666}\textbf{0.0013}    & \cellcolor[HTML]{FFD666}\textbf{0.0013}         & \cellcolor[HTML]{FFD666}\textbf{0.0013}         & \cellcolor[HTML]{FFE398}2.0009                  \\ \hline
1100                         & \cellcolor[HTML]{FFD666}\textbf{0.0040}    & \cellcolor[HTML]{FFD666}\textbf{0.0040}    & \cellcolor[HTML]{FFD666}\textbf{0.0041}    & \cellcolor[HTML]{FFD666}\textbf{0.0041}    & \cellcolor[HTML]{FFD666}\textbf{0.0041}    & \cellcolor[HTML]{FFD666}\textbf{0.0038}    & \cellcolor[HTML]{FFDC7F}1.0035             & \cellcolor[HTML]{FFDC7F}1.0014             & \cellcolor[HTML]{FFD666}\textbf{0.0013}    & \cellcolor[HTML]{FFD666}\textbf{0.0013}         & \cellcolor[HTML]{FFD666}\textbf{0.0013}         & \cellcolor[HTML]{FFEAB2}3.0009                  \\ \hline
1400                         & \cellcolor[HTML]{FFD666}\textbf{0.0040}    & \cellcolor[HTML]{FFD666}\textbf{0.0040}    & \cellcolor[HTML]{FFD666}\textbf{0.0041}    & \cellcolor[HTML]{FFD666}\textbf{0.0041}    & \cellcolor[HTML]{FFD666}\textbf{0.0041}    & \cellcolor[HTML]{FFD666}\textbf{0.0038}    & \cellcolor[HTML]{FFDC7F}1.0035             & \cellcolor[HTML]{FFEAB2}2.9963             & \cellcolor[HTML]{FFD666}\textbf{0.0013}    & \cellcolor[HTML]{FFD666}\textbf{0.0013}         & \cellcolor[HTML]{FFD666}\textbf{0.0013}         & \cellcolor[HTML]{FFF8E5}5.0005                  \\ \hline
1800                         & \cellcolor[HTML]{FFD666}\textbf{0.0040}    & \cellcolor[HTML]{FFD666}\textbf{0.0040}    & \cellcolor[HTML]{FFD666}\textbf{0.0041}    & \cellcolor[HTML]{FFD666}\textbf{0.0041}    & \cellcolor[HTML]{FFD666}\textbf{0.0041}    & \cellcolor[HTML]{FFD666}\textbf{0.0038}    & \cellcolor[HTML]{FFDC7F}1.0035             & \cellcolor[HTML]{FFDC7F}1.0019             & \cellcolor[HTML]{FFD666}\textbf{0.0013}    & \cellcolor[HTML]{FFD666}\textbf{0.0013}         & \cellcolor[HTML]{FFD666}\textbf{0.0013}         & \cellcolor[HTML]{FFFEFE}6.0004                  \\ \hline
2000                         & \cellcolor[HTML]{FFD666}\textbf{0.0040}    & \cellcolor[HTML]{FFD666}\textbf{0.0040}    & \cellcolor[HTML]{FFD666}\textbf{0.0041}    & \cellcolor[HTML]{FFD666}\textbf{0.0041}    & \cellcolor[HTML]{FFD666}\textbf{0.0041}    & \cellcolor[HTML]{FFD666}\textbf{0.0038}    & \cellcolor[HTML]{FFDC7F}1.0035             & \cellcolor[HTML]{FFDC7F}1.0018             & \cellcolor[HTML]{FFD666}\textbf{0.0013}    & \cellcolor[HTML]{FFD666}\textbf{0.0013}         & \cellcolor[HTML]{FFDC7F}1.0011                  & \cellcolor[HTML]{FFFEFE}6.0004                  \\ \hline
2100                         & \cellcolor[HTML]{FFD666}\textbf{0.0040}    & \cellcolor[HTML]{FFD666}\textbf{0.0040}    & \cellcolor[HTML]{FFD666}\textbf{0.0041}    & \cellcolor[HTML]{FFD666}\textbf{0.0041}    & \cellcolor[HTML]{FFD666}\textbf{0.0041}    & \cellcolor[HTML]{FFD666}\textbf{0.0038}    & \cellcolor[HTML]{FFDC7F}\textbf{1.0035}    & \cellcolor[HTML]{FFDC7F}1.0017             & \cellcolor[HTML]{FFD666}\textbf{0.0013}    & \cellcolor[HTML]{FFD666}\textbf{0.0013}         & \cellcolor[HTML]{FFDC7F}1.0011                  & \cellcolor[HTML]{FFFEFE}6.0004                     \\ \hline
2200                         & \cellcolor[HTML]{FFD666}\textbf{0.0040}    & \cellcolor[HTML]{FFD666}\textbf{0.0040}    & \cellcolor[HTML]{FFD666}\textbf{0.0041}    & \cellcolor[HTML]{FFD666}\textbf{0.0041}    & \cellcolor[HTML]{FFD666}\textbf{0.0041}    & \cellcolor[HTML]{FFD666}\textbf{0.0038}    & \cellcolor[HTML]{FFDC7F}\textbf{1.0035}    & \cellcolor[HTML]{FFDC7F}1.0017             & \cellcolor[HTML]{FFD666}\textbf{0.0013}    & \cellcolor[HTML]{FFD666}\textbf{0.0013}         & \cellcolor[HTML]{FFDC7F}1.0011                  & \cellcolor[HTML]{FFFEFE}6.0004                  \\ \hline
\end{tabular}
\label{table:wave_1}
\end{table}

% Please add the following required packages to your document preamble:
% \usepackage[table,xcdraw]{xcolor}
% If you use beamer only pass "xcolor=table" option, i.e. \documentclass[xcolor=table]{beamer}
\setlength{\arrayrulewidth}{0.65mm}
\setlength{\tabcolsep}{4pt}
\begin{table}[H]
\captionof{table}{Squared Frobenius norm of the difference between actual $\D$ and the one obtained from Wave-informed matrix factorization with noise of variance 0.1 and data of the form \eqref{eqn:damped_wave_time}}
\begin{tabular}{@{}|r|r|r|r|r|r|r|r|r|r|r|r|r|@{}}
\hline
\multicolumn{1}{|c|}{} & \multicolumn{12}{|c|}{Regularization $\left(\gamma\right)$} \\
\hline
\multicolumn{1}{|l|}{$\lambda$} & \multicolumn{1}{l|}{$10^1$} & \multicolumn{1}{l|}{$10^{2}$} & \multicolumn{1}{l|}{$10^3$} & \multicolumn{1}{l|}{$10^4$} & \multicolumn{1}{l|}{$10^5$} & \multicolumn{1}{l|}{$10^6$} & \multicolumn{1}{l|}{$10^7$} & \multicolumn{1}{l|}{$10^8$} & \multicolumn{1}{l|}{$10^9$} & \multicolumn{1}{l|}{$10^{10}$} & \multicolumn{1}{l|}{$10^{11}$} & \multicolumn{1}{l|}{$10^{12}$} \\ \hline
200                          & \cellcolor[HTML]{FFD76C}0.2846             & \cellcolor[HTML]{FFD76C}0.2845             & \cellcolor[HTML]{FFD76C}0.2829             & \cellcolor[HTML]{FFD76B}0.2655             & \cellcolor[HTML]{FFD669}0.1606             & \cellcolor[HTML]{FFD667}\textbf{0.0521}    & \cellcolor[HTML]{FFD666}\textbf{0.0165}    & \cellcolor[HTML]{FFD666}\textbf{0.0018}    & \cellcolor[HTML]{FFD666}\textbf{0.0013}    & \cellcolor[HTML]{FFD666}\textbf{0.0013}         & \cellcolor[HTML]{FFDB7B}1.0011                  & \cellcolor[HTML]{FFE7A7}3.0006                  \\ \hline
300                          & \cellcolor[HTML]{FFD76C}0.2846             & \cellcolor[HTML]{FFD76C}0.2845             & \cellcolor[HTML]{FFD76C}0.2829             & \cellcolor[HTML]{FFD76B}0.2655             & \cellcolor[HTML]{FFD669}0.1606             & \cellcolor[HTML]{FFD667}\textbf{0.0521}    & \cellcolor[HTML]{FFD666}\textbf{0.0165}    & \cellcolor[HTML]{FFD666}\textbf{0.0018}    & \cellcolor[HTML]{FFD666}\textbf{0.0013}    & \cellcolor[HTML]{FFD666}\textbf{0.0013}         & \cellcolor[HTML]{FFDB7B}1.0011                  & \cellcolor[HTML]{FFEDBD}4.0005                  \\ \hline
400                          & \cellcolor[HTML]{FFD76C}0.2846             & \cellcolor[HTML]{FFD76C}0.2845             & \cellcolor[HTML]{FFD76C}0.2829             & \cellcolor[HTML]{FFD76B}0.2655             & \cellcolor[HTML]{FFD669}0.1606             & \cellcolor[HTML]{FFD667}\textbf{0.0521}    & \cellcolor[HTML]{FFD666}\textbf{0.0165}    & \cellcolor[HTML]{FFD666}\textbf{0.0018}    & \cellcolor[HTML]{FFD666}\textbf{0.0013}    & \cellcolor[HTML]{FFD666}\textbf{0.0013}         & \cellcolor[HTML]{FFDB7B}1.0011                  & \cellcolor[HTML]{FFF3D3}5.0005                  \\ \hline
500                          & \cellcolor[HTML]{FFD76C}0.2846             & \cellcolor[HTML]{FFD76C}0.2845             & \cellcolor[HTML]{FFD76C}0.2829             & \cellcolor[HTML]{FFD76B}0.2655             & \cellcolor[HTML]{FFD669}0.1606             & \cellcolor[HTML]{FFD667}\textbf{0.0521}    & \cellcolor[HTML]{FFD666}\textbf{0.0165}    & \cellcolor[HTML]{FFD666}\textbf{0.0018}    & \cellcolor[HTML]{FFD666}\textbf{0.0013}    & \cellcolor[HTML]{FFD666}\textbf{0.0013}         & \cellcolor[HTML]{FFDB7B}1.0011                  & \cellcolor[HTML]{FFF9E9}6.0004                  \\ \hline
600                          & \cellcolor[HTML]{FFD76C}0.2846             & \cellcolor[HTML]{FFD76C}0.2845             & \cellcolor[HTML]{FFD76C}0.2829             & \cellcolor[HTML]{FFD76B}0.2655             & \cellcolor[HTML]{FFD669}0.1606             & \cellcolor[HTML]{FFD667}\textbf{0.0521}    & \cellcolor[HTML]{FFD666}\textbf{0.0165}    & \cellcolor[HTML]{FFD666}\textbf{0.0018}    & \cellcolor[HTML]{FFD666}\textbf{0.0013}    & \cellcolor[HTML]{FFD666}\textbf{0.0013}         & \cellcolor[HTML]{FFE7A7}3.0006                  & \cellcolor[HTML]{FFF9E9}6.0004                  \\ \hline
700                          & \cellcolor[HTML]{FFD76C}0.2846             & \cellcolor[HTML]{FFD76C}0.2845             & \cellcolor[HTML]{FFD76C}0.2829             & \cellcolor[HTML]{FFD76B}0.2655             & \cellcolor[HTML]{FFD669}0.1606             & \cellcolor[HTML]{FFD667}\textbf{0.0521}    & \cellcolor[HTML]{FFD666}\textbf{0.0165}    & \cellcolor[HTML]{FFD666}\textbf{0.0018}    & \cellcolor[HTML]{FFD666}\textbf{0.0013}    & \cellcolor[HTML]{FFD666}\textbf{0.0013}         & \cellcolor[HTML]{FFE7A7}3.0006                  & \cellcolor[HTML]{FFF9E9}6.0004                  \\ \hline
800                          & \cellcolor[HTML]{FFD76C}0.2846             & \cellcolor[HTML]{FFD76C}0.2845             & \cellcolor[HTML]{FFD76C}0.2829             & \cellcolor[HTML]{FFD76B}0.2655             & \cellcolor[HTML]{FFD669}0.1606             & \cellcolor[HTML]{FFD667}\textbf{0.0521}    & \cellcolor[HTML]{FFD666}\textbf{0.0165}    & \cellcolor[HTML]{FFD666}\textbf{0.0018}    & \cellcolor[HTML]{FFD666}\textbf{0.0013}    & \cellcolor[HTML]{FFDB7B}1.0011                  & \cellcolor[HTML]{FFEDBD}4.0005                  & \cellcolor[HTML]{FFF9E9}6.0004                  \\ \hline
900                          & \cellcolor[HTML]{FFD76C}0.2846             & \cellcolor[HTML]{FFD76C}0.2845             & \cellcolor[HTML]{FFD76C}0.2829             & \cellcolor[HTML]{FFD76B}0.2655             & \cellcolor[HTML]{FFD669}0.1606             & \cellcolor[HTML]{FFD667}\textbf{0.0521}    & \cellcolor[HTML]{FFD666}\textbf{0.0165}    & \cellcolor[HTML]{FFD666}\textbf{0.0018}    & \cellcolor[HTML]{FFD666}\textbf{0.0013}    & \cellcolor[HTML]{FFDB7B}1.0011                  & \cellcolor[HTML]{FFEDBD}4.0005                  & \cellcolor[HTML]{FFF9E9}6.0004                  \\ \hline
1000                         & \cellcolor[HTML]{FFD76C}0.2846             & \cellcolor[HTML]{FFD76C}0.2845             & \cellcolor[HTML]{FFD76C}0.2829             & \cellcolor[HTML]{FFD76B}0.2655             & \cellcolor[HTML]{FFD669}0.1606             & \cellcolor[HTML]{FFD667}\textbf{0.0521}    & \cellcolor[HTML]{FFD666}\textbf{0.0165}    & \cellcolor[HTML]{FFD666}\textbf{0.0018}    & \cellcolor[HTML]{FFD666}\textbf{0.0013}    & \cellcolor[HTML]{FFDB7B}1.0011                  & \cellcolor[HTML]{FFEDBD}4.0005                  & \cellcolor[HTML]{FFF9E9}6.0004                  \\ \hline
1100                         & \cellcolor[HTML]{FFDD82}1.2836             & \cellcolor[HTML]{FFDD82}1.2835             & \cellcolor[HTML]{FFDD81}1.2819             & \cellcolor[HTML]{FFDD81}1.2644             & \cellcolor[HTML]{FFDC7F}1.1596             & \cellcolor[HTML]{FFDC7C}1.0513             & \cellcolor[HTML]{FFDB7C}1.0162             & \cellcolor[HTML]{FFDB7B}1.0016             & \cellcolor[HTML]{FFDB7B}1.0011             & \cellcolor[HTML]{FFDB7B}1.0011                  & \cellcolor[HTML]{FFEDBD}4.0005                  & \cellcolor[HTML]{FFF9E9}6.0004                  \\ \hline
1400                         & \cellcolor[HTML]{FFE397}2.2794             & \cellcolor[HTML]{FFE397}2.2793             & \cellcolor[HTML]{FFE397}2.2777             & \cellcolor[HTML]{FFE397}2.2602             & \cellcolor[HTML]{FFE295}2.1554             & \cellcolor[HTML]{FFE192}2.0478             & \cellcolor[HTML]{FFE192}2.0152             & \cellcolor[HTML]{FFE191}2.0013             & \cellcolor[HTML]{FFE191}2.0008             & \cellcolor[HTML]{FFEDBD}4.0003                  & \cellcolor[HTML]{FFFEFE}7.0002                  & \cellcolor[HTML]{FFFEFE}7.0002                  \\ \hline
1800                         & \cellcolor[HTML]{FFEDBE}4.0460             & \cellcolor[HTML]{FFEDBE}4.0460             & \cellcolor[HTML]{FFEDBE}4.0460             & \cellcolor[HTML]{FFEDBE}4.0459             & \cellcolor[HTML]{FFEDBE}4.0452             & \cellcolor[HTML]{FFEDBE}4.0387             & \cellcolor[HTML]{FFEDBD}4.0127             & \cellcolor[HTML]{FFEDBD}4.0008             & \cellcolor[HTML]{FFEDBD}4.0003             & \cellcolor[HTML]{FFEDBD}4.0003                  & \cellcolor[HTML]{FFFEFE}7.0002                  & \cellcolor[HTML]{FFFEFE}7.0002                  \\ \hline
2000                         & \cellcolor[HTML]{FFEDBE}4.0460             & \cellcolor[HTML]{FFEDBE}4.0460             & \cellcolor[HTML]{FFEDBE}4.0460             & \cellcolor[HTML]{FFEDBE}4.0459             & \cellcolor[HTML]{FFEDBE}4.0452             & \cellcolor[HTML]{FFEDBE}4.0387             & \cellcolor[HTML]{FFEDBD}4.0127             & \cellcolor[HTML]{FFEDBD}4.0008             & \cellcolor[HTML]{FFEDBD}4.0003             & \cellcolor[HTML]{FFF3D3}5.0002                  & \cellcolor[HTML]{FFFEFE}7.0002                  & \cellcolor[HTML]{FFFEFE}7.0002                  \\ \hline
2100                         & \cellcolor[HTML]{FFEDBE}4.0460             & \cellcolor[HTML]{FFEDBE}4.0460             & \cellcolor[HTML]{FFEDBE}4.0460             & \cellcolor[HTML]{FFEDBE}4.0459             & \cellcolor[HTML]{FFEDBE}4.0452             & \cellcolor[HTML]{FFEDBE}4.0387             & \cellcolor[HTML]{FFEDBD}4.0127             & \cellcolor[HTML]{FFEDBD}4.0008             & \cellcolor[HTML]{FFEDBD}4.0003             & \cellcolor[HTML]{FFF3D3}5.0002                  & \cellcolor[HTML]{FFFEFE}7.0002                     & \cellcolor[HTML]{FFFEFE}7.0002                  \\ \hline
2200                         & \cellcolor[HTML]{FFEDBE}4.0460             & \cellcolor[HTML]{FFEDBE}4.0460             & \cellcolor[HTML]{FFEDBE}4.0460             & \cellcolor[HTML]{FFEDBE}4.0459             & \cellcolor[HTML]{FFEDBE}4.0452             & \cellcolor[HTML]{FFEDBE}4.0387             & \cellcolor[HTML]{FFEDBD}4.0127             & \cellcolor[HTML]{FFEDBD}4.0008             & \cellcolor[HTML]{FFEDBD}4.0003             & \cellcolor[HTML]{FFF3D3}5.0002                  & \cellcolor[HTML]{FFFEFE}7.0002                  & \cellcolor[HTML]{FFFEFE}7.0002                  \\ \hline
\end{tabular}
\label{table:wave_2}
\end{table}

\setlength{\arrayrulewidth}{0.65mm}
\setlength{\tabcolsep}{4pt}

\begin{table}[H]
\captionof{table}{Squared Frobenius norm of the difference between actual $\D$ and the one obtained from Wave-informed matrix factorization with noise of variance 10 and data of the form \eqref{eqn:wave_equ}}
\begin{tabular}{@{}|r|r|r|r|r|r|r|r|r|r|r|r|r|@{}}
\hline
\multicolumn{1}{|c|}{} & \multicolumn{12}{|c|}{Regularization $\left(\gamma\right)$} \\
\hline
\multicolumn{1}{|l|}{$\lambda$} & \multicolumn{1}{l|}{$10^1$} & \multicolumn{1}{l|}{$10^{2}$} & \multicolumn{1}{l|}{$10^3$} & \multicolumn{1}{l|}{$10^4$} & \multicolumn{1}{l|}{$10^5$} & \multicolumn{1}{l|}{$10^6$} & \multicolumn{1}{l|}{$10^7$} & \multicolumn{1}{l|}{$10^8$} & \multicolumn{1}{l|}{$10^9$} & \multicolumn{1}{l|}{$10^{10}$} & \multicolumn{1}{l|}{$10^{11}$} & \multicolumn{1}{l|}{$10^{12}$} \\ \hline
200                          & \cellcolor[HTML]{FFFEFE}20.220                & \cellcolor[HTML]{FFFEFE}20.219             & \cellcolor[HTML]{FFFEFE}20.219             & \cellcolor[HTML]{FFFEFE}20.216             & \cellcolor[HTML]{FFFEFE}20.202             & \cellcolor[HTML]{FFFEFE}20.141             & \cellcolor[HTML]{FFFEFD}20.063             & \cellcolor[HTML]{FFFEFD}20.022             & \cellcolor[HTML]{FFFEFD}20.001             & \cellcolor[HTML]{FFFEFD}20.001                  & \cellcolor[HTML]{FFFEFD}20.001                  & \cellcolor[HTML]{FFE6A2}7.991                   \\ \hline
300                          & \cellcolor[HTML]{FFFEFE}20.220             & \cellcolor[HTML]{FFFEFE}20.219             & \cellcolor[HTML]{FFFEFE}20.219             & \cellcolor[HTML]{FFFEFE}20.216             & \cellcolor[HTML]{FFFEFE}20.202             & \cellcolor[HTML]{FFFEFE}20.141             & \cellcolor[HTML]{FFFEFD}20.063             & \cellcolor[HTML]{FFFEFD}20.022             & \cellcolor[HTML]{FFFEFD}20.001             & \cellcolor[HTML]{FFFEFD}20.001                  & \cellcolor[HTML]{FFFEFD}20.001                  & \cellcolor[HTML]{FFE49A}6.991                   \\ \hline
400                          & \cellcolor[HTML]{FFFEFE}20.220             & \cellcolor[HTML]{FFFEFE}20.219             & \cellcolor[HTML]{FFFEFE}20.219             & \cellcolor[HTML]{FFFEFE}20.216             & \cellcolor[HTML]{FFFEFE}20.202             & \cellcolor[HTML]{FFFEFE}20.141             & \cellcolor[HTML]{FFFEFD}20.063             & \cellcolor[HTML]{FFFEFD}20.022             & \cellcolor[HTML]{FFFEFD}20.001             & \cellcolor[HTML]{FFFEFD}20.001                  & \cellcolor[HTML]{FFFEFD}20.001                  & \cellcolor[HTML]{FFE49A}6.991                   \\ \hline
500                          & \cellcolor[HTML]{FFFEFE}20.220             & \cellcolor[HTML]{FFFEFE}20.219             & \cellcolor[HTML]{FFFEFE}20.219             & \cellcolor[HTML]{FFFEFE}20.216             & \cellcolor[HTML]{FFFEFE}20.202             & \cellcolor[HTML]{FFFEFE}20.141             & \cellcolor[HTML]{FFFEFD}20.063             & \cellcolor[HTML]{FFFEFD}20.022             & \cellcolor[HTML]{FFFEFD}20.001             & \cellcolor[HTML]{FFFEFD}20.001                  & \cellcolor[HTML]{FFFEFD}20.001                  & \cellcolor[HTML]{FFE49A}6.991                   \\ \hline
600                          & \cellcolor[HTML]{FFFEFE}20.220             & \cellcolor[HTML]{FFFEFE}20.219             & \cellcolor[HTML]{FFFEFE}20.219             & \cellcolor[HTML]{FFFEFE}20.216             & \cellcolor[HTML]{FFFEFE}20.202             & \cellcolor[HTML]{FFFEFE}20.141             & \cellcolor[HTML]{FFFEFD}20.063             & \cellcolor[HTML]{FFFEFD}20.022             & \cellcolor[HTML]{FFFEFD}20.001             & \cellcolor[HTML]{FFFEFD}20.001                  & \cellcolor[HTML]{FFEAB1}10.001                  & \cellcolor[HTML]{FFE49A}6.991                   \\ \hline
700                          & \cellcolor[HTML]{FFFEFE}20.220             & \cellcolor[HTML]{FFFEFE}20.219             & \cellcolor[HTML]{FFFEFE}20.219             & \cellcolor[HTML]{FFFEFE}20.216             & \cellcolor[HTML]{FFFEFE}20.202             & \cellcolor[HTML]{FFFEFE}20.141             & \cellcolor[HTML]{FFFEFD}20.063             & \cellcolor[HTML]{FFFEFD}20.022             & \cellcolor[HTML]{FFFEFD}20.001             & \cellcolor[HTML]{FFFEFD}20.001                  & \cellcolor[HTML]{FFE293}6.001                   & \cellcolor[HTML]{FFE49A}6.991                   \\ \hline
800                          & \cellcolor[HTML]{FFFEFE}20.220             & \cellcolor[HTML]{FFFEFE}20.219             & \cellcolor[HTML]{FFFEFE}20.219             & \cellcolor[HTML]{FFFEFE}20.216             & \cellcolor[HTML]{FFFEFE}20.202             & \cellcolor[HTML]{FFFEFE}20.141             & \cellcolor[HTML]{FFFEFD}20.063             & \cellcolor[HTML]{FFFEFD}20.022             & \cellcolor[HTML]{FFFEFD}20.001             & \cellcolor[HTML]{FFFEFD}20.001                  & \cellcolor[HTML]{FFE293}6.001                   & \cellcolor[HTML]{FFE49A}6.991                   \\ \hline
900                          & \cellcolor[HTML]{FFFEFE}20.220             & \cellcolor[HTML]{FFFEFE}20.219             & \cellcolor[HTML]{FFFEFE}20.219             & \cellcolor[HTML]{FFFEFE}20.216             & \cellcolor[HTML]{FFFEFE}20.202             & \cellcolor[HTML]{FFFEFE}20.141             & \cellcolor[HTML]{FFFEFD}20.063             & \cellcolor[HTML]{FFFEFD}20.022             & \cellcolor[HTML]{FFFEFD}20.001             & \cellcolor[HTML]{FFFEFD}20.001                  & \cellcolor[HTML]{FFE293}6.001                   & \cellcolor[HTML]{FFE49A}6.991                   \\ \hline
1000                         & \cellcolor[HTML]{FFFEFE}20.220             & \cellcolor[HTML]{FFFEFE}20.219             & \cellcolor[HTML]{FFFEFE}20.219             & \cellcolor[HTML]{FFFEFE}20.216             & \cellcolor[HTML]{FFFEFE}20.202             & \cellcolor[HTML]{FFFEFE}20.141             & \cellcolor[HTML]{FFFEFD}20.063             & \cellcolor[HTML]{FFFEFD}20.022             & \cellcolor[HTML]{FFFEFD}20.001             & \cellcolor[HTML]{FFFEFD}20.001                  & \cellcolor[HTML]{FFE293}6.001                   & \cellcolor[HTML]{FFE49A}6.991                   \\ \hline
1100                         & \cellcolor[HTML]{FFFEFE}20.220             & \cellcolor[HTML]{FFFEFE}20.219             & \cellcolor[HTML]{FFFEFE}20.219             & \cellcolor[HTML]{FFFEFE}20.216             & \cellcolor[HTML]{FFFEFE}20.202             & \cellcolor[HTML]{FFFEFE}20.141             & \cellcolor[HTML]{FFFEFD}20.063             & \cellcolor[HTML]{FFFEFD}20.022             & \cellcolor[HTML]{FFFEFD}20.001             & \cellcolor[HTML]{FFFEFD}20.001                  & \cellcolor[HTML]{FFE293}6.001                   & \cellcolor[HTML]{FFDD83}3.991                   \\ \hline
1400                         & \cellcolor[HTML]{FFFEFE}20.220             & \cellcolor[HTML]{FFFEFE}20.219             & \cellcolor[HTML]{FFFEFE}20.219             & \cellcolor[HTML]{FFFEFE}20.216             & \cellcolor[HTML]{FFFEFE}20.202             & \cellcolor[HTML]{FFFEFE}20.141             & \cellcolor[HTML]{FFFEFD}20.063             & \cellcolor[HTML]{FFFEFD}20.022             & \cellcolor[HTML]{FFFEFD}20.001             & \cellcolor[HTML]{FFE8A9}9.001                   & \cellcolor[HTML]{FFE293}6.001                   & \cellcolor[HTML]{FFE292}5.978                   \\ \hline
1800                         & \cellcolor[HTML]{FFFEFE}20.220             & \cellcolor[HTML]{FFFEFE}20.218             & \cellcolor[HTML]{FFFEFE}20.219             & \cellcolor[HTML]{FFFEFE}20.216             & \cellcolor[HTML]{FFFEFE}20.202             & \cellcolor[HTML]{FFFEFE}20.141             & \cellcolor[HTML]{FFFEFD}20.063             & \cellcolor[HTML]{FFFEFD}20.022             & \cellcolor[HTML]{FFFEFD}20.001             & \cellcolor[HTML]{FFE49A}7.001                   & \cellcolor[HTML]{FFE293}6.001                   & \cellcolor[HTML]{FFE49A}6.978                   \\ \hline
2000                         & \cellcolor[HTML]{FFFEFE}20.220             & \cellcolor[HTML]{FFFEFE}20.218             & \cellcolor[HTML]{FFFEFE}20.219             & \cellcolor[HTML]{FFEAB3}10.216             & \cellcolor[HTML]{FFDC7D}3.202              & \cellcolor[HTML]{FFD666}0.141              & \cellcolor[HTML]{FFD86D}1.063              & \cellcolor[HTML]{FFDE84}4.021              & \cellcolor[HTML]{FFDE83}4.001              & \cellcolor[HTML]{FFDE83}4.001                   & \cellcolor[HTML]{FFDB7C}3.001                   & \cellcolor[HTML]{FFE293}6.000                   \\ \hline
2100                         & \cellcolor[HTML]{FFFEFE}20.220             & \cellcolor[HTML]{FFFEFE}20.218             & \cellcolor[HTML]{FFDE85}4.219              & \cellcolor[HTML]{FFD667}0.216              & \cellcolor[HTML]{FFD667}0.202              & \cellcolor[HTML]{FFD666}0.141              & \cellcolor[HTML]{FFD666}\textbf{0.063}     & \cellcolor[HTML]{FFDC7C}3.021              & \cellcolor[HTML]{FFDB7C}3.001              & \cellcolor[HTML]{FFDB7C}3.001                   & \cellcolor[HTML]{FFD974}2.001                   & \cellcolor[HTML]{FFE293}6.000                   \\ \hline
2200                         & \cellcolor[HTML]{FFFEFE}20.220             & \cellcolor[HTML]{FFDA76}2.219              & \cellcolor[HTML]{FFD667}0.219              & \cellcolor[HTML]{FFD667}0.216              & \cellcolor[HTML]{FFD667}0.202              & \cellcolor[HTML]{FFD666}0.141              & \cellcolor[HTML]{FFD666}\textbf{0.063}     & \cellcolor[HTML]{FFDC7C}3.021              & \cellcolor[HTML]{FFDB7C}3.001              & \cellcolor[HTML]{FFDB7C}3.001                   & \cellcolor[HTML]{FFD974}2.001                   & \cellcolor[HTML]{FFE293}6.000                   \\ \hline
2300                         & \cellcolor[HTML]{FFE49C}7.220              & \cellcolor[HTML]{FFD667}0.219              & \cellcolor[HTML]{FFD667}0.219              & \cellcolor[HTML]{FFD667}0.216              & \cellcolor[HTML]{FFD667}0.202              & \cellcolor[HTML]{FFD666}0.141              & \cellcolor[HTML]{FFD666}\textbf{0.063}     & \cellcolor[HTML]{FFDC7C}3.021              & \cellcolor[HTML]{FFDB7C}3.001              & \cellcolor[HTML]{FFDB7C}3.001                   & \cellcolor[HTML]{FFD974}2.001                   & \cellcolor[HTML]{FFE293}6.000                   \\ \hline
2500                         & \cellcolor[HTML]{FFD667}0.220              & \cellcolor[HTML]{FFD667}0.219              & \cellcolor[HTML]{FFD667}0.219              & \cellcolor[HTML]{FFD667}0.216              & \cellcolor[HTML]{FFD667}0.202              & \cellcolor[HTML]{FFD666}0.141              & \cellcolor[HTML]{FFD666}\textbf{0.063}     & \cellcolor[HTML]{FFDC7C}3.021              & \cellcolor[HTML]{FFDB7C}3.001              & \cellcolor[HTML]{FFDB7C}3.001                   & \cellcolor[HTML]{FFD974}2.001                   & \cellcolor[HTML]{FFE293}6.000                   \\ \hline
2700                         & \cellcolor[HTML]{FFD667}0.220              & \cellcolor[HTML]{FFD667}0.219              & \cellcolor[HTML]{FFD667}0.219              & \cellcolor[HTML]{FFD667}0.216              & \cellcolor[HTML]{FFD667}0.202              & \cellcolor[HTML]{FFD666}0.141              & \cellcolor[HTML]{FFD666}\textbf{0.063}     & \cellcolor[HTML]{FFDC7C}3.021              & \cellcolor[HTML]{FFDB7C}3.001              & \cellcolor[HTML]{FFDB7C}3.001                   & \cellcolor[HTML]{FFD974}2.001                   & \cellcolor[HTML]{FFE293}6.000                   \\ \hline
3000                         & \cellcolor[HTML]{FFD667}0.220              & \cellcolor[HTML]{FFD667}0.219              & \cellcolor[HTML]{FFD667}0.219              & \cellcolor[HTML]{FFD667}0.216              & \cellcolor[HTML]{FFD667}0.202              & \cellcolor[HTML]{FFD666}0.141              & \cellcolor[HTML]{FFD666}\textbf{0.063}     & \cellcolor[HTML]{FFDC7C}3.021              & \cellcolor[HTML]{FFDB7C}3.001              & \cellcolor[HTML]{FFDB7C}3.001                   & \cellcolor[HTML]{FFD974}2.001                   & \cellcolor[HTML]{FFE293}6.000                   \\ \hline
3200                         & \cellcolor[HTML]{FFD667}0.220              & \cellcolor[HTML]{FFD667}0.219              & \cellcolor[HTML]{FFD667}0.219              & \cellcolor[HTML]{FFD667}0.216              & \cellcolor[HTML]{FFD667}0.202              & \cellcolor[HTML]{FFD666}0.141              & \cellcolor[HTML]{FFD666}\textbf{0.063}     & \cellcolor[HTML]{FFDC7C}3.021              & \cellcolor[HTML]{FFDB7C}3.001              & \cellcolor[HTML]{FFDB7C}3.001                   & \cellcolor[HTML]{FFD974}2.001                   & \cellcolor[HTML]{FFE293}6.000                   \\ \hline
\end{tabular}
\label{table:wave_3}
\end{table}

\begin{figure}
    \centering
    \includegraphics[width=0.5\textwidth]{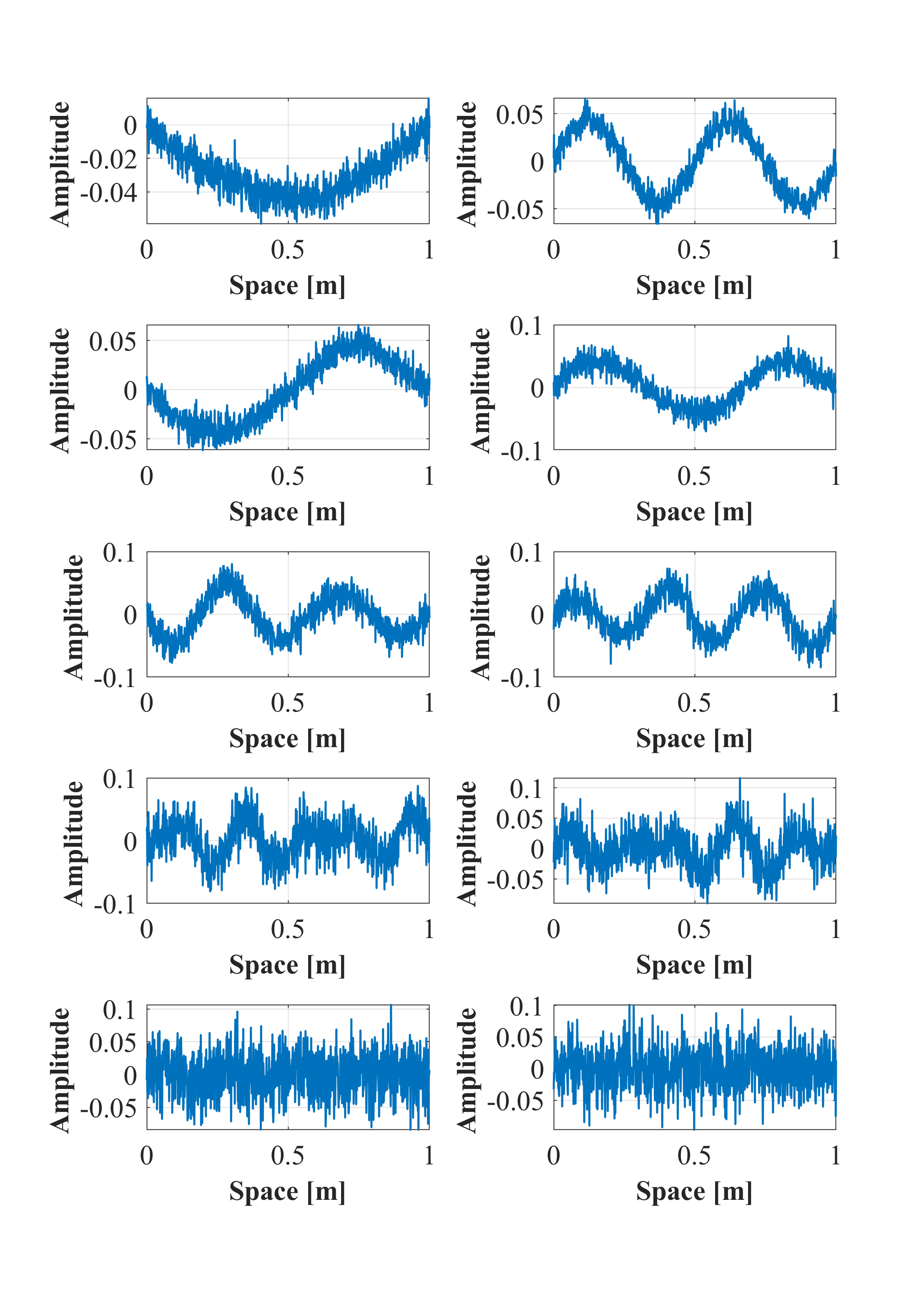}
    \caption{ \label{fig:accommodations_a}Example columns of $\D$ obtained from Low-rank matrix factorization $\lambda = 2952$. }
    \label{fig:my_label}
\end{figure}

\begin{figure}
    \centering
    \includegraphics[width=0.5\textwidth]{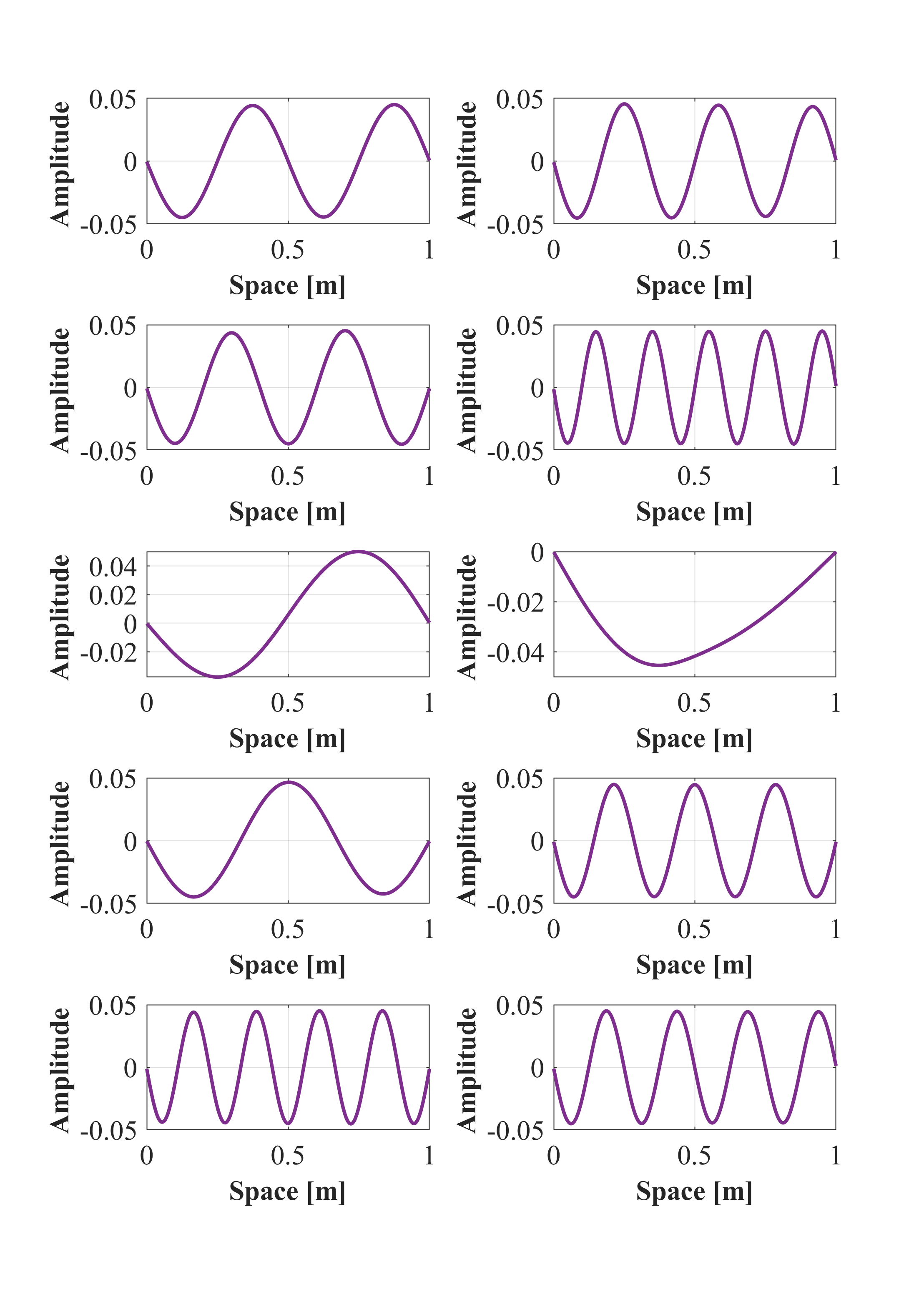}
    \caption{\label{fig:accommodations_b}Example columns of $\D$ obtained from our Wave-informed decomposition with $\gamma = 10^7$, $\lambda = 2200$.}
\end{figure}

\begin{figure}[ht]
    \centering
    \includegraphics[trim=15 10 10 15, clip, width=1.1\linewidth]{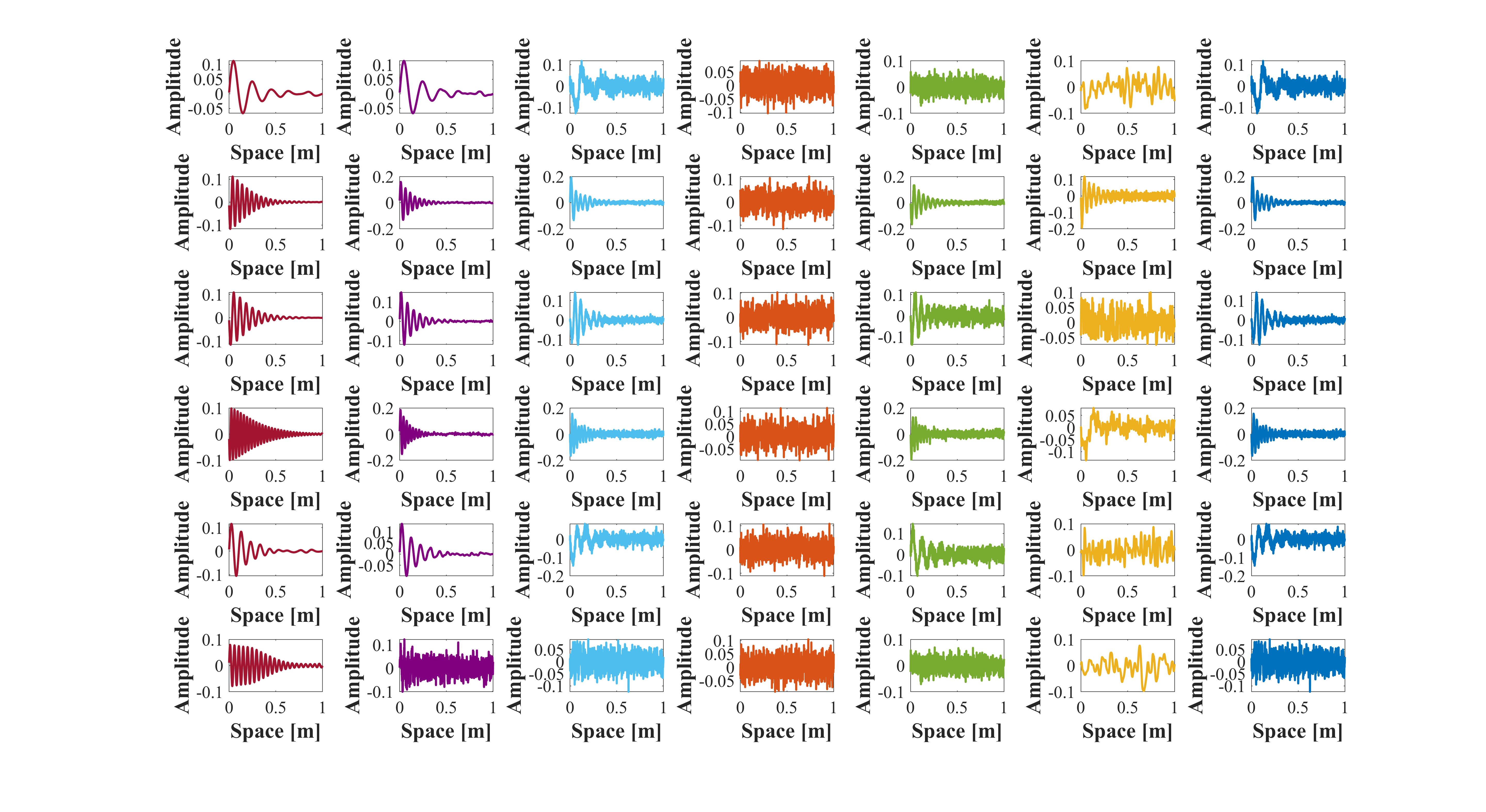}
    \caption{Recovered modes (rows) of spatially damped sinusoids (\Ai), (\B), (\C), (\Dc), (\E), (\F), (\G) when $R=6$.}    \label{fig:damped_sines_all}
    \vspace{-5mm}
\end{figure}

\end{document}